\documentclass[11pt,a4paper]{article}
\usepackage[margin=1in]{geometry}
\usepackage[T1]{fontenc}
\usepackage{lmodern}
\usepackage{microtype}
\usepackage[authoryear,round]{natbib}
\usepackage{authblk}
\usepackage[hidelinks]{hyperref}

\usepackage{graphicx}%
\usepackage{multirow}%
\usepackage{amsmath,amssymb,amsfonts}%
\usepackage{amsthm}%
\usepackage{mathrsfs}%
\usepackage[title]{appendix}%
\usepackage{xcolor}%
\usepackage{textcomp}%
\usepackage{manyfoot}%
\usepackage{booktabs}%
\usepackage{algorithm}%
\usepackage{algorithmicx}%
\usepackage{algpseudocode}%
\makeatletter
\renewcommand{\theHALG@line}{\thealgorithm.\arabic{ALG@line}}
\makeatother
\usepackage{listings}%
\usepackage[utf8]{inputenc}
\usepackage{alphabeta}         %
\usepackage{bm}                %
\usepackage{subcaption}        %
\usepackage[section]{placeins} %
\usepackage{longtable}         %
\usepackage{xstring}           %
\usepackage{anyfontsize}       %
\usepackage[acronym,nopostdot,nonumberlist,nogroupskip,hyperfirst=false]{glossaries}
\usepackage[nameinlink,capitalize]{cleveref} %

\theoremstyle{plain}%
\newtheorem{theorem}{Theorem}%
\newtheorem{lemma}{Lemma}%
\newtheorem{corollary}{Corollary}

\theoremstyle{remark}%
\newtheorem{remark}{Remark}%

\theoremstyle{definition}%
\newtheorem{definition}{Definition}%

\usepackage{array}
\newcolumntype{L}[1]{>{\raggedright\let\newline\\\arraybackslash\hspace{0pt}}m{#1}}
\newcolumntype{C}[1]{>{\centering\let\newline\\\arraybackslash\hspace{0pt}}m{#1}}
\newcolumntype{R}[1]{>{\raggedleft\let\newline\\\arraybackslash\hspace{0pt}}m{#1}}

\graphicspath{{figures/}}

\DeclareMathOperator*{\argmin}{arg\,min}

\let\svthefootnote\thefootnote
\newcommand\nnfootnote[1]{%
  \let\thefootnote\relax\footnotetext{#1}%
  \let\thefootnote\svthefootnote%
}

\newcommand{\sigp}[3][\mathbf{x}]{{
 \IfStrEq{#1}{!}{
 		x^{#3}_{#2}
 	}{
 		{#1}^{#3}_{#2}
 	}
}}

\newcommand{\sigw}[2]{{
\IfStrEq{#2}{}{
		\mathbf{w}^{#2}_{#1}
	}{
		w^{#2}_{#1} %
	}
}}

\newcommand{\up}[1]{\mathord{\mathrm{#1}}}

\newenvironment{calign}
    {
    \begin{equation}
    \begin{aligned}
    }
    {
    \end{aligned}
    \end{equation}
    }

\newenvironment{cgather}
    {
    \begin{equation}
    \begin{gathered}
    }
    {
    \end{gathered}
    \end{equation}
    }

\newcommand{\mathglos}[4]{
  \newglossaryentry{#1}{name={\ensuremath{#3}}, description={\textbf{#2}: #4.}, sort={#2},}
}

\makenoidxglossaries

\mathglos{dirac}{Dirac}{δ_x}{a member of $P(\mathcal{X})$ with weight only at $x \in \mathcal{X}$, where $δ_x (X) = 1$ if $ x \in X \subset \mathcal{X} $ and $0$ otherwise}

\mathglos{indices}{Indices}{i,j}{usually corresponding to the $i^\text{th}$ basis pattern $b^i$ and $j^\text{th}$ combined pattern $c^j$ respectively}

\mathglos{radon_sp}{Radon space}{ \mathcal{M} (\mathcal{X}) }{loosely, all $m$ such that $ m(X) \leq \infty $ for all $X \subset \mathcal{X}$}

\mathglos{radon_cone}{Radon cone}{ \mathcal{M}_+ (\mathcal{X}) }{those $m \in \mathcal{M} (\mathcal{X})$ with $ m(X) \geq 0 $ for all $X \subset \mathcal{X}$}

\mathglos{prob_measures}{Probability measures}{P (\mathcal{X}) }{i.e. those $m \in \mathcal{M}_+ (\mathcal{X})$ with $m(\mathcal{X}) = 1$}

\mathglos{non_neg_cone}{Non-negative cone}{\mathbb{R}_+^d}{i.e. $\{ \mathbf{x} \in \mathbb{R}^{d} : x_i \geq 0 \}$}

\mathglos{simplex}{Simplex}{S^{d-1}}{i.e. $\{ \mathbf{x} \in \mathbb{R}^{d}_+ : \Vert \mathbf{x} \Vert_1 = 1 \}$}

\mathglos{pushforward}{Pushforward}{T_\# μ}{defined as $ (T_\# μ) (Y) = μ( T^{-1} (Y) ) $ (for $ Y \subset \mathcal{Y} $) for measure $μ \in \mathcal{M} (\mathcal{X})$ and map $T : \mathcal{X} \rightarrow \mathcal{Y}$}

\mathglos{couplings}{Couplings}{\mathcal{U} (μ, ν)}{a set containing all $ { π \in P (\mathcal{X} \times \mathcal{Y}) } $  that have $μ$ and $ν$ as marginals}

\mathglos{tuple}{Tuple}{( u^1, \ldots, u^N ) \in U^N}{tuple or ordered set with elements $u \in U$}

\mathglos{tensor}{Tensor coordinates}{ (\mathbf{v}^1 \ldots \mathbf{v}^N ) \in V^N }{expressed as concatenated slices of lower order tensor coordinates}

\mathglos{pnorm}{P-norm}{\Vert \mathbf{x} \Vert_p^p}{(to the $\text{p}^\text{th}$ power) or $ \sum_{i=1}^d \vert x_i \vert ^p $ where $ \mathbf{x} \in \mathbb{R}^d $}

\mathglos{range}{Integer range}{[ N ]}{i.e. the set $ \{ 1, 2, \ldots, N-1, N \} \subset \mathbb{N} $}

\mathglos{shift_down}{Shift-down operator}{\mathbf{u}^{\downarrow s}}{with indices $ ( \mathbf{u}^{\downarrow s} )_i  = u_{i-s} $}

\mathglos{disc_conv}{Discrete cross-correlation}{*_S}{where $ \mathbf{u} *_S \mathbf{v} = \sum_{s=1}^{S} v_s \mathbf{u}^{\downarrow s-1} $ }

\mathglos{abundances}{Abundances}{a^{j}_i, \mathbf{A}, \mathbf{a}^j, \mathbf{a}_i}{matrix entries, where we take a full slice over omitted indices}

\mathglos{bases}{Bases}{b^i, \up{B}}{single pattern and tuple of respectively}
\mathglos{bases_warped}{Basis variations}{\widetilde{b}^{ij}, \widetilde{\up{B}}^j}{corresponding to combined pattern $j$, single and tuple of respectively}

\mathglos{comb_pat}{Combined/mixed patterns}{c^j, \up{C}}{single and tuple of respectively}

\mathglos{gsd}{Ground space distance}{d_\mathcal{X}}{distance between points in $\mathcal{X}$}

\mathglos{psd}{Pattern space distance}{d_\mathcal{P}}{distance between points in $\mathcal{P}$}

\mathglos{dist_mat}{Distance matrix}{\mathbf{D} \in \mathbb{R}^{N \times N}}{usually between the supports of a given pair of finite measures}

\mathglos{comb_func}{Pattern combination function}{κ(\mathbf{a}; \up{B})}{mechanism for combining multiple patterns into a resultant mixed pattern}

\mathglos{idx_lim}{Index limits}{K,M,N}{size of the basis, combined patterns set and measure support (maximum) respectively}

\mathglos{pat_sp}{Pattern space}{\mathcal{P}}{data space in which both pure and mixed examples live}

\mathglos{tp}{Transport plan}{π \in P(\mathcal{X} \times \mathcal{Y})}{for one given pair of measures}

\mathglos{tp_disc}{Transport plan weights}{\mathbf{P} \in \mathbb{R}^{N \times N} }{for a discrete plan}

\mathglos{wass_dist}{Wasserstein $p$ distance}{ W^{d_\mathcal{X}}_p }{over measures with support in $(\mathcal{X}, d_\mathcal{X})$}

\mathglos{wass_dist_short}{Wasserstein $p$ distance}{ W_p }{shorthand taking $d_{\mathcal{\mathbb{R}}^N} (x,y) = \Vert x - y \Vert_p$}

\mathglos{sig_wei}{Signature weights}{W_μ}{set of weights corresponding to each of the Diracs that compose a finite measure $μ$}

\mathglos{sig_pos}{Signature positions}{X_μ}{set of positions of the Diracs that compose a finite measure $μ$}

\mathglos{gss}{Ground spaces}{ \mathcal{X}, \mathcal{Y} }{underlying spaces over which measures may be defined on}

\newacronym{aadpa}{AADPA}{Average Absolute Difference in Percentage Abundances}
\newacronym{afd}{AFD}{Agile Factor Decomposition}
\newacronym{bccw}{BCCW}{Basis-Component Centered Wasserstein}
\newacronym{ed}{ED}{Euclidean Distance}
\newacronym{emd}{EMD}{Earth Movers Distance}
\newacronym{hi}{HI}{Hyperspectral Imaging}
\newacronym{gs}{GS}{Ground Space}
\newacronym{gd}{GD}{Ground Distance}
\newacronym{gc}{GC}{Ground Cost}
\newacronym{gmm}{GMM}{Gaussian Mixture Model}
\newacronym{ibcw}{IBCW}{Inter Basis Centered Wasserstein}
\newacronym{iccw}{ICCW}{Inter Components Centered Wasserstein}
\newacronym{mle}{MLE}{Maximum Likelihood Estimate}
\newacronym{nmf}{NMF}{Non-negative Matrix Factorisation}
\newacronym{pca}{PCA}{Principal Component Analysis}
\newacronym{onmf}{oNMF}{Overlapping NMF}
\newacronym{nwca}{NWCA}{North-west Corner Algorithm}
\newacronym{nwcr}{NWCR}{North-west Corner Rule}
\newacronym{ot}{OT}{Optimal Transport}
\newacronym{rmse}{RMSE}{Root Mean Square Error}
\newacronym{pxrd}{PXRD}{Powder X-ray Diffraction}
\newacronym{ip}{IP}{Inverse Problem}

\glsaddall

\makenoidxglossaries

\begin{document}

\title{Robust blind unmixing: A geometric approach to overcoming basis variation}
\author[1]{Dumitru Mirauta}
\author[1]{Vladimir V. Gusev\thanks{Corresponding author: \href{mailto:vladimir.gusev@liverpool.ac.uk}{vladimir.gusev@liverpool.ac.uk}.}}
\author[2]{Michael W. Gaultois}
\author[2]{Matthew J. Rosseinsky}
\author[1]{Yannis Goulermas\thanks{This author has passed away.}}
\affil[1]{School of Computer Science and Informatics, University of Liverpool, Ashton Street, Liverpool, L69 3BX, Merseyside, United Kingdom}
\affil[2]{Materials Innovation Factory, University of Liverpool, Oxford Street, Liverpool, L7 3NY, Merseyside, United Kingdom}
\renewcommand{\Authfont}{\normalsize}
\renewcommand{\Affilfont}{\small}
\date{}
\hypersetup{pdftitle={Robust blind unmixing: A geometric approach to overcoming basis variation}, pdfauthor={Dumitru Mirauta, Vladimir V. Gusev, Michael W. Gaultois, Matthew J. Rosseinsky, Yannis Goulermas}}
\maketitle

\begin{abstract}
Signal separation problems are common in science.
A prominent example of this occurs during the use of diffraction or spectroscopy to identify the individual components of a mixture by measuring it.
In the simplest case,
the measured signal is a linear combination of basis patterns corresponding to the constituent parts.
The unmixing problem is to infer all or some of these basis patterns and abundances of components from measurements of distinct mixtures.
One of the core challenges of this task is the variation of the basis from mixture to mixture due to noise and the exact physics of the measurement process.
This is usually addressed with tailored model-based and parametric methods that are then limited in use to specific application domains by the nature of the assumptions made.
We propose a novel geometric approach to unmixing problems which views the generation of data during measurement through a metric space lens, thereby shifting the focus from parametrised models to a general relationship between basis transformations and the corresponding geometry.
We take advantage of the optimal transport distances to capture commonly occurring basis variations, and use minimisation of in-class variance of candidate solutions to drive the optimisation.
We pay special attention to the one-dimensional case due to its practical importance and availability of efficient distance and transport map routines.
The effectiveness of our approach is demonstrated on a range of unmixing tasks using random Gaussian mixture models, simulated powder X-ray diffraction, and laboratory hyperspectral imaging datasets.
\end{abstract}

\noindent\textbf{Keywords:} Unmixing, Basis variation, Optimal transport, Wasserstein distance, Earth movers distance, Error-in-variables

\section{Introduction} \label{sec1}

The objects and processes of investigation in many scientific disciplines require advanced characterisation, often with multiple instrumental techniques to examine different aspects.
In many cases, measurements generate a nonnegative response across some domain (e.g., across a range of angles, frequencies, energies, etc.). Examples of such responses include \gls{pxrd} patterns, \gls{hi} pixels, and Raman spectra.
For techniques producing real valued but periodic traces (e.g. infra-red spectroscopy, ion cyclotron resonance mass spectrometry, or even of audio waves) frequency distributions may be considered.
These response profiles can be useful as a pseudoidentifier of the underlying object, where a similar profile is likely to refer to a similar---though not necessarily the same---object.
A response may be characterised in itself, by a ``pattern'' of highs and lows,
large and small features in different locations.
As such, in this work, these response profiles will generally be referred to as patterns.
For identification purposes, 
the magnitude of features relative to one another is more instructive than the absolute magnitude of any individual feature.
As such, we will treat patterns as normalised distributions.

Crucially, the investigation of what may be considered a single underlying object, may produce a range of patterns.
For instance, the \gls{pxrd} pattern of a ``pure phase'' is composed of a series of peaks;
these peaks can individually change position or shape, due to small variation in crystal structure induced as external parameters---such as temperature or pressure---change.
Astronomical \gls{hi} pixels may exhibit Doppler shift as relative speed to an object changes, 
or Earth \gls{hi} may exhibit transformations due to atmospheric scattering, subsurface scattering, and illumination.
In audio, the same structure (e.g. chord or overtone series of a voice) can change pitch and will have an associated (log) frequency shift.
As such, multiple distinct points in pattern space can represent the same basis pattern, we call this basis variation.

Further, it is not always feasible to probe an object of interest alone. 
For instance, a single \gls{pxrd} sample may contain responses from many ``pure phases''.
In such cases, the available response profiles will be a combination of responses from multiple objects.
In order for the pattern(s) of interest to be isolated, it/they must first be separated.
The set of all distinct patterns that can play a role in mixing will be referred to as the basis.
In linear mixing, there is no basis variation, and mixtures are produced through linear combination of the basis.

Though linear combination is a common feature in many types of mixed data, across disciplines, so is basis variation.
Here, it is variations of the basis patterns that are combined.
With this, a set of patterns corresponding to the same set of objects, combined in the same proportions, may produce two different mixtures.
As such, in this work a mixing model is defined by providing both a basis variation mechanism and a combination mechanism.
This flexible umbrella model covers data encountered in materials discovery \cite{ermon2014combiFD, RelaxedProjection, Cherni20}, Remote sensing \cite{Borsoi20, Heylen14, Ma14, Aharoni22},  Biology \cite{Hosseini15, Yan19, Glatz11}, Astrophysics \cite{Boulais21, Rameau21, Nakhostin16, liu18}, Food testing \cite{Holland98}, Music transcription \cite{Smaragdis, Flamary16} and other forms of audio separation (e.g. cocktail party problem). %
Image segmentation in computer vision \cite{ho2017multilevel}, multiple curve clustering/alignment \cite{Gaffney04} and distribution template estimation \cite{Boissard11} are also related.

While mixing produces a set of combined patterns from a basis, blind unmixing aims to recover a basis solely from a given set of combined patterns.
While there is of course information loss in forming a single mixture, 
we may be able to patch together a good estimate of a basis from many mixed examples. 
Here domain-wise (cross-position) variation works in our favor, as different parts of a basis may be visible or obstructed in different mixtures depending on how the bases have transformed.
Blind unmixing can of course be seen as an \gls{ip} \cite{tarantola05, devito05}, where the basis is a model parameter which essentially fixes a mixing model, which otherwise depends on per mixture variables.
Otherwise, nonblind unmixing seeks to perform separation of the constituents in each mixture, according to a given basis.
With sufficiently complex basis variation, there may be many ways of generating a given mixed example from a given basis, making even a nonblind ``inversion'' rather difficult.

Consider the importance of tackling the aforementioned basis variation in materials discovery; %
as sufficiently flexible unmixing methods are not available, much expert time can be directed towards manual analysis, instead of being able to focus on interpretation of higher level data and further experiment planning.
The impact from the reduction of such a bottleneck may be felt far, as new materials enable new technologies and techniques for many other fields,
including towards the improvement of the very hardware that could be helping accelerate the materials discovery process.
A mixing method that is not too specific to any single kind of characterisation technique is particularly beneficial here, as many kinds of characterisation techniques are in use.

We classify previous solutions to this problem as either (parametric) model fitting based, or geometric.
Fitting based methods of course require the specification of a model to be fitted, however, it may be difficult to capture the multitude of transformation mechanisms required across the application domains, in a single, reasonably constrained model.
A geometric method instead views mixing and its inverse problem geometrically. 
For instance, the linear unmixing case translates to a problem of recognising vertices in a point cloud that is expected to span a simplex.
As many families of parametric mixing model can produce geometries with the same quality, they could all be covered by one geometric approach targeting this shared quality.

The more general form of the algorithm we suggest operates in an arbitrarily chosen metric space $ (\mathcal{P}, d_\mathcal{P}) $.
If the spectra of interest exhibit transformations that are Lipschitz in this space, 
then this space has a key property.
Namely, that up to some transformation parameter limit (which depends on the distinctiveness of the true basis), we show that the basis variations form clean, separable clusters (see \cref{separability_sec}).
This expected regularity can be exploited when searching for a way to pull apart our mixtures such that the resultant components correspond to plausible basis variants.
The set of patterns (or representations thereof) $\mathcal{P}$ is a choice as well, for instance 1D peak patterns might be expressed through a space of vectors, of \gls{gmm} parameters, of abstract measures, etc.
This in turn may affect the kind of distance that can be defined between patterns.

After this our focus shifts to the choice of Wasserstein space, and generating a practical algorithm that can execute in time scales feasible for interactive laboratory exploration.
Wasserstein space can represent distribution-like data that is continuous or discrete, with weights on uniform or non-uniform grids, when grids are shared between discrete distributions or when not.
Wasserstein distances can compare any combination of the above, and deal with transformations that would not be continuous if we were required to continue respecting a shared uniform grid for instance, such as when shifting data provided on a uniform grid.
Wasserstein distances are Lipschitz to shifting, broadening and more \cite{Engquist16}.
Overall, our approach to unmixing is novel in:
\begin{enumerate}
  \item Allowing for non-Euclidean perturbation rather than explicitly fitting a fixed basis variation model.
  \item Focusing on how a given mixture can be broken apart rather than on how a similar mixture might be put together.
  \item Using the geometry of mixing to inform plausible decompositions, rather than a regressive cost.
  \item Allowing for sparse and dynamic descriptors through signatures, rather than fixed grids, which can more accurately represent Wasserstein centroids and enable efficient separation.
\end{enumerate}

This paper is structured as follows.
In \cref{prob_def_sec} we present our unmbrella model for mixing problems, establishing what is shared among them, and what can arbitrarily change.
\cref{OT_sec} gives an overview of Wasserstein distances and their place within \gls{ot} distances.
In \cref{existing_works_sec} we discuss existing fitting based and geometric approaches to unmixing.
\cref{main_contribution} contains our main contributions, our general approach, followed by the Wasserstein specialisation, and a practical implementation for 1D.
A fast implementation hinges on efficient calculation of both Wasserstein distances, and crucially, transport plans, as such \cref{fast_wasserstein_sec} discusses efficient computaton of these.
In \cref{results_sec} we compare the effectiveness of our method vs \gls{nmf} and \gls{onmf}, when dealing with data with basis variation in three contexts; \gls{gmm}, \gls{pxrd} and Hyperspectral data.
Finally in \cref{conclusion} we will conclude and discuss how the approach could be further extended.

\section{Preliminaries} \label{preliminaries_sec}

Lower indices will correspond to vector indices while upper indices may sometimes enumerate non-vector quantities, such as discrete measures.
By $ X \subseteq \mathcal{X} $ we will implicitly mean measurable subsets.

\glsfindwidesttoplevelname
\setglossarystyle{alttree}
\printnoidxglossary[sort=letter,title={List of Symbols}]
\printnoidxglossary[type=\acronymtype,title=Acronyms]

\subsection{Formal definition of mixing and unmixing} \label{prob_def_sec}

An unmixing problem seeks to invert the outputs of a mixing process for the corresponding inputs, as such, let us first define mixing.
Let $ \mathcal{P} $ be the space of allowed (pure or mixed) patterns, such as $ \mathcal{P} = \mathbb{R}^N $ as in many cases, or the space of distributions over the real line $ \mathcal{P} = P (\mathbb{R}) $ as we will later consider.

The \textbf{basis} $\up{B} = ( b^1, \ldots , b^K) \in \mathcal{P}^K $ is a tuple of such patterns, representing prototypes to be drawn from in the mixing process.

A \textbf{combination function} $ { κ : \mathbb{R}^K \rightarrow \mathcal{P} } $ will describe the mechanism which prototypes may be combined through, such that each mixture is a function of its mixing proportions (here we take the basis to be fixed).
\begin{figure}[htbp]
  \centering
  \includegraphics[width=\linewidth]{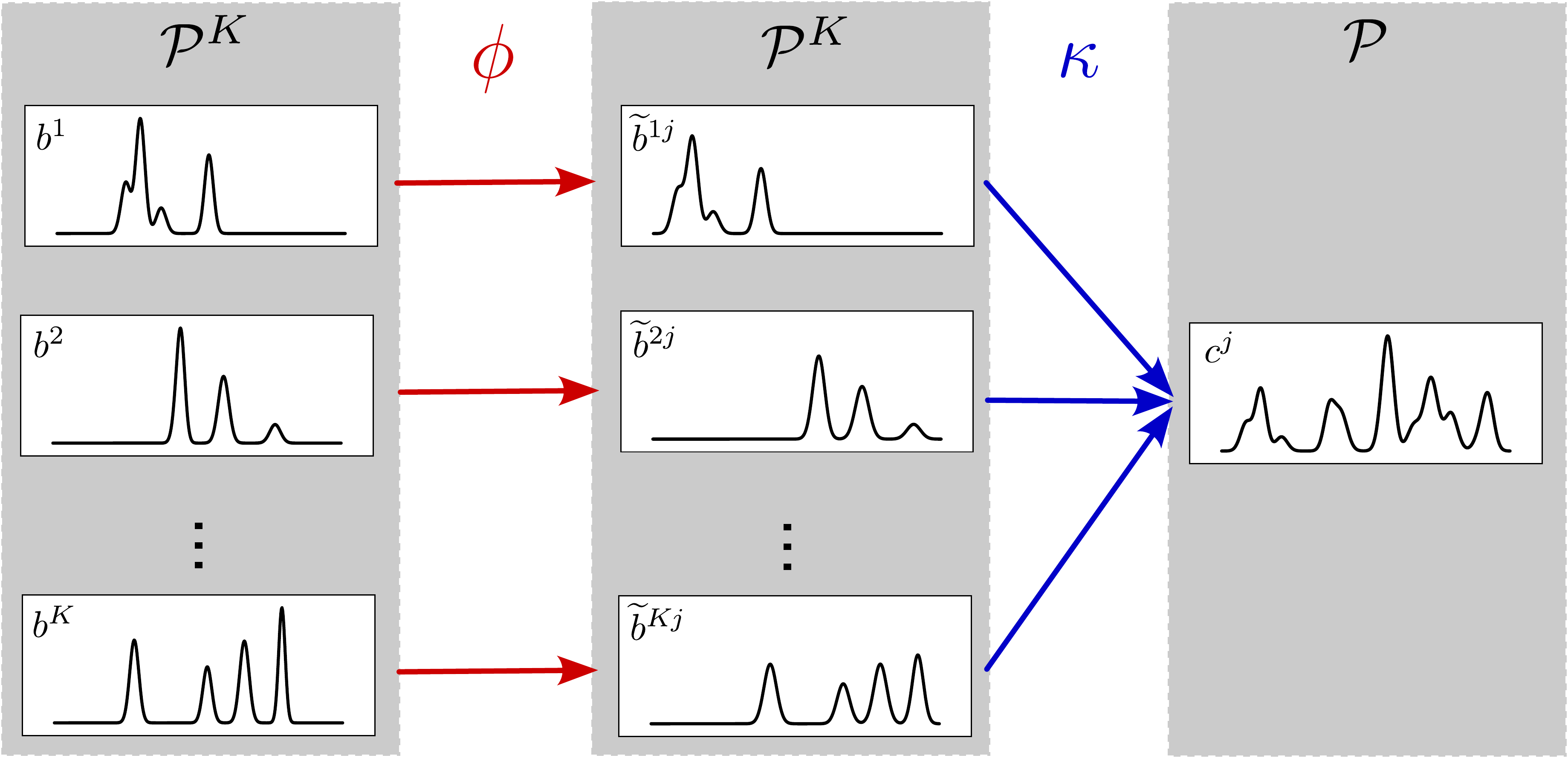}
  \caption{
  A mixing model operating in the space of pattern $\mathcal{P}$, incorporating independent variation of basis patterns prior to combination.
  Left: Basis patterns $b^i$. 
  Middle: The varied basis copies $\widetilde{b}^{ij}$ that correspond to the mixture $c^j$. 
  Right: The resulting mixture $c^j$. 
Transitions from left to middle and from middle to right are provided through the variation $ϕ$ and combination $κ$ functions respectively (illustrated in red and blue respectively).} \label{basis_var_fig}
\end{figure}
The simplest and most common example is linear superposition $ κ^\text{lin} (\mathbf{a} ; \up{B}) = \sum_{i=1}^K a_i b^i $.
In the case of $\mathcal{P}$ being a space of distributions, mixing fractions will need to belong to a simplex
$S^{d-1} = \{ \mathbf{x} \in \mathbb{R}^{d}_+ : \Vert \mathbf{x} \Vert_1 = 1 \}$, for valid combinations, and so $ { κ : S^{K-1} \rightarrow \mathcal{P} } $.
Other examples include bilinear mixing $ κ (\mathbf{A} ; \up{B} ) = \sum_{m,n} a_{mn} {b}^{m} \odot {b}^{n} $ in some hyperspectral unmixing methods (where $\mathcal{P} = \mathbb{R}^N$ and $\odot$ is the Hadamard product) \cite{Heylen14} or weighted Wasserstein centroids as in \cite{Schmitz18}. %

While in the linear case a mixture might be constructed as $c^j = κ^\text{lin} (\mathbf{a}^j ; \widetilde{\up{B}}^j ) $,
in the presence of \textbf{independent variation of each basis pattern}, we could interpret each mixture to be constructed according to its own, subtly different basis  $c^j = κ^\text{lin} (\mathbf{a}^j ; \widetilde{\up{B}}^j ) $, 
with $ \widetilde{\up{B}}^j = (\widetilde{b}^{1j}, \ldots, \widetilde{b}^{Kj}) $.
The \textbf{basis variation model}, if parameterised, can be captured by a function $ ϕ : Λ \rightarrow \mathcal{P} $ in the variation parameters $\bm{λ} \in Λ$.
For example, letting $\mathcal{P}$ be continuous functions over the real line, we could have shifting and broadening $ \widetilde{b} = ϕ (\bm{λ} ; b) = b ( λ_2 \cdot ( x - λ_1 ) ) $.
For brevity, let this also act elementwise over a tuple of patterns, such that we may write $\widetilde{\up{B}}^j = ϕ(\bm{Λ}^j ; \up{B} )$ where $\bm{Λ}^j$ collects the individual parameters of each basis for mixture $j$.

In full, we say that a mixture/combined pattern $c^j$ arises according to the \textbf{mixing model} $ ψ (\mathbf{a}^j, \bm{Λ}^j ; \up{B}) = κ (\mathbf{a}^j ; ϕ ( \bm{Λ}^j ; \up{B}) ) $, generating the dataset ${ \up{C} = \{c^j : j \in [M] \} }$.
The \textbf{blind unmixing problem} seeks to reverse this process, generating the basis $\up{B}$ from the dataset $\up{C}$ alone.
In the framework of \glspl{ip} \cite{tarantola05, devito05}, $\up{B}$ could be seen to parameterise a family of mixing models $ψ_{\up{B}} : S^{K-1} \times Λ^K \rightarrow \mathcal{P}$, in this context recovering the correct $\up{B}$ is equivalent to recovering the correct mixing model.

\FloatBarrier
\subsection{Optimal transport primer} \label{OT_sec}

We give a brief overview of a few concepts in \gls{ot},
as they form a basis for the intuition, analysis and the practical implementation of our approach.
It will be useful to highlight the overlap between the Monge and Kantorovich models, discrete and continuous, as well as the \gls{emd} and Wasserstein costs or their minimising maps.
When equivalent, it can be instructive to switch between these different perspectives.
For a deeper overview see \cite{peyre2020computational,Santambrogio2015,Villani2009}.

\begin{figure}[htbp]
  \centering
  \includegraphics[width=\linewidth]{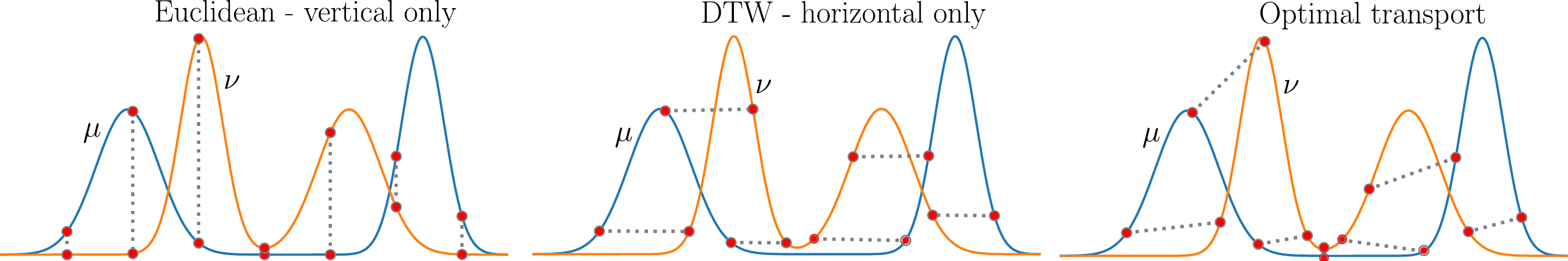}
\caption{
  A distance between two distributions $μ$ and $ν$ can often be written as a sum of distances over parts of $μ$ and $ν$ that it takes to correspond.
  Here, DTW is dynamic time warping.
  Take $μ$ to be an initial pattern (blue) and $ν$ a transformed (orange) counterpart, 
  with corresponding points (red markers) linked by dotted lines.
  In such a sum, change in one part is measured independently of change in another part, 
  as if a point along $μ$ had moved along its dotted line to become the corresponding point in $ν$.
} \label{point_association_fig}
\end{figure}

The problem of \gls{ot} is intuitively expressed in terms of the earth movers analogy.
In this we have a mound of dirt, the shape of which is described by one distribution, and a hole described by another.
We seek a set of movements from the former to the latter which would expend the least overall energy\footnote{The energy/work done in this analogy is proportional to both the ``mass'' of some piece of the mound and to the distance it needs to slide over.}.
While the ``transport'' in this analogy is literal, transport in the general problem should be understood more as association or matching\footnote{And can in fact be thought of as a generalised matching problem.}.

More formally, the \gls{ot} problem aims to find minimum cost associations between two distributions (defined by measures) $μ \in P(\mathcal{X})$ and $ν \in P(\mathcal{Y})$ over \glspl{gs} $\mathcal{X}, \mathcal{Y}$. %
Given a \gls{gs} cost $c : \mathcal{X} \times \mathcal{Y} \rightarrow \mathbb{R}_+$ of transferring mass from $x$ to $y$, an \gls{ot} cost can be formulated as the average \gls{gs} cost when following a particular transport map.
Similar costs appear in information theory as measures of distortion \cite{Pollard82, Lloyd82}.
There are generally two types of maps that are considered, leading to two ways to write this cost.

\begin{figure}[htbp]
  \centering
  \includegraphics[width=0.95\linewidth]{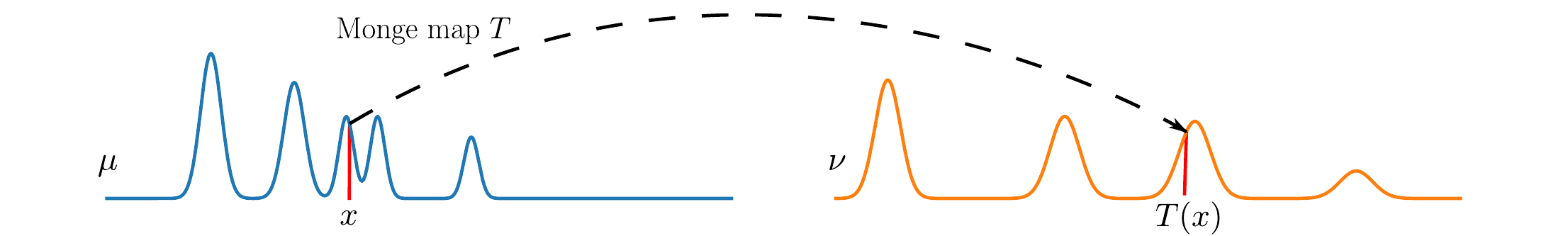}
  \includegraphics[width=0.95\linewidth]{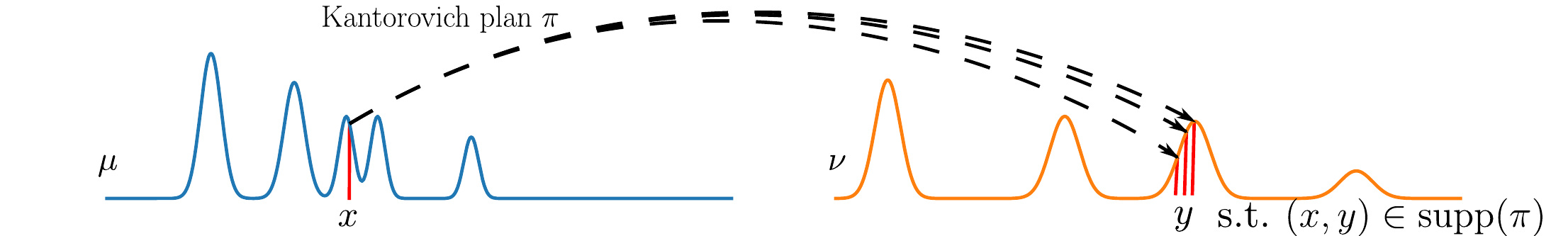}
  \caption{
  While a Monge map seeks a one to one correspondence, 
  with a Kantorovich plan, mass can be broken up and sent to multiple locations.
  This splitting of mass can be necessary in the discrete setting, as a larger ``stick'' (Dirac) may need to be broken into parts and contribute towards multiple others for a mass preserving matching.
} \label{fig:ot_maps}
\end{figure}

\begin{definition}[Pushforward]
  The pushforward of $μ \in \mathcal{M} (\mathcal{X})$ through $T : \mathcal{X} \rightarrow \mathcal{Y}$, results in a measure from $\mathcal{M} (\mathcal{Y})$ denoted $T_\# μ$.
  This is defined as $ (T_\# μ) (Y) = μ( T^{-1} (Y) ) $.
  That is to say, though $T$ redistributes some mass such that it now is spread over $Y$ rather than over $X$, the amounts of mass before and after this redistribution should agree.
\end{definition}

\begin{definition}[Monge cost]
  Provided a Monge map $ T : \mathcal{X} \rightarrow \mathcal{Y} $ that satisfies mass conservation $ T_\# μ = ν $,
  the average \gls{gc} cost may be written
  \begin{equation}
    M^c (T) = \int_\mathcal{X} c(x, T(x)) d μ(x) .
  \end{equation}

\end{definition}

\begin{definition}[Space of couplings]
  Space of distributions $π$ with $μ,ν$ as marginals
  \begin{equation}
    \mathcal{U}(μ, ν) = \{ π \in P (\mathcal{X} \times \mathcal{Y}) : π_\mathcal{X} = μ, π_\mathcal{Y} = ν \} ,
  \end{equation}
where we denote the marginal $ π_\mathcal{X} (X) = \int_{X \times \mathcal{Y}} d π (x,y) $ for $X \subset \mathcal{X}$ and similarly $ π_\mathcal{Y} (Y) = \int_{\mathcal{X} \times Y} d π (x,y) $ for $Y \subset \mathcal{Y}$.
\end{definition}

\begin{definition}[Kantorovich cost]
  Provided a Kantorovich plan $ π \in P(\mathcal{X} \times \mathcal{Y})$ that satisfies mass conservation $ π \in \mathcal{U}(μ, ν) $,
  the average \gls{gc} may be written
  \begin{equation}
    K^c (π) = \int_{\mathcal{X} \times \mathcal{Y}} c(x, y) d π (x,y) .
  \end{equation}
\end{definition}

When $\mathcal{Y}=\mathcal{X}$ and $c(x,y)$ is a distance in $\mathcal{X}$,
the problem can represent literal transport (movement across this common \gls{gs}), as in an \gls{emd}.
Here both the Monge and Kantorovich costs can be interpreted either as the average displacement during a rearrangement of $μ$ into $ν$ 
or equivalently
the energy required by this rearrangement.
In this case, an \gls{ot} distance on $P(\mathcal{X})$ can be defined by taking the minimum cost mass preserving maps
\begin{equation} \label{OT_map_costs}
  d^c_M (μ, ν) = \min_{ T : T_\# μ = ν } M^c (T)
  \quad \text{or} \quad
  d^c_K (μ, ν) = \min_{ π \in \mathcal{U}(μ, ν) } K^c (π) .
\end{equation}

The Monge variant can be written in PDE form by considering the change of variables formula for distributions, where $T$ acts as a reparameterisation.
The Kantorovich variant is a linear functional (in $π$), where the dual problem---$\max_{λ, γ} \int λ (x) d μ(x) + \int γ (y) d ν (y)$ such that $c(x,y) \geq λ(x) + γ(y)$---can be also be used. 

If an optimal Monge map exists, it is equivalent to its Kantorovich counterpart in the sense that $ π^* = (\text{id}, T^*)_\# μ $, (see \cref{monge_kantorovich_equivalence_fig}) where
\begin{calign}
  K_c(π^*) = \int c(x,y) d π^* (x, y) &= \int c(x,y) d ((\text{id}, T^*)_\# μ) (x, y) \\
                                      &= \int c(\text{id} (x), T^* (x)) d μ (x) =
M_c(T^*) .
\end{calign}Or loosely, 
the mass of $π^*$ around the position $(x, T^* (x))$ is equal to that of $μ$ around $x$, and $0$ elsewhere, i.e. all of the mass of $π^*$ is on the graph of the function $T^*$.
\begin{figure}[htbp]
    \centering
    \includegraphics[width=\linewidth]{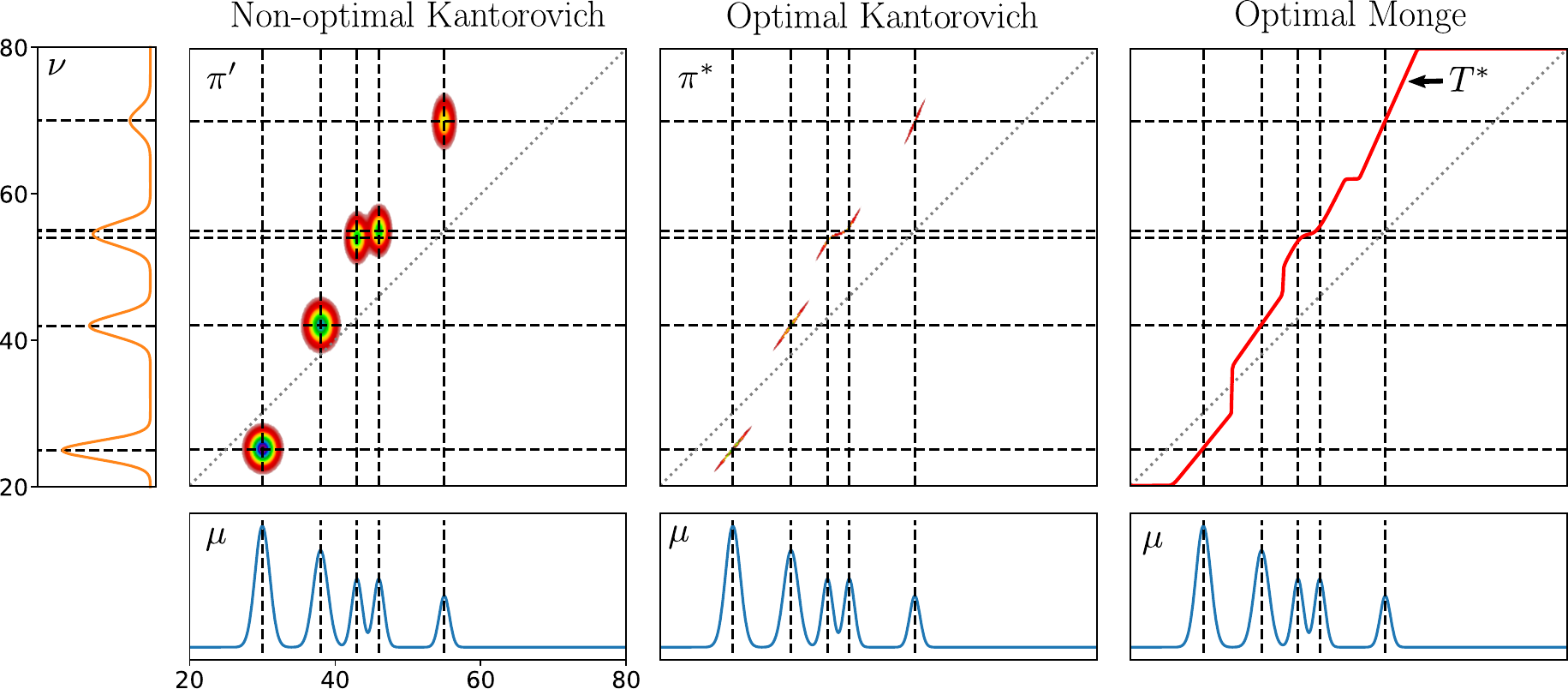}
  \caption{
    Overlap in Monge maps $T$ and Kantorovich plans $π$.
    The density correponding to bivariate distributions $π$ are represented here through heatmaps, it is only noteworthy here where the density is zero (white) and where it is not.
    Nonzero density at $(x,y)$ signifies transport from $x$ to $y$, these $(x,y)$ are in the ``support'' of the Kantorovich plan, $\text{supp}(π^*)$.
    While a Kantorovich plan allows for mass splitting if necessary, it is generally more economical to avoid splitting if possible. 
    Therefore if possible $\text{supp}(π^*) \subseteq \text{graph}(T^*)$.
    Here, the pattern $ν$ was generated by shifting and changing the breadth of the peaks in $μ$.} \label{monge_kantorovich_equivalence_fig}
\end{figure}

Valid distances can also be constructed in the format
\begin{equation} \label{OT_map_costs2}
  W^c_p (μ, ν) = \sqrt[p]{ d^{c^p}_M (μ, ν) }
  \quad \text{or} \quad
  W^c_p (μ, ν) = \sqrt[p]{ d^{c^p}_K (μ, ν) } ,
\end{equation}
these are called Wasserstein distances.
Let the omission of the superscript indicate the usual choice $c(x,y) = \Vert x - y \Vert_p$.

\subsubsection{Discrete optimal transport} \label{discrete_OT}

Let the measure $μ \in P (\mathcal{X})$ have a finite support, so that $ μ = \sum_{i=1}^{N_μ} \sigw{μ}{i} δ_{\sigp[!]{μ}{i}} $,
where we may wish to refer solely to its positions $ \up{X}_μ = \{ \sigp[!]{μ}{i} \in \mathcal{X} : i \in [N_μ] \} $ or its corresponding weights $ \up{W}_μ = \{ w_μ^i : i \in [N_μ] \} $.
The measure $μ$ will be specified via its signature, a tuple of positions and weights $ (\up{X}_μ, \up{W}_μ)$.
The subscripts will be necessary later to distinguish the positions and weights of many different measures.
Sometimes it may be more convenient to refer instead to the weight vector $ \sigw{μ}{} \in \mathbb{R}^{N_μ}_+ $, 
which we will usually take to be normalised, i.e. $\sigw{μ}{} \in S^{N_μ -1}$.

Let\footnote{Here we chose to treat $\mathcal{X}=\mathcal{Y}$ for brevity, but this trivially applies to measures over separate \glspl{gs}.} $ ν \in P (\mathcal{X})$ also be finite with signature $ (\up{X}_ν, \up{W}_ν)$.
The transport plan between these can only be nonzero on $ \up{X}_μ \times \up{X}_ν $, taking the form $π = \sum_{i=1,j=1}^{N_μ, N_ν} p_{ij} δ_{\sigp[!]{μ}{i}, \sigp[!]{ν}{j}}$, hence
\begin{equation}
K^c (π) =
\sum_{i=1,j=1}^{N_μ, N_ν} p_{ij} \int_{\mathcal{X} \times \mathcal{X}} c(x,x') d δ_{\sigp[!]{μ}{i}, \sigp[!]{ν}{j}} (x,x') =
\sum_{i=1,j=1}^{N_μ, N_ν} c (\sigp[!]{μ}{i}, \sigp[!]{ν}{j}) p_{ij}
\end{equation}
and so we arrive at the common form of \gls{emd}
\begin{cgather}
	d_K^c (μ, ν) = \min_{ \mathbf{P} \in \mathbb{R}^{N_μ \times N_ν}_+ } \langle \mathbf{C}, \mathbf{P} \rangle
	\quad \\ \text{s.t.} \quad \quad
	\mathbf{P}^T \mathbf{1} = \sigw{μ}{}
	\quad \text{and} \quad
	\mathbf{P} \mathbf{1} = \sigw{ν}{}
\end{cgather}where $c_{ij} = c (\sigp[!]{μ}{i}, \sigp[!]{ν}{j}) $.

\begin{remark}
  Other applications of \gls{ot} include: 
  a distance between signed measures \cite{Engquist16}; 
  a distance between combinations of elements in any set, as combinations may be treated as distribution, such as in \cite{Hargreaves};
  bootstrapping from intra as opposed to inter \gls{gs} costs, with Gromov-Wasserstein distances;
  and more \cite{peyre2020computational, OTinMLFlamary}.
  Use of the Wasserstein distance has been noted to produce a more stable optimisation in GANs \cite{arjovsky2017wasserstein}.
  \gls{ot} has been used in complex \glspl{ip}, such as the inverse design of glass geometries that produces specific images with its caustics \cite{caustics14}.
\end{remark}

\FloatBarrier
\subsection{Existing foundational techniques} \label{existing_works_sec}

There is a strong association between unmixing and factor based models, and indeed many solutions in which the \textbf{factor representation} is central.
The factor representation of mixing is possible in \textbf{linear mixing}, where the basis is combined linearly, i.e. solely via $κ^\text{lin} $, with no basis variation and taking patterns to be expressed as vectors $ \mathcal{P} \subseteq \mathbb{R}^{N} $. 
This representation usually corresponds to sampling on a shared uniform grid of positions.
Here, multiple patterns (and similarly fractions) can be packed into the columns of a matrix.
Noting that $ \mathbf{c}^j = κ^\text{lin} ({\mathbf{a}^j}; \up{B}) = \mathbf{B} \mathbf{a}^j$, the factor representation follows immediately $\mathbf{C} = ( \mathbf{B} \mathbf{a}^1 \ldots \mathbf{B} \mathbf{a}^M )  = \mathbf{B} \mathbf{A} $,
where $ \mathbf{B} = ( \mathbf{b}^1 \ldots \mathbf{b}^k ) $ and $ \mathbf{A} = ( \mathbf{a}^1 \ldots \mathbf{a}^M ) $.
Since linear mixing can be expressed through matrix multiplication (in $\mathbb{R}^{N}$), linear unmixing, can here be seen as a matrix factorisation problem.

\textbf{Matrix factorisation} techniques such as \gls{pca}, Independent component analysis, and \gls{nmf} can be therefore be seen to perform the required inversion, in the cases of orthogonal basis, independent basis, and nonnegative factor conditions being required respectively \cite{Tingting12,Agrawal22,LeeSeung}.
Accordingly, \gls{nmf} often forms a base for unmixing where nonnegative data is involved.
Of course, while the model being fit may be linear, the fitting itself generally is a nonconvex optimisation.

The model utilised by \gls{onmf} \cite{Cichocki09}, diverges from the ordinary two factor model in \gls{nmf}, by allowing bases to be varied via discrete convolution\footnote{As defined here, it is actually a cross correlation, though the kernel $\bm{λ}$ can be learned reversed.} $ ϕ (\bm{λ} ; \mathbf{b}) = { \bm{λ} *_S \mathbf{b} } = { \sum_{s=1}^{S} λ_s \mathbf{b}^{\downarrow s-1} } $.
By $\mathbf{u}^{\downarrow s} $ we mean the shift-down operation where $ ( \mathbf{u}^{\downarrow s} )_i  = u_{i-s} $ if $i-s>1$ else $0$. 
Correspondingly, $\mathbf{U}^{\downarrow s} $ represents downward shifting each column vector.
This allows for addition of many overlapping shifted copies of each basis in each mixture, hence the name\footnote{Sometimes this may also be referred to by the name of convolutional \gls{nmf} (cNMF), which is a closely related algorithm \cite{Smaragdis}, though relates to a different composition, of 2D objects.}.
Similarly to \gls{nmf}, a solution is sought by minimising a matrix dissimilarity between the mixtures $\mathbf{C}$ and their reconstruction according to the corresponding mixing model.
By observing that the reconstructed mixture can be expressed as
\begin{calign}
\widetilde{\mathbf{c}}^j &= \sum_{i=1}^K a^j_i \bm{λ}^{ij} *_S \mathbf{b}^i = \sum_{i=1}^K a^j_i \sum_{s=1}^S λ^{ij}_s ( \mathbf{b}^i )^{\downarrow s-1} = \sum_{i=1}^K \sum_{s=1}^S \bar{λ}_{ij}^s ( \mathbf{b}^i )^{\downarrow s-1} \\
 &= \sum_{s=1}^S ( \mathbf{B} )^{\downarrow s-1} \bar{\bm{Λ}}^s_{*j}
\end{calign}with 
matrices $ ( \bar{\bm{Λ}}^{s} )_{ij} = a^j_i λ^{ij}_s $,
the usual matrix product sum form of the reconstructed mixtures $ \widetilde{\mathbf{C}} = \sum_{s=1}^{S} \mathbf{B}^{\downarrow s-1} \bar{\bm{Λ}}^s $ is obtained.
This sum of products can be written as a single product of concatenated factors,
hence this model can be seen through the lens of the linear mixing $ \widetilde{\mathbf{C}} = \bar{\mathbf{B}} \bar{\mathbf{A}} $, where 
$ \bar{\mathbf{B}} = (\mathbf{B}^{\downarrow 0}  \ldots  \mathbf{B}^{\downarrow S-1}) $ and similarly $ \bar{\mathbf{A}} = ( \bar{\bm{Λ}}^{0} \ldots \bar{\bm{Λ}}^{S-1} ) $.
Here the basis may be considered over-complete, holding multiple alternative representations of the same basis pattern.
In mixing, only a single representation is desired for each basis pattern and as such sparse fractions are desired.
An instance of this is \gls{afd} \cite{xue2016phasemapper}, where varied bases are generated by a shift.
Though additional constraints would be required so that different elements of the dictionary may correspond to different representations of the same basis, this is not dissimilar to the sparse dictionary learning problem.
Accounts of methods more directly applying a sparse dictionary learning approach can be found in \cite{Borsoi20}.

The \glspl{ip} of \gls{nmf} and \gls{onmf} are usually tackled through \textbf{reconstruction error minimisation}, through costs such as $ \sum_j D ( ψ (\mathbf{a}^j, \bm{Λ}^j; \up{B}), c^j ) $ where $D$ is a distance, dissimilarity or divergence, for example, $\sum_j \Vert \mathbf{B} \mathbf{a}^j - \mathbf{c}^j \Vert_2^2$.
Recall that while all three metric axioms are satisfied by a distance, dissimilarities and divergences may violate the triangle inequality and divergences may also violate symmetry.
A reconstructed data matrix $ \widetilde{\mathbf{C}} $ can also be defined, where the $j^\text{th}$ column is given by the reconstructed mixture $ ψ (\mathbf{a}^j, \bm{Λ}^j; \up{B}) $ and with the cost $ D (\widetilde{\mathbf{C}}, \mathbf{C} ) $ being minimised instead, for some matrix divergence, for example $\Vert \widetilde{\mathbf{C}} - \mathbf{C} \Vert_2^2$, or column-wise \gls{emd} such as in EMD-NMF\footnote{
Note, while EMD-NMF performs a more robust fit of the linear mixing model, it still only fits a linear/two factor model.
} \cite{Sandler11}.
The Wasserstein barycenter fitting approach \cite{Schmitz18}, also forms such a cost.

A cost of course needs to act as a notion of proximity to a desirable solution, however, there are two reasons why a cost of the format described above may not do this as well as possible.

\textit{A good fit of the mixture may not require a good basis.} If the bases may be arbitrarily contorted, as is the case in using an \gls{ot} model for basis transformation, a perfect fit may be achieved with any choice of basis, and so does not serve to guide us to a plausible basis.

\begin{figure}[htbp]
  \centering
  \includegraphics[width=\linewidth]{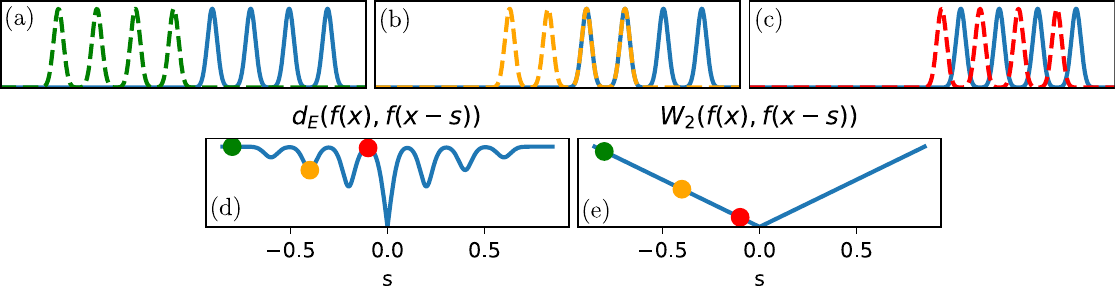}
\caption{
  Distances after offsetting an original density $f$ (a-c, solid blue), producing shifted counterparts (a-c, dashed) for various shifts $s$, the markers in panels (d) and (e) are colored according to corresponding values of $s$.
  Panels (d) and (e) track a pointwise distance (Euclidean) and an \gls{ot} distance (Wasserstein) respectively, between the original and a shifted copy, as a function of shift.
  Note, since a pointwise distance only compares features that align vertically (see \cref{point_association_fig}), we obtain local minima whenever non-corresponding peaks happen to align (such as in the yellow case).
  Note also the plateau in (d) when no features align (green case), where the gradient is near $0$, and provides little direction in optimisation (even after perturbation).
} \label{shift_convexity_fig}
\end{figure}

\textit{Avoidable local minima.}
As well as local minima corresponding to alternative solutions, a typical reconstruction cost can also be stationary at misaligned copies of the same solution.
This is due to the use of pointwise distances and the many locally optimal comparisons that they can produce.
Suppose $ \mathbf{A} $ and $\up{B}$ are fixed.
Even if there exists a unique globally optimal $\bm{Λ}^j_*$ with $ ψ (\mathbf{a}^j, \bm{Λ}^j_*; \up{B}) = c^j $, 
there can still be other $ \bm{Λ}^j $ that locally minimise $ F^j (\bm{Λ}^j) =  D (ψ (\mathbf{a}^j, \bm{Λ}^{j}; \up{B}), c^j) $.
For instance, consider registering a single uniformly shifted basis in a pointwise distance (see \cref{shift_convexity_fig} - bottom left).
Any combination of local minima of $F^j$ is a local minimum of $ \sum_{j=1}^M F^j( \bm{Λ}^j ) $,
hence if each $F^j$ has $L$ local minima, the sum has $ L^M $ local minima.
If we have $100$ mixed examples, and the alignment of each is akin to \cref{shift_convexity_fig}, then each feasible decomposition has roughly an additional $3.23 \times {10}^{84}$ misregistered counterparts, entirely due to the choice of distance used in construction of a fitting cost.

\subsubsection{Geometric approaches} \label{sec:geom_methods}

Not only can the image of a mixing model $ψ$, and hence the point cloud of generated mixtures,
form a specific structure within pattern space $\mathcal{P}$, but inputs such as the basis may be reflected through this structure as geometric features.
Geometric approaches recognise this and look for such geometric features.
We give these two classifications: those which consider the spatial structure arrising from a particular form of the mixing model $ψ$, 
which strongly informs which geometric features should be sought, 
and those which seek some analogue to features present in the linear case, 
without necessarily considering whether the presence of such analogues will be preserved under the change in context.

In the linear case, taking all convex combinations of $b^i$ simply forms a convex hull\footnote{This is a simplex given a number of bases not more than one above the dimensionality of the patterns.} of these patterns, this contains all possible mixtures in the normalised case, with the bases at the vertices of the hull (\cref{abstract_mix_fig} - left).
Supposing a dataset provides a dense enough sampling of the possible mixtures, 
then to find the basis one needs only identify vertices.
Simplex structure forms the foundations for simplex methods for linear mixing \cite{Ma14}.
A modified structure may be obtained if one first allows bases to be perturbed prior to combination (\cref{abstract_mix_fig} - middle) \cite{Stein03,Du14,Zhou18}.
While this modification considers variation of basis patterns,
the distributions considered only assign significant likelihood to basis variation that does not drastically increase Euclidean distance (which can be large for non-pointwise variation).

Some approaches seek a simplex embedded within a manifold, 
for instance through search of vertices via counting graph geodesics \cite{Heylen14}.
This offers an analogue for simplex fitting in the non-linear case, however, as basis variation is not treated independently in this model, it would not be able to inspect how each basis has changed individually, in the context of a given mixture.
It is also not clear what kinds of mixing might produce such structure.
Manifold analogues of \gls{pca} have also been explored \cite{Fletcher04}, 
however a manifold analogue of \gls{nmf} might be more appropriate for non-negative data.

\FloatBarrier
\section{Latent basis variant clustering} \label{main_contribution}

\begin{figure}[htbp]
  \centering
  \includegraphics[width=.98\textwidth]{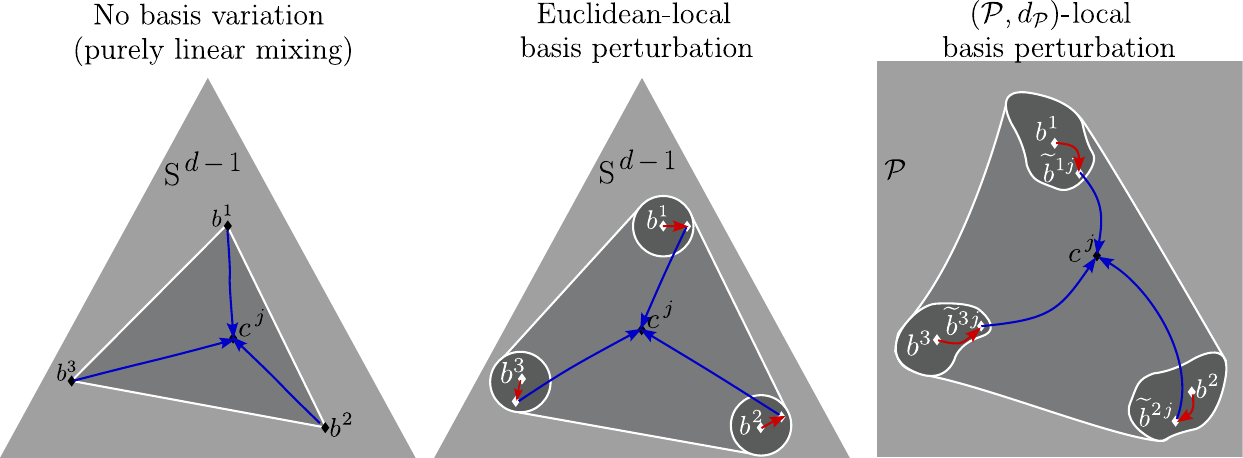}
  \caption{
    Geometric views of mixing under three kinds of basis variation (see \cref{basis_var_fig}) and linear combination, with the action of the variation $ϕ$ and combination $κ$ functions denoted with red and blue arrows respectively.
    Dark grey represents possible variations $\widetilde{b}$ of basis pattern $b$, lighter grey - possible combinations, and lightest - the whole space of patterns; the simplex $ \up{S}^{d-1} = \{ \mathbf{x} \in \mathbb{R}^d_+ : \Vert \mathbf{x} \Vert_1 = 1 \} $ or an arbitrary pattern space $\mathcal{P}$.
  } \label{abstract_mix_fig}
\end{figure}

Consider the space of mixing inputs (fractions and basis variations) that combine to produce a given arbitrary pattern $ p \in \mathcal{P} $ exactly
\begin{equation}
  \mathcal{I} (p) = \{ ({\mathbf{a}}, \widetilde{\up{B}}) \in S^{K-1} \times \mathcal{P}^K : \kappa ({\mathbf{a}}; \widetilde{\up{B}}) = p \} .
\end{equation}
A decomposition that perfectly reconstructs all of our given mixtures can be obtained by simply selecting one element from each $\mathcal{I} (c^j)$.
Such a selection of $\widetilde{b}^{ij}$ may have no particular spatial structure (or connection to a shared basis), however under mild conditions (see \cref{separability_sec}), we expect clustering of basis variants.
There may be many solutions in $ \prod_j \mathcal{I} (c^j) $ which appear to be clustered,
a simple way to find one would be to optimise a clustering objective such as minimising in class variation.
This has the desirable side effect of favouring bases requiring less modification on average to explain the data, which we might say provide a more natural explanation.
To measure variation, we will employ the Fréchet variance.

The weighted Fréchet variance of a cluster of patterns $ \widetilde{\up{P}} \subset \mathcal{P}$ is defined as follows
\begin{equation} \label{frechet_var}
  \text{FV} (\widetilde{\up{P}}, \mathbf{w}) = \min_{ p \in \mathcal{P} } \sum_{j=1}^{\vert \widetilde{\up{P}} \vert} w_j d_\mathcal{P} ( p, \widetilde{p}^j )^2 .
\end{equation}
Much like ordinary variance, it measures average squared deviation from the mean (defined by the minimiser), with the modification that deviation is given by $d_\mathcal{P}$ rather than the Euclidean distance.

Our problem is therefore
\begin{equation} \label{var_prob_2}
  \min_{  (\mathbf{a}^j, \widetilde{b}^{*j}) \in \mathcal{I} (c^j) \: \forall j} \sum_{i=1}^{K} \text{FV} (\widetilde{b}^{i*}, a_i^*)
  = \argmin_{ \substack{ (\mathbf{a}^j,\up{\widetilde{B}}^j) \in \mathcal{I} (c^j) \: \forall j \\
              b^i \in \mathcal{P} \: \forall i
  }} \sum_{i=1}^K \sum_{j=1}^M  a_i^j d_\mathcal{P} ( b^i, \widetilde{b}^{ij} )^2
\end{equation}
where we weight according to abundances, as we wish components that play a lesser role in a given mixture, to have less overall influence.
This is of course much too large a problem in practice, however, it can be decoupled into many smaller subproblems through alternating optimisation.

\textbf{Averaging} - given $ \widetilde{b}^{**} $, update cluster centroids
\begin{equation} \label{average_prob}
    \forall i \in [K] \quad \quad
    b^i = \argmin_{b \in \mathcal{P}} \sum_{j=1}^M a_i^j d_\mathcal{P} ( b, \widetilde{b}^{ij} )^2 . 
\end{equation}

\textbf{Separation} - given $ b^* $, if possible, select feasible mixing inputs that are less spread out from the cluster centers
  \begin{equation} \label{separation_prob}
    \forall j \in [M] \quad \quad
    \mathbf{a}^j, \widetilde{\up{B}}^j = 
    \argmin_{ (\mathbf{a}, \up{\widetilde{B}}) \in \mathcal{I} (c^j) } \sum_{i=1}^K  a_i d_\mathcal{P} ( b^i, \widetilde{b}^{i} )^2
 .
  \end{equation}

Clearly \cref{var_prob_2} is non-increasing under these alternating minimisations, as each updates a subsets of the overall sum in a non-increasing fashion.
Note, either the cost is strictly decreasing, in which case we have convergence as the cost is bound below by 0, 
or neither averaging or separation have updated any terms at some iteration, in which case no further updates can be made, also converging.

For data such as that produced by \gls{pxrd}; perturbation mechanisms such as shifting and broadening are of particular importance.
While pointwise distances do not behave well with respect to these (see \cref{shift_convexity_fig}),
Wasserstein distances are convex (and therefore locally Lipschitz) to these and more \cite{Engquist16}, and so we can expect clustering of basis variations (see \cref{separability_sec}).
Accordingly, from here we consider the set of all distributions $\mathcal{P} = P (\mathcal{X}) $ on a geodesic \gls{gs} ($\mathcal{X}$, $d_\mathcal{X}$).
The distance on $\mathcal{P}$ will be the Wasserstein distance $ d_\mathcal{P} = W_2^{d_\mathcal{X}} $.

\subsection{Averaging} \label{avg_sec}

Let $(S, d_S)$ be some metric space.
For convenience, let the weighted centroid of two points $μ,ν \in S$ be expressed by
\begin{equation}
  B_{d_S} (μ, ν, λ) = \argmin_{β \in S} \left[ (1-λ) d_S ( β, μ )^2 + λ d_S ( β, ν )^2 \right] ,
\end{equation}
this can in fact be obtained by walking a fraction of the way along the geodesic connecting the two points $g_{μ}^{ν} : [0,1] \rightarrow S $, or $ B_{d_S} (μ, ν, λ) = g_{μ}^{ν} (λ)$ (see \cref{average_along_geodesics}).
In the \gls{ed} case, where $d_S (μ, ν)= \Vert μ - ν \Vert_2$, the centroids lie on the line $ (1-λ) μ + λ ν $.
Note, this connection between centroids and geodesics comes in handy both in the ground space and in the overarching Wasserstein space.

\begin{definition}[The general McCann interpolation]
  Let $ T^λ (x) = B_{d_\mathcal{X}} (x, T^* (x), λ) $ where $T^*$ is the optimal transport map from $μ$ to $ν$, then the general McCann interpolant is the pushforward $T^λ_\# μ$ (for $λ \in [0,1]$).
\end{definition}

A Monge map can tell us that mass at $x$ should go to $T(x)$, but does not specify how this should happen.
In the McCann, or displacement interpolation, we continuously (and at constant speed) move mass along the \gls{gs} geodesic connecting $x$ and $T(x)$, obtaining a measure somewhere between $μ$ and $ν$ at each instant (see \cref{displacement_interp_vis}).
Since the general McCann interpolant follows the Wasserstein space geodesic (see \cref{general_mccan_interp}), just as the centroid does (see \cref{average_along_geodesics}), one can use the McCann interpolant to calculate centroids in Wasserstein space and vice versa
\begin{equation}
	B_{W_2^{d_\mathcal{X}}} (μ, ν, λ) = T^λ_\# μ.
\end{equation}

When a Monge map does not exist, an approximation---in the sense $ W_2(\overline{T}_\# μ, ν) \approx 0 $---can be generated via the barycentric projection, $\overline{T}(x) = \int_\mathcal{X} y d π^* (x, y)$ in the \gls{ed} \gls{gc} case.
Thanks to the displacement interpolation, there is no need to solve a coupled transport problem between in $ π^1 \in \mathcal{U} (μ, β(λ))$ and $ π^2 \in \mathcal{U} (β(λ), ν) $ for each weight $λ$, 
knowing just $ π \in  \mathcal{U} (μ, ν) $ gives us $β(λ) $ for all $ λ \in [0,1]$.

\begin{algorithm}[htb]
  \caption{Pairwise barycenter} \label{pairwiseBary}
 \begin{algorithmic}%
\Function{$\bar{B}_{W_2^{d_\mathcal{X}}}$}{$μ$, $ν$, $λ$}
\Comment{Approximation due to projection better for lower $λ$.}

\State $ \mathbf{P} \gets$ \Call{$\text{deriveOTP}_{d_\mathcal{X}}$}{$μ$, $ν$} \Comment{Choose $\text{deriveOTP}$ as appropriate, we consider the \gls{nwca} later}

 \For{$i$ from $1$ to $N$}

 \State $ \sigp{ν'}{i} \gets B_{d_\mathcal{X}} (\up{X}_ν, \mathbf{p}_{i*} )$  \Comment{Barycentric projection, $\sum_{j=1}^M p_{ij} \sigp{ν}{j} / \sigw{μ}{i}$ for \gls{ed}}

 \State $ \sigp{β}{i} \gets B_{d_\mathcal{X}} (\sigp{μ}{i}, \sigp{ν'}{i}, λ) $ \Comment{Displacement interpolation, $ λ \sigp{μ}{i} + (1-λ) \sigp{ν'}{i}$ for \gls{ed}}

 \EndFor

\State \Return $(\up{X}_β, \up{W}_μ)$

\EndFunction

 \end{algorithmic}
\end{algorithm}

When averaging more than two patterns,
we approximate barycenters hierarchically, as in \cite{Boissard11}, see \cref{mBaryAlg}.

\begin{figure}[htbp]
  \centering
  \includegraphics[width=0.98\linewidth]{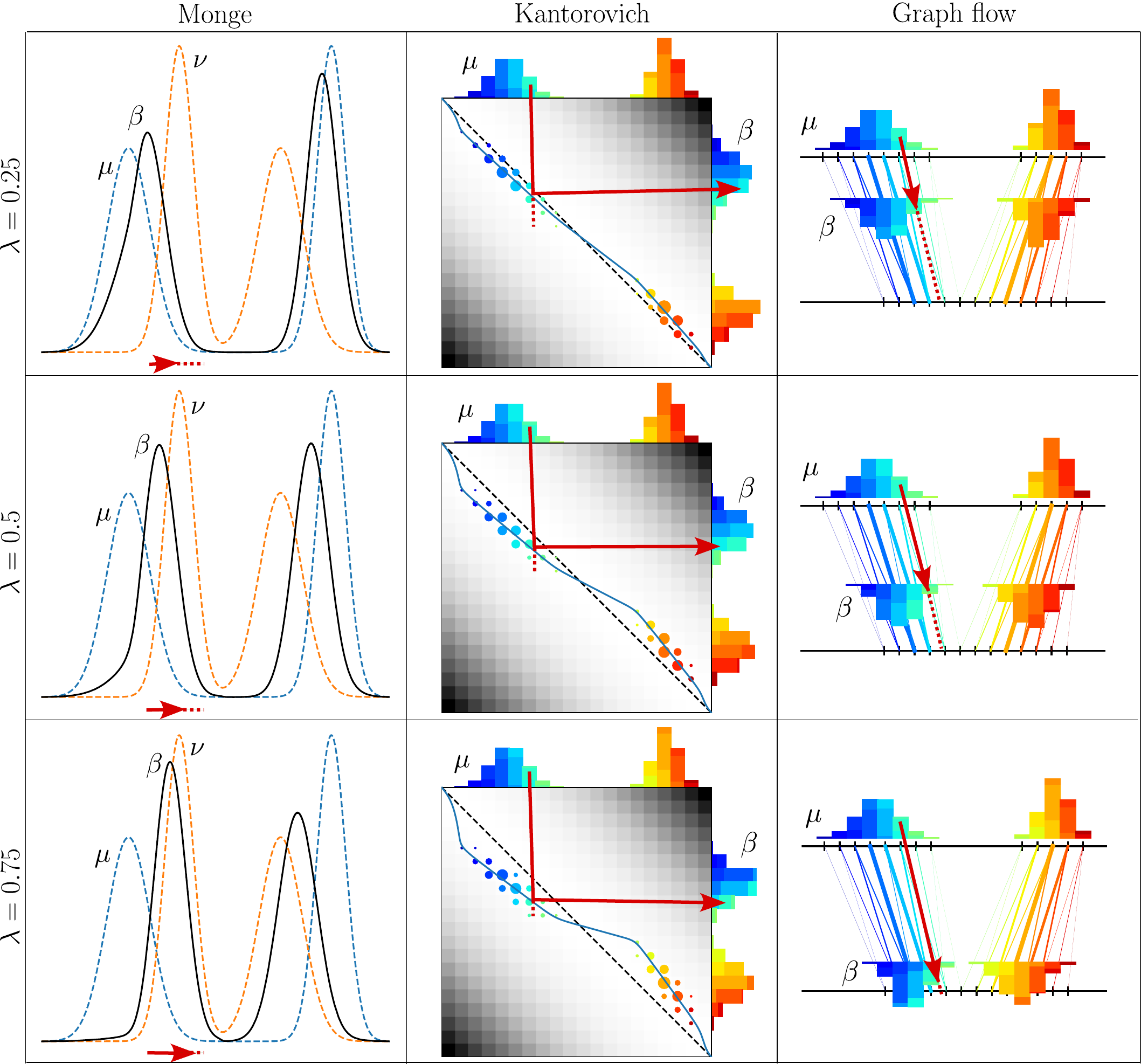}
  \caption{
    The displacement interpolation (\gls{ot} barycenter construction) from three perspectives (one per column).
    The barycenter captures aspects of multiple patterns by striving to be `in-between' them, attempting to be similar to both.
    With an appropriate distance, this can combine information in a meaningful way.
    The barycenter $β$ will be more $μ$-like or $ν$-like according to the weighting coefficient $λ$.
    Ordinarily a transport map associates some mass at position $x$ in $μ$ to mass at position $y$ in $ν$ (see \cref{fig:ot_maps}).
    With this association, we can consider continuously moving this mass from its location within $μ$ to its location within $ν$, along the shortest path between $x$ and $y$.
    With $λ=0$, all mass elements are at their source position and constitute $μ$.
    With $λ=1$, all mass elements are at their target position and constitute $ν$.
    Mass elements in intermediate positions, constitute an intermediate pattern $β$.
    In latter columns, a segment of one colour in $μ$ corresponds to that of the same color in $β$ and in $ν$.
    The blue curve in the Kantorovich column illustrates the Monge map from $μ$ to $β$.
  } \label{displacement_interp_vis}
\end{figure}

\FloatBarrier
\begin{algorithm}
  \caption{Hierarchical Wasserstein centroid approximation} \label{mBaryAlg}
 \begin{algorithmic}%

	\Function{HWCA}{$\up{W} = (w^1, \ldots, w^{N_M}) $, $\up{M} = (μ^1, \ldots, μ^{N_M}) $}

	 \Comment For the sake of brevity, let $N_M = \vert \up{M} \vert$ be a power of two.

   \Comment Take $\up{W}$, $\up{M}$ to be sorted such that weights are decreasing.

	 \While{ $ \vert \up{M} \vert > 1 $ }

	 \State $ \up{M}' \gets (), \up{W}' \gets () $

		 \For{$k$ from $1$ to $ \frac{\vert \up{M} \vert}{2} $} \Comment{Can be run in parallel}

          \State $ w_k' \gets w_{2 k - 1}+w_{ 2 k } $

				  \State $ μ_k' \gets $ \Call{$\bar{B}_{W_2^{d_\mathcal{X}}}$}{$ μ^{2 k - 1}, μ^{ 2 k }$, ${w_{2 k - 1}}/{w_k'} $}

		 \EndFor

		 \State $ \up{M} \gets \up{M}', \up{W} \gets \up{W}' $ \Comment{Note, these can be edited in place}

	 \EndWhile

	 \State \Return $μ^1$

\EndFunction

 \end{algorithmic}
\end{algorithm}

\FloatBarrier
\subsection{Separation} \label{wasserstein_separation_sec}

Recall the notation of \cref{discrete_OT}.
Discretising and writing in terms of abundance scaled versions of the transport maps $ \bar{\mathbf{P}}^{i} = a_i \mathbf{P}^i $ between $b^i$ and $\widetilde{b}^i$, the discrete $W_2$ separation subproblem (see \cref{separation_prob}) becomes
\begin{cgather} \label{separation_prob_discrete}
\min_{ \bar{\mathbf{P}}^1 , \ldots , \bar{\mathbf{P}}^K \in \mathbb{R}^{N \times N}_+ }
 \sum_{i=1}^{K} \langle \mathbf{D}^{i}, \bar{\mathbf{P}}^{i} \rangle \\
  \text{s.t.} \quad
  \left( \bar{\mathbf{P}}^{i} \right)^T \mathbf{1}  = { \Vert \bar{\mathbf{P}}^{i} \Vert_1} \sigw{b^{i}}{} \quad \forall i \in [K]
  \quad \text{and} \quad
  { \sum_{i=1}^K \bar{\mathbf{P}}^{i} \mathbf{1}} = \sigw{c}{}
\end{cgather}where 
the entries of $\mathbf{D}^{i}$ are given by $d^{i}_{kl} = d_\mathcal{X} \left( \sigp{b^{i}}{k}, \sigp{c}{l} \right)^2 $ and all measures have been taken to have the same size support for brevity\footnote{Uneven measures can be trivially padded with zero weight Diracs.}. 
The combination constraint
\begin{equation}
\sum_{i=1}^K a_i \widetilde{b}^{i} = c \implies \sum_{i=1}^K a_i\sum_{k=1}^N \sigw{\widetilde{b}^{i}}{k} δ_{\sigp{\widetilde{b}^{i}}{k}} = \sum_{k=1}^N \sigw{c}{k} δ_{\sigp{c}{k}}
\end{equation}
gives that the varied basis patterns cannot have weight where the mixture does not or $ \text{supp} (\widetilde{b}^{i}) \subset \text{supp} (c) $,
hence we can set $ \sigp{\widetilde{b}^{i}}{k} = \sigp{c}{k} $.
And so, during the separation subproblem, minimisation with respect to weights alone is sufficient.
The weight vector $\sigw{\widetilde{b}^{i}}{}$ is obtained from the marginal $ \bar{\mathbf{P}}^{i} \mathbf{1} /\lVert \bar{\mathbf{P}}^{i} \rVert_1$.

\begin{figure}[htbp]
  \centering
  \begin{subfigure}{.46\textwidth}
  \centering
    \includegraphics[width=\linewidth]{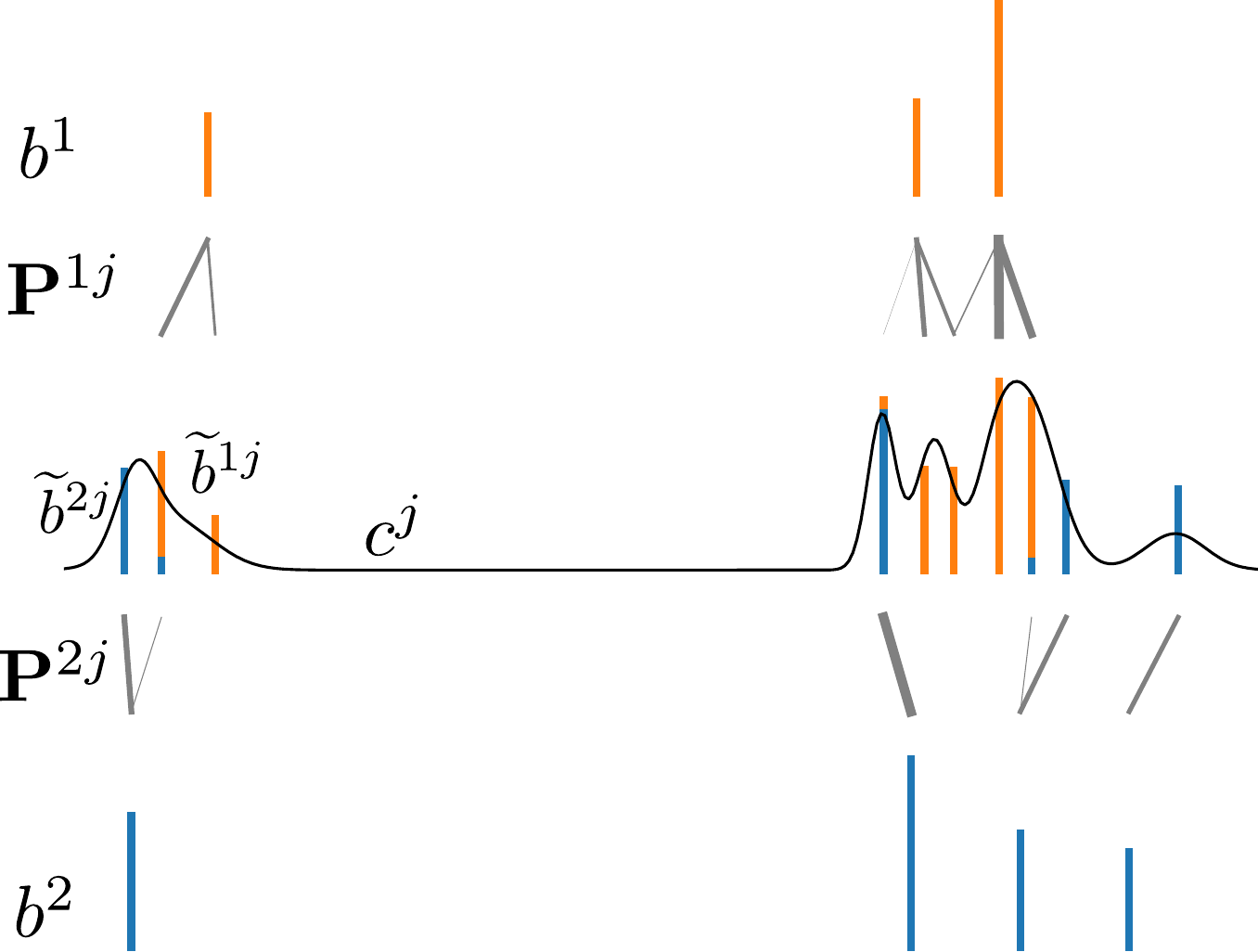}
    \caption{Coupled multi-source transport problem, with individual transport plans $\mathbf{P}^{ij}$ between bases $b^i$ and variations $\widetilde{b}^{ij}$, in each mixed datapoint $c^j$.}
  \end{subfigure}
  \hspace{0.1cm}
  \begin{subfigure}{.46\textwidth}
    \centering
    \includegraphics[width=\linewidth]{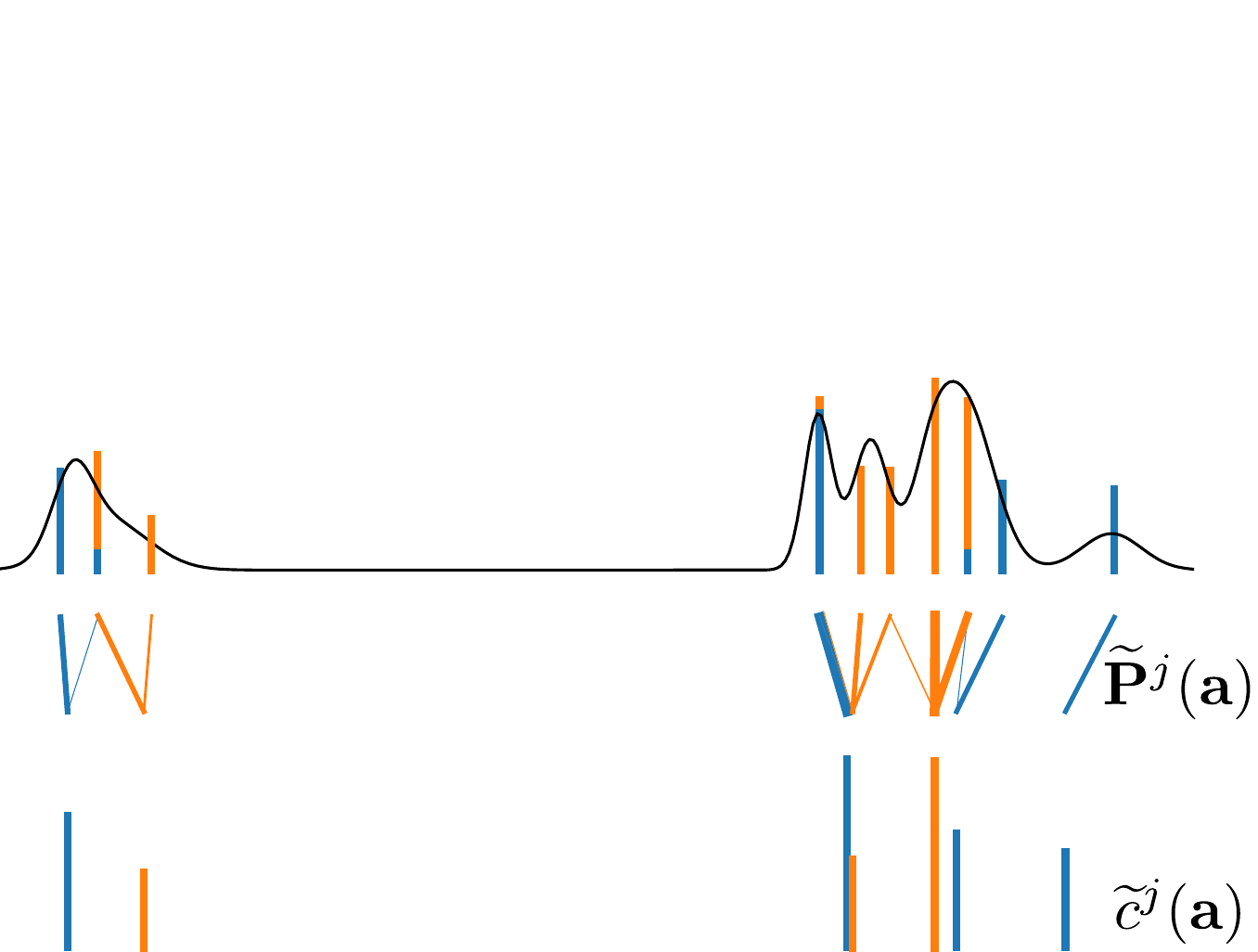}
    \caption{Single transport plan $\tilde{\mathbf{P}}^j$ between mixture $c^j$ and concatenated sources $ \widetilde{c}^j = \sum_{j=1}^K b^j $.\\\phantom{a}} %
  \end{subfigure}
  \caption{
    The segmentation problem as it emerges from the variation cost in the Wasserstein context (a), minimising with respect to multiple coupled plans, and an equivalent, single transport plan/distance evaluation simplification (b).
    Note, in the former case we inherently discover the abundances $\mathbf{a}$ when optimising for unnormalised $b^j$, 
    but in the latter case we must evaluate the distance/plan for each choice of $\mathbf{a}$.
    However, since we can construct exact plans for the ordinary 1D optimal transport problem easily,
    now we only have to optimise for $\mathbf{a}$.
  } \label{concat_set_fig}
\end{figure}
Concatenating
\begin{calign}
  \widetilde{\mathbf{D}} &= ( \mathbf{D}^1 \mathbf{D}^2 \ldots \mathbf{D}^K ) \\
  \widetilde{\mathbf{P}} &= ( \bar{\mathbf{P}}^1 \bar{\mathbf{P}}^2 \ldots \bar{\mathbf{P}}^K ) \\
  \sigw{\widetilde{c}}{}^T &= ( a_1 \sigw{b^1}{}^T \ldots a_k \sigw{b^k}{}^T ) , \\
\end{calign}we can 
now express the separation as
\begin{cgather} \label{separation_prob_concat}
\min_{ \mathbf{a} \in \up{S}^{K-1} } \langle \widetilde{\mathbf{D}}, \widetilde{\mathbf{P}} \rangle
\quad \\ \text{s.t.} \quad \quad
\widetilde{\mathbf{P}}^T \mathbf{1} = \sigw{\widetilde{c}}{}
\quad \text{and} \quad
\widetilde{\mathbf{P}} \mathbf{1} = \sigw{c}{}
\end{cgather}where 
$ \widetilde{c}(\mathbf{a}) = κ^\text{lin} (\mathbf{a} ; \up{B}) $, as supposed to $ c = κ^\text{lin} (\mathbf{a} ; Φ \up{B} ) $, so we can in fact write
\begin{equation} \label{minDistSeg}
	\min_{\mathbf{a} \in \up{S}^{K-1}} W_2^{d_\mathcal{X}} ( \widetilde{c}(\mathbf{a}), c )^2
\end{equation}
which is a convex optimisation problem that benefits from being able to efficiently evaluate $W_2^2$ and its optimal transport map.

For the initial retrieval of abundances, this optimisation is indeed equivalent to $ W_2^{d_\mathcal{X}} $ regression in the linear model, 
such as in \cite{Sandler11}, however, we do not stop there (see \cref{separationAlg}),
as we are not after a linear representation,
we are after the transformed basis patterns inferred from the unconstrained marginals.
If the initial distance computation cannot expose the concatenated transport plan, 
we must use another method to recover it afterwards.
We then separate this into individual plans (see \cref{concat_set_fig}).
Note the advantage in non-overlapping supports here, where some positions can be associated with a particular basis pattern and therefore transport to/from such positions in the combined map can be identified as transport to/from the corresponding pattern.

\begin{algorithm}
  \caption{\gls{ot} separation (unmixing with known basis)} \label{separationAlg}
 \begin{algorithmic}%

 \Function{separate}{$\up{B}$, $c$}

 \State $ \mathbf{a} \gets \argmin_{\bar{\mathbf{a}}} W_2^{d_\mathcal{X}} \left( \widetilde{c}(\bar{\mathbf{a}}), c \right)^2  $  \label{separationAlg:minLine}

   \State $ ( \bar{\mathbf{P}}^1 \bar{\mathbf{P}}^2 \ldots \bar{\mathbf{P}}^K ) \gets $ \Call{$\text{deriveOTP}_{d_\mathcal{X}}$}{$ \widetilde{c}(\mathbf{a})$, $c$}

   \For{$i \in \{1, \ldots, K \} $}

      \State $ \sigw{\widetilde{b}^{i}}{}^T \gets \bar{\mathbf{P}}^{i} \mathbf{1} /\lVert \bar{\mathbf{P}}^{i} \rVert_1 $

   \EndFor

 \EndFunction

 \end{algorithmic}
\end{algorithm}

\FloatBarrier
\subsection{Proposed unmixing algorithm}

Given a method for efficiently computing transport plans (see \cref{monotonic_maps_1D} for 1D) and provided that centroids can be computed (or well approximated) in the given \gls{gs} (in a feasible timescale), such as via a hierarchical approach, we arrive at the general unmixing algorithm \cref{unmixingAlg}.
This algorithm is highly parallelisable\footnote{The averaging step is broken in to a series of pairwise reduction passes, these can be run in parallel.}, as due to the degree of decoupling in the problem, there are many independent subproblems and for large datasets, GPUs can be leveraged to solve a larger subset of these concurrently.
We provide a concrete implementation for the $(P(\mathbb{R}), W_2)$ case.

\begin{algorithm}
  \caption{Warping robust unmixing for discrete measures} \label{unmixingAlg}
 \begin{algorithmic}%

	\Function{W2unmix}{$C$, $N^{\text{iter}}$, $K$}

	\State Generate random initial $\widetilde{\up{B}}$ (random weights over each $x \in \text{supp} (c)$)

	 \For{$k \in \{ 1, \ldots, N^{\text{iter}} \} $}

			 \For{$i \in \{1, \ldots, K \} $} \Comment{Averaging step}

				 \State $b^i \gets $ \Call{HWCA}{$\mathbf{a}_i$, $ ( \widetilde{b}^{i1}, \ldots, \widetilde{b}^{iM} ) $}

			 \EndFor

			 \For{$c \in C$} \Comment{Can be performed independently per mixture}

			 		\State \Call{separate}{$\up{B}$, $c$}

			 \EndFor

	 \EndFor

	 \State \Return $\widetilde{\up{B}}, \up{B}$

 \EndFunction

 \end{algorithmic}
\end{algorithm}

\FloatBarrier
\subsection{Variations of the unmixing problem} \label{prob_variants_sec}

Depending on what information is originally known, the problem of interest may in fact be more constrained.
For instance our separation step provides a solution to the problem of separating components when the basis is known.
For partially blind unmixing, the known patterns are included in the initial basis instead of fully random data and can be kept fixed by skipping the corresponding basis update.
However, if we wish to detect novel patterns, 
as one often may in materials discovery,
these will not have comparably weighted clusters in our geometry,
as these patterns are effectively outliers and are drowned out by the more common bases.
As such, we cannot simply use the partially blind approach in this case.
A dedicated anomaly detection approach is necessary.
\begin{table}[hbt]
    \begin{tabular}{ | c | c | L{0.4\textwidth}| }
    \hline
    Problem & Amount of basis known  & Primary goal\\
    \hline
     Separation (non-blind) & All & For each mixture, determine parts likely arising from each basis \\
    \hline
     Partially blind unmixing & Some & Find remaining basis patterns \\
    \hline
     Fully blind unmixing & None & Find basis $B$ \\
    \hline
    \hline
    + Anomaly detection & - & Identify datapoints containing trace uncommon patterns \\
    \hline
    \end{tabular}
  \caption{Table of variations of the unmixing problem. 
           The anomaly detection relaxation can be optionally made from any of the first three variants,
           it is treated separately from partially blind unmixing since a sought additional component may be too statistically insignificant to estimate (though may be indirectly detected).} \label{variants_table}
\end{table}

The presence of sufficiently novel patterns can still be detected through our usual formulation, via an uncharacteristically large segmentation cost for a particular mixture.
This can be seen as attempting to fit a mixture of $K+1$ distinct basis patterns with only a $K$ subset, will involve additional deformation of these $K$ basis patterns.
By constraining the deformation of these basis patterns, to less than what would minimally be required for fully fitting with a $K$ subset, we can attempt to estimate the unknown through the residual, after a best partial fit.
Again, residuals need only be enabled for those mixtures where anomalies are reasonably suspected.

This changes the separation problem to the following linear program
\begin{cgather}
\min_{ \substack{ \bar{π}^{i} \in M_+ (\mathbb{R} \times \mathbb{R}) \\ \bar{η} \in M_+ (\mathbb{R}) } }
 \sum_{i=1}^K \langle d, \bar{π}^{i} \rangle + β \Vert \bar{η} \Vert_1 \\
\text{s.t.} \quad
(\text{proj}_x)_\# \bar{π}^{i} = \Vert \bar{π}^{i} \Vert_1 b^i  \quad \forall i \in [K]
, \quad
(\text{proj}_y)_\# \sum_{i=1}^K  \bar{π}^{i} + \bar{η} = c \label{anom_seg} \\
\quad \text{and} \quad
\langle d, \bar{π}^{i} \rangle \leq \overline{Δ^2}
\end{cgather}where 
the $\Vert \bar{η} \Vert_1 = \bar{η} (\mathbb{R}) = ε $ penalty discourages unnecessarily attributing mass to $η$, and $\overline{Δ^2}$ is a specified maximum average squared displacement, set to a value less than the minimum of the unconstrained segmentation cost.
In the discrete linear program, we can alternatively (or additionally) enforce a maximum displacement constraint, this can be achieved efficiently by disregarding variables $\bar{p}^{i}_{kl}$ that correspond to $d^{i}_{kl} > Δ^2$, reducing the amount of variables that need to be solved for.

\FloatBarrier
\section{Fast Wasserstein} \label{fast_wasserstein_sec}

Here we will cover two ways in which to exploit structure in the \gls{ot} problem, for a fast exact and an approximate distance calculation.

\subsection{The north west corner algorithm} \label{monotonic_maps_1D}

\begin{definition}[Monotone increasing] \label{monotone_increasing_def}
  A transport plan $π$ is monotone increasing if
  \begin{equation}
    \langle x' - x, y' - y \rangle \geq 0
    \quad \forall (x, y), (x', y') \in \text{supp} (π) .
  \end{equation}
\end{definition}

In the 1D case, the monotonicity of transport plans fully constrains them, such that no minimisation is required.
Further, for \glspl{gc} that admit monotone increasing solutions, such as the \gls{ed}, this unique feasible map can be constructed through the \gls{nwca} (in $ \mathcal{O} (N)$ time).
Though this fact is known in \gls{ot} literature \cite{bai2023sliced}, it does not appear to be always exploited downstream.

\begin{lemma} \label{trace_monotonicity}
  Though a Monge map is not sufficient to convey optimal transport between a pair of general measures (such that $\text{supp} (π^*) \subseteq \text{graph} (T^*)$), 
  a parameterised trace $ (x,y) = (T_x (z), T_y (z)) $\allowbreak---with monotone $T_x, T_y$---is in 1D with the \gls{ed} \gls{gc}. 
Similarly, in the discrete case, a sequence $ ( \sigp[!]{μ}{i}, \sigp[!]{ν}{j}) $ with monotone  $\sigp[!]{μ}{i}$ and  $\sigp[!]{ν}{j}$ is sufficient.
\end{lemma}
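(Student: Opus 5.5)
We will take the lemma to claim two things. First, in 1D with the \gls{ed} \gls{gc}, the support of an optimal plan $π^*$ is contained in the image of a curve $z \mapsto (T_x(z), T_y(z))$ with both coordinates non-decreasing. Second, in the discrete case this curve is a lattice path $(\sigp[!]{μ}{i_t}, \sigp[!]{ν}{j_t})_t$ whose indices never decrease. The natural choice is $z \in [0,1]$ with $T_x = F_μ^{-1}$ and $T_y = F_ν^{-1}$, the generalised inverses of the cumulative distribution functions. These are monotone by construction.

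\textbf{Step 1: the support of an optimal plan is monotone.} Let $(x,y),(x',y') \in \text{supp}(π^*)$ with $x<x'$ and suppose $y>y'$. Exchanging a small mass $ε$ between small neighbourhoods of these two points gives a plan that is still in $\mathcal{U}(μ,ν)$. For $c(x,y)=\vert x-y\vert^p$ with $p \ge 1$ convex, the change in cost is
\[
ε\bigl[c(x,y')+c(x',y)-c(x,y)-c(x',y')\bigr],
\]
which is $\le 0$, and strictly negative when $p>1$. This is the standard cyclical monotonicity (submodularity) argument. Hence an optimal plan satisfies \cref{monotone_increasing_def}, and its support is a totally ordered subset of $\mathbb{R}^2$ under the product order. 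For $p=1$ the inequality is not strict, so we only claim that \emph{some} optimal plan is monotone, namely the one built in Step 2.

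\textbf{Step 2: build the parameterisation and identify $π^*$.} Set $π_{\text{mon}} = (F_μ^{-1}, F_ν^{-1})_\# \mathcal{L}|_{[0,1]}$. Its marginals are $μ$ and $ν$, since $(F_μ^{-1})_\# \mathcal{L} = μ$, and its support lies on the trace of the curve.

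We then show that a monotone coupling is unique. If $π$ has totally ordered support, then for every $s,t$,
\[
π\bigl((-\infty,s]\times(-\infty,t]\bigr) = \min\bigl(F_μ(s),F_ν(t)\bigr).
\]
The reason is that a totally ordered support cannot simultaneously carry mass in $(-\infty,s]\times(t,\infty)$ and in $(s,\infty)\times(-\infty,t]$. One of those two sets is therefore null, and the identity follows from the marginal constraints. This joint distribution function is exactly that of $π_{\text{mon}}$, so $π^* = π_{\text{mon}}$. Mass splitting at atoms is allowed, because $F^{-1}$ is constant on intervals; this is why a trace is needed rather than a Monge graph.

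\textbf{Step 3: the discrete case.} For discrete measures, $F_μ^{-1}$ and $F_ν^{-1}$ are step functions. As $z$ sweeps $[0,1]$, the pair $(i(z),j(z))$ of step indices advances in $i$, in $j$, or in both at once, following the merged breakpoints $\{F_μ(\sigp[!]{μ}{i})\}\cup\{F_ν(\sigp[!]{ν}{j})\}$. The resulting weights are $p_{i_t j_t}$, the length of the $z$-interval spent at $(i_t,j_t)$. This is exactly the \gls{nwca} staircase, it has at most $N_μ+N_ν-1$ entries, and it is fully determined by the ordering alone.

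\textbf{Main obstacle.} The delicate part is Step 1 when supports have atoms and the exchange has to be made rigorous. The plan is to prove it directly for discrete measures, where one simply moves $\min(p_{ij'},p_{i'j})$ between the two entries. The continuous case then follows either from weak limits of discrete approximations, using stability of optimal plans, or by citing cyclical monotonicity from \cite{Santambrogio2015,Villani2009}.
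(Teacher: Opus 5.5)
Your proof is correct, but it takes a different and more constructive route than the paper. The paper simply assumes the optimal plan is monotone increasing; it gives no exchange argument. It then argues combinatorially in the discrete case: sort the support points by $x$, breaking ties by increasing $y$. Monotonicity forces the $y$-values to be non-decreasing across distinct $x$-values, so the support is a finite chain listed with both coordinates non-decreasing. The continuous case is left to ``similarly, in the limit''.

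You instead prove monotonicity by the submodularity exchange and give the parameterisation explicitly as $(T_x,T_y)=(F_\mu^{-1},F_\nu^{-1})$ on $[0,1]$. You then identify $\pi^*$ with the quantile coupling via the identity $\pi((-\infty,s]\times(-\infty,t])=\min(F_\mu(s),F_\nu(t))$.

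The paper's argument is shorter and needs only that the support is totally ordered in the product order. However, it rests on an unproved premise and has no genuine continuous-case argument. Yours handles discrete, continuous and mixed measures uniformly with no limiting step. As a by-product it gives uniqueness of the monotone coupling and its coincidence with the NWCA staircase, which the paper obtains only later, through separate lemmas and an external uniqueness result.

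Two small points to tighten:
\begin{itemize}
\item For $p=1$, your Step~1 gives only a non-strict inequality, which does not by itself show that $\pi_{\mathrm{mon}}$ is optimal. One fix is to choose, among optimal plans, one maximising $\int xy\,d\pi$: an uncrossing preserves optimality and strictly increases this quantity. Alternatively, let $p\downarrow 1$.
\item $F_\mu^{-1}$ need not attain its right limits at jumps, so strictly speaking the support is contained in the closure of the trace rather than in the trace itself.
\end{itemize}
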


\begin{proof}
  In the discrete case we have the following.
  Take the points in the support of the optimal (Kantorovich) map in any initial order,
  forming the sequence $ z^i = (x^i, y^i) \in \text{supp} (π^*) $.
  Suppose there are no $z^i$ with the same $x$ value, then we can order these such that $\{x^i\}$ are strictly increasing. 
  Since the optimal plan is monotone increasing $π^*$, $(x^{i+1} - x^i)(y^{i+1} - y^i) \geq 0$ must hold, where $x^{i+1} - x^i > 0$ by ordering and so $y^{i+1} - y^i \geq 0$, hence $\{y^i\}$ are non-decreasing.

  More generally, we may order such that $\{x^i\}$ are non-decreasing, containing both strictly increasing and constant subsequences.
  Over strictly increasing subsequences of $x$, we have non-decreasing $y$ as before.
  By uniqueness of $z^i$ (points in the support), $y$ cannot also be equal for constant $x$ subsequences, so over these, $z^i$ can be ordered such that $y^i$ are strictly increasing.
  Finally, note that for $z$ in a preceding strictly increasing sequence, and $z'$ the next subsequence, $x' - x > 0$, hence all $y$ in the preceding subsequence are lower than those in the next.

  Hence the support of any optimal transport map (for discrete, euclidean gd, 1D) can be expressed through two non-decreasing subsequences over the \gls{gs}.
  Similarly, in the limit we require monotone $T_x, T_y$ in the continuous case.
\end{proof}

\begin{corollary} \label{trace_monotonicity_coro}
  If $\sigp[!]{μ}{i}$ and  $\sigp[!]{ν}{j}$ are ordered, then it follows from the monotonicity of the support trace that
  \begin{equation}
    p_{ij} p_{i' j'} > 0
    \quad \text{with} \quad 
    i \leq i' 
    \quad \Rightarrow \quad
    j \leq j' .
  \end{equation}
\end{corollary}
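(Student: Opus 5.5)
The plan is to read the corollary directly off \cref{trace_monotonicity}, using the monotonicity of the optimal plan itself (\cref{monotone_increasing_def}) as the engine. We assume the supports are ordered strictly increasingly, $\sigp[!]{μ}{1} < \dots < \sigp[!]{μ}{N_μ}$ and $\sigp[!]{ν}{1} < \dots < \sigp[!]{ν}{N_ν}$, with Diracs at coincident positions merged. We take $\mathbf{P}$ to be the weights of an optimal plan $π^*$ for the Euclidean ground cost in 1D, which the paper takes to be monotone increasing.

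Suppose $p_{ij} p_{i'j'} > 0$ with $i \le i'$. Then both $(\sigp[!]{μ}{i}, \sigp[!]{ν}{j})$ and $(\sigp[!]{μ}{i'}, \sigp[!]{ν}{j'})$ lie in $\text{supp}(π^*)$. We split into two cases.

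\textbf{Case $i = i'$.} Both points share the same $x$ value. Here the required implication $j \le j'$ is not forced by monotonicity, since within a constant-$x$ block several $j$ may carry mass. We read the statement in this case as a statement about the trace of \cref{trace_monotonicity}: in the proof of that lemma, constant-$x$ subsequences are ordered so that $y$ is strictly increasing. So whenever two support points with equal $i$ appear in trace order, their $j$ indices are ordered accordingly. Equivalently, the claim is only substantive for $i < i'$, and we will state this reading explicitly.

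\textbf{Case $i < i'$.} Strict ordering of the $μ$ support gives $\sigp[!]{μ}{i'} - \sigp[!]{μ}{i} > 0$. Monotonicity of $π^*$ gives
\begin{equation*}
(\sigp[!]{μ}{i'} - \sigp[!]{μ}{i})(\sigp[!]{ν}{j'} - \sigp[!]{ν}{j}) \ge 0 ,
\end{equation*}
so $\sigp[!]{ν}{j'} \ge \sigp[!]{ν}{j}$. Since the $ν$ support is strictly ordered, this yields $j \le j'$. This is the same argument as the "strictly increasing subsequence" step in the proof of \cref{trace_monotonicity}, transcribed to indices.

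The main obstacle is justifying that the optimal plan is monotone increasing, i.e.\ $c$-cyclical monotonicity for $c(x,y) = |x-y|^p$, which the paper only asserts. If this needs support, we would give a short exchange argument. Suppose $x < x'$, $y > y'$, and both $(x,y)$ and $(x',y')$ carry mass. Moving mass $\varepsilon = \min(p_{ij}, p_{i'j'})$ onto $(x,y')$ and $(x',y)$ preserves the marginals. The change in cost is
\begin{equation*}
c(x,y') + c(x',y) - c(x,y) - c(x',y') .
\end{equation*}
For $c(x,y) = (x-y)^2$ this equals $-2(x'-x)(y-y') < 0$, contradicting optimality. For $p = 1$ the inequality is not strict and ties occur, so we would instead select the monotone optimal plan, e.g.\ the one produced by the \gls{nwca}, rather than an arbitrary optimiser.
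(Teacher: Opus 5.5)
Your argument for $i<i'$ is correct and follows essentially the paper's route: the corollary is read off \cref{trace_monotonicity}, whose key step is your direct use of the monotone-increasing property of the optimal plan on a pair with strictly ordered $x$-values, and your exchange argument for the squared cost (with the caveat of selecting the monotone plan when $p=1$) supplies a justification the paper only asserts. On $i=i'$ you can be blunter than ``not forced'': the statement as written is false there, since $\mu=\delta_0$, $\nu=\tfrac12(\delta_0+\delta_1)$ give $p_{12}p_{11}>0$ with $i=i'=1$ but $j=2>1=j'$, so the corollary should be stated with $i<i'$ (or in trace order), which is all that \cref{edge_cond} actually uses.
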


\begin{definition}[Slacks]
  \begin{cgather}
    s_μ^i = \sigw{μ}{i} - \sum_{ j=1 }^M p_{ij}
    \quad \quad \text{and} \quad \quad
    s_ν^j = \sigw{ν}{j} - \sum_{ i=1 }^N p_{ij} ,
  \end{cgather}where 
  $s_μ^i = 0$ signals the $i^\text{th}$ row constraint being met.
\end{definition}

\begin{definition}[North-west corner rule (NWCR)]
  Set the north-west corner of the transport plan to $p_{11} = \min (\sigw{μ}{1}, \sigw{ν}{1})$, so that at least one of $s_μ^1 = 0$ or $s_ν^1 = 0$ is satisfied.
\end{definition}

\begin{definition}[North-west corner algorithm (NWCA)]
  Having satisfied at least one row or column constraint, we move on to determine the remainder of the transport plan, which is structured
  \begin{equation}
    \mathbf{P} (μ, ν) = 
    \begin{pmatrix}
      p_{11} \quad \mathbf{0}\\
      \mathbf{P} (μ^1, ν^1)
    \end{pmatrix}
    \quad \text{or} \quad
    \mathbf{P} (μ, ν) = 
    \begin{pmatrix}
      \begin{matrix}
        p_{11} \\
        \mathbf{0}
      \end{matrix} &
      \mathbf{P} (μ^1, ν^1)
    \end{pmatrix}
  \end{equation}
  where $μ^1 = μ - p_{11} δ_{\sigp[!]{μ}{1}}$ and $ν^1 = ν - p_{11} δ_{\sigp[!]{ν}{1}}$.
  To determine $\mathbf{P} (μ^1, ν^1)$, and therefore $\mathbf{P} (μ, ν)$, we iterate this rule until $ μ^K (\mathcal{X}) = ν^K (\mathcal{X}) = 0 $.
  This yields a feasible transport plan (as all slacks are set to $0$) in at most $N+M-1$ steps\footnote{Too see why, consider that each step satisfies at least one additional constraint (than the previous). There are a total of $N+M$ constraints, and the final step must satisfy both the last row and column constraints. Alternatively, the longest monotonic trace through the matrix $\mathbf{P} (μ, ν)$ visits $N+M-1$ entries.} (see 3.4.2 in \cite{peyre2020computational}).
\end{definition}

\begin{algorithm}
  \caption{North-west corner algorithm} \label{monoFlowAlg}
  \begin{algorithmic}%
  
  \Function{NWCA}{$\protect\sigw{μ}{}$, $\protect\sigw{ν}{}$}
  
    \State $i_1 \gets 1, j_1 \gets 1, s_μ \gets \sigw{μ}{1}, s_ν \gets \sigw{ν}{1}, \mathbf{P} \gets \mathbf{0}_{N \times M}$, $\text{KC}=0$
  
    \For{$k=1$ to $k=N+M-1$}
  
    \State $p_{i_k j_k} \gets \min (s_μ, s_ν)$
  
    \State $s_μ \gets s_μ - p_{i_k j_k} $
  
    \State $s_ν \gets s_ν - p_{i_k j_k} $

    \State $\text{KC} = \text{KC} + c (\sigp[!]{μ}{i_k}, \sigp[!]{ν}{j_k}) p_{i_k j_k}$
  
      \If{$s_μ=0$}
  
          \State $ i_{k+1} \gets i_k + 1 $, $ j_{k+1} \gets j_{k} $
  
          \If{$i_{k+1} = N$}
  
              \State \textbf{break}
  
          \EndIf
  
          \State $ s_μ \gets \sigw{μ}{i_{k+1}} $
  
      \EndIf

      \If{$s_ν=0$}
  
        \State $ i_{k+1} \gets i_{k} $, $ j_{k+1} \gets j_k + 1 $
  
          \If{$j_{k+1} = M$}
  
              \State \textbf{break}
  
          \EndIf
  
          \State $ s_ν \gets \sigw{ν}{j_{k+1}} $
  
      \EndIf
  
    \EndFor

    \EndFunction
  
  \end{algorithmic}
  
\end{algorithm}

\begin{figure}[htbp]
  \centering
  \begin{subfigure}{.38\textwidth}
    \centering
    \includegraphics[width=.99\linewidth]{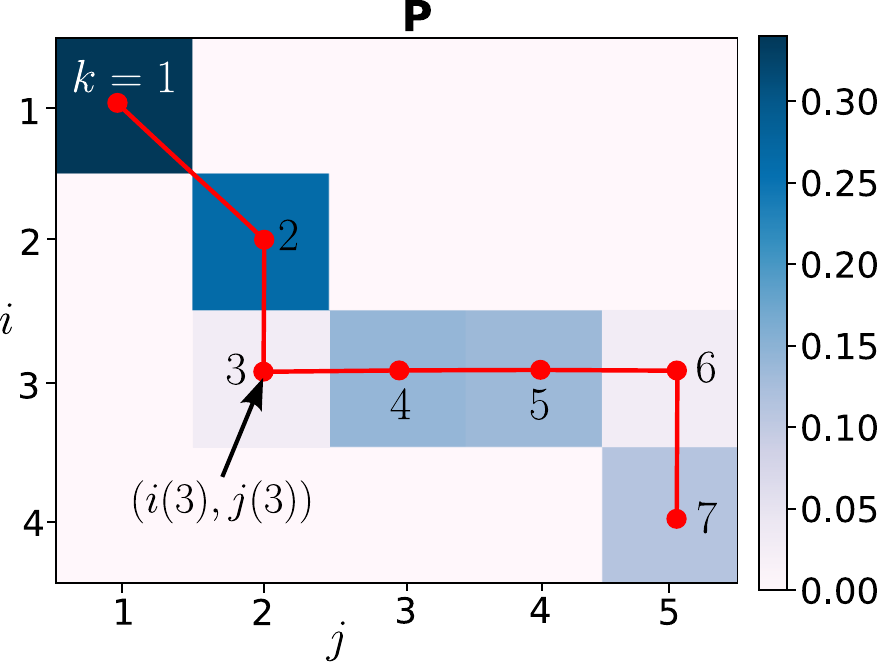}
    \caption{Optimal Kantorovich plan.}
  \end{subfigure}
  \begin{subfigure}{.6\textwidth}
    \centering
    \includegraphics[width=.99\linewidth]{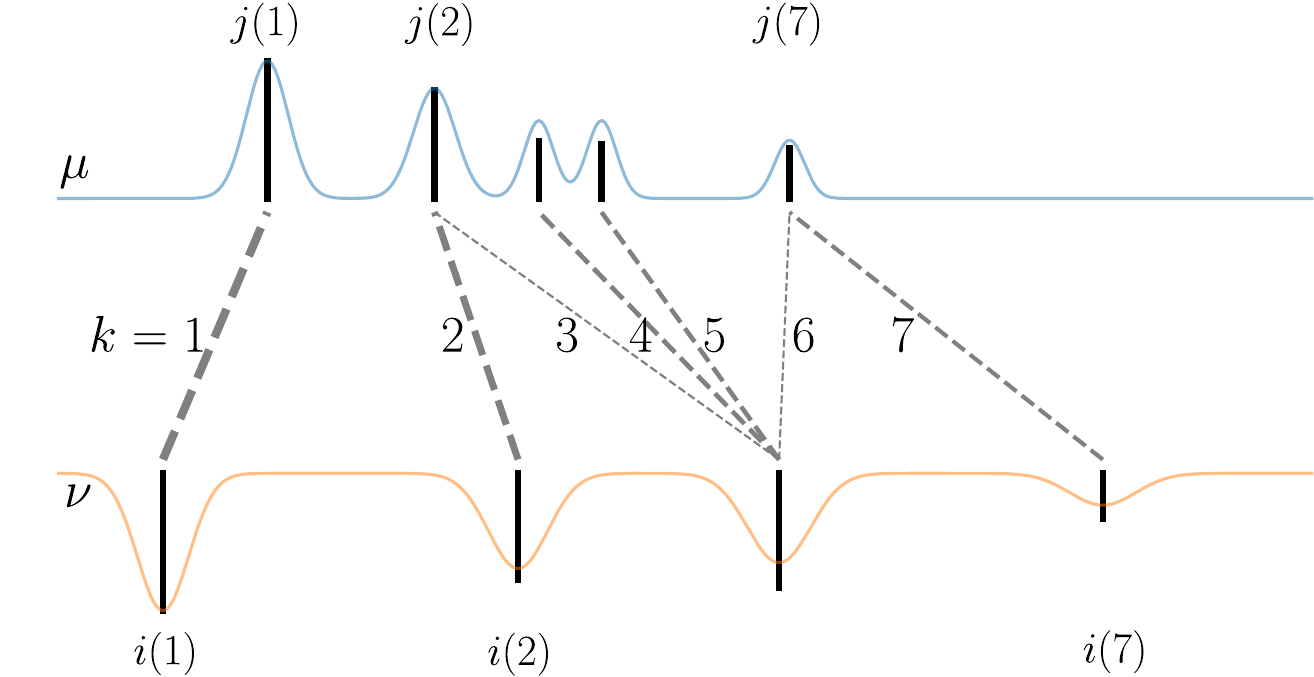}
    \caption{Edges corresponding to $ p^*_{ij}>0 $ do not cross.}
  \end{subfigure}%
  \caption
  {
    The North-west corner algorithm (\cref{monoFlowAlg}) and the monotonic trace produced, corresponding to monotonic parameterisations $i(k)$ and $j(k)$, or in the continuous case $x(t)$ and $y(t)$.
    As the name suggests, we start in the north-west corner of the transport plan, and at each step $k$, compare how much mass is left at each marginal index $i$ and $j$, transfer the most mutually available, 
    and increment the saturated index/indices ($i$, $j$, or both).
    Note, the horizontal stepping of the Kantorovich plan in the middle panels of \cref{displacement_interp_vis}, how a Monge function cannot do this, and how a function pair $(x(t), y(t))$ can (see \cref{trace_monotonicity}), as one can linger at the same position for some time, e.g. $i(k)=3$ for $k \in [3,6]$.
  }
  \end{figure}

\FloatBarrier
\subsubsection{Optimality for 1D sorted measures} \label{nwca_sec}

For $μ \in P (\mathbb{R})$, its signature may be sorted so that
\begin{equation} \label{support_ordering}
  \sigp[!]{μ}{1} <  \sigp[!]{μ}{2} \ldots <  \sigp[!]{μ}{N-1} <  \sigp[!]{μ}{N}
\end{equation}
noting that if $\sigp[!]{μ}{i} = \sigp[!]{μ}{i'}$, we can simplify $ \sigw{μ}{i} δ_{\sigp[!]{μ}{i}} + \sigw{μ}{j} δ_{\sigp[!]{μ}{i'}} = (\sigw{μ}{i} + \sigw{μ}{i'}) δ_{\sigp[!]{μ}{i}} $.

\begin{lemma} \label{edge_cond}
  Taking 1D $μ$ and $ν$ to be sorted without loss of generality,
  the NWCR is a necessary condition for a monotone plan.
\end{lemma}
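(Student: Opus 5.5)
We argue by contradiction, using the monotonicity property recorded in \cref{trace_monotonicity_coro} together with the marginal constraints. Throughout, $μ$ and $ν$ are sorted as in \cref{support_ordering}, so $x_μ^1 < x_μ^i$ for $i>1$ and $x_ν^1 < x_ν^j$ for $j>1$. Let $\mathbf{P}$ be any feasible plan ($\mathbf{P}\mathbf{1} = \sigw{μ}{}$ and $\mathbf{P}^T\mathbf{1} = \sigw{ν}{}$) that is monotone increasing in the sense of \cref{monotone_increasing_def}. The claim to prove is $p_{11} = \min(\sigw{μ}{1}, \sigw{ν}{1})$.

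The upper bound is immediate. Since $p_{11} \le \sum_j p_{1j} = \sigw{μ}{1}$ and $p_{11} \le \sum_i p_{i1} = \sigw{ν}{1}$, we have $p_{11} \le \min(\sigw{μ}{1}, \sigw{ν}{1})$.

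The main step is the reverse inequality. Suppose $p_{11} < \min(\sigw{μ}{1}, \sigw{ν}{1})$.
\begin{itemize}
\item Because $p_{11} < \sigw{μ}{1}$, the row slack is positive, so some $j>1$ has $p_{1j} > 0$.
\item Because $p_{11} < \sigw{ν}{1}$, the column slack is positive, so some $i>1$ has $p_{i1} > 0$.
\end{itemize}
Both $(x_μ^1, x_ν^j)$ and $(x_μ^i, x_ν^1)$ then lie in $\text{supp}(π)$. Applying monotonicity to this pair gives
\[
(x_μ^i - x_μ^1)(x_ν^1 - x_ν^j) \ge 0 .
\]
The first factor is strictly positive and the second strictly negative, because of the strict sorting. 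This is a contradiction. Equivalently, in index form, \cref{trace_monotonicity_coro} with $p_{1j} p_{i1} > 0$ and $1 \le i$ would force $j \le 1$, which contradicts $j>1$.

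The only delicate point is the use of strict ordering. It is what makes the product strictly negative, and it is why \cref{support_ordering} merges coincident support points beforehand. With that in place, the argument shows that any monotone feasible plan must fix the north-west entry exactly as the NWCR prescribes. Since the residual measures $μ^1, ν^1$ stay sorted, the same step can then be applied inductively to justify the whole of the NWCA.
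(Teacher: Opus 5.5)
Your proof is correct and follows the paper's argument: assume $p_{11}$ alone meets neither the first-row nor the first-column constraint, extract $p_{1j}>0$ and $p_{i1}>0$ with $i,j>1$, and contradict monotonicity (the paper does this in index form via \cref{trace_monotonicity_coro}). Your explicit upper bound $p_{11}\le\min(w_\mu^1,w_\nu^1)$ and your direct use of the strict sorting in the definition of monotonicity spell out what the paper's proof leaves implicit.
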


\begin{proof}
  Supposing this is not true, if neither the mass constraint at $\sigp[!]{μ}{1}$ or $\sigp[!]{ν}{1}$ is satisfied by $p_{11}$ alone then
  \begin{equation}
    s_μ^1 > 0
    \: \: \Rightarrow \: \:
    \exists p_{i1}>0
    \quad \text{and} \quad  
    s_ν^1 >0
    \: \: \Rightarrow \: \:
    \exists p_{1j'} >0
  \end{equation}
  with $i,j'>1$,
  but by monotonicity (recall \cref{trace_monotonicity_coro}), since  $p_{1j'} p_{i1} >0$ (where $1<i$) we must have $ j ' \leq 1 $, which contradicts $ 1 < j'$.
\end{proof}

\begin{corollary} \label{NWCA_monotonicity}
  If $\mathbf{P} (μ,ν)$ is to be monotone, $\mathbf{P} (μ^k, ν^k)$ must be monotone for all $k$, hence the application of the \gls{nwca} is necessary for $\mathbf{P} (μ,ν)$ to be monotone.
\end{corollary}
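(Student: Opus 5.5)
The plan is an induction on the number of NWCA steps, built on \cref{edge_cond} and on the block structure of $\mathbf{P}(μ,ν)$ given in the NWCA definition. Suppose $\mathbf{P}(μ,ν)$ is a monotone feasible plan, and assume the supports are sorted as in \cref{support_ordering}. By \cref{edge_cond}, $p_{11} = \min(\sigw{μ}{1}, \sigw{ν}{1})$, so at least one of the slacks $s_μ^1$, $s_ν^1$ vanishes. Consider the case $s_μ^1 = 0$; the other case is symmetric with rows and columns swapped. Since $\sum_j p_{1j} = \sigw{μ}{1} = p_{11}$, every other entry of the first row is zero. The plan therefore has the first form displayed in the NWCA definition. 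The remaining block is a nonnegative matrix, and its row and column sums are exactly the weights of $μ^1 = μ - p_{11}δ_{\sigp[!]{μ}{1}}$ and $ν^1 = ν - p_{11}δ_{\sigp[!]{ν}{1}}$. So this block is a feasible plan $\mathbf{P}(μ^1,ν^1)$ between these residual measures.

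The key observation is that monotonicity passes to this block. The monotonicity condition of \cref{monotone_increasing_def}, or the index form in \cref{trace_monotonicity_coro}, is a pairwise condition on points of the support. The support of $\mathbf{P}(μ^1,ν^1)$ is a subset of the support of $\mathbf{P}(μ,ν)$, with only the possible removal of $(\sigp[!]{μ}{1},\sigp[!]{ν}{1})$. Any pair in the smaller support is therefore also a pair in the larger one and satisfies the inequality. The residual measures still have sorted supports, after dropping a Dirac whose weight became zero, so \cref{edge_cond} applies again. It forces the next entry to be the north-west corner value of $\mathbf{P}(μ^1,ν^1)$, and that is exactly the value NWCA assigns at step two.

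Iterating gives by induction on $k$ that $\mathbf{P}(μ^k,ν^k)$ is monotone for every $k$, and that each step's entry coincides with the NWCR choice. Each step either removes a row or a column, and both are removed at the last step. So after at most $N+M-1$ steps the whole of $\mathbf{P}(μ,ν)$ has been determined entry by entry, and it equals the NWCA output. This is what ``necessary'' means here: any monotone plan must be the one NWCA produces.

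The main obstacle is bookkeeping rather than substance. Two points need care: re-indexing after a row or column is deleted, and the degenerate case where both slacks vanish simultaneously, so that both a row and a column are removed. In both situations I would check that the remaining block is still a feasible plan between sorted measures of total mass $1 - \sum$ of the assigned entries. \cref{edge_cond} should be read for measures of arbitrary equal total mass rather than only for probability measures; its proof never uses normalisation, so this is harmless.
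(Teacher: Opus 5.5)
Your proposal is correct and is essentially the paper's argument. The paper gives no separate proof; it treats the corollary as immediate from \cref{edge_cond} together with the recursive block structure in the NWCA definition. Your induction makes that reasoning explicit: the residual block inherits monotonicity because its support is a subset of the original, and \cref{edge_cond}, read for equal-mass rather than probability measures, then forces each successive north-west corner value, so any monotone plan must coincide with the NWCA output.
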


\begin{theorem}
  In 1D, application of the \gls{nwca} on sorted signatures produces an optimal map, provided the \gls{gc} satisfies $c(x,y)+c(x',y') \leq c(x,y')+ c(x',y)$ for all $\langle x'-x,y'-y \rangle \geq 0$.
\end{theorem}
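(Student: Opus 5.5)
The plan is to show two things in order: that some optimal plan is monotone increasing, and that the only monotone feasible plan between sorted signatures is the one produced by the \gls{nwca}. Together these give optimality of the \gls{nwca} output.

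\textbf{Step 1 (existence of an optimal plan).} The discrete problem is a linear program over the compact, nonempty polytope $\mathcal{U}(μ,ν)$. A minimiser $\mathbf{P}^*$ therefore exists.

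\textbf{Step 2 (uncrossing).} Among all minimisers, pick one that minimises the potential
\begin{equation*}
  Φ(\mathbf{P}) = \sum_{i,j} p_{ij} (i-j)^2 ,
\end{equation*}
or equivalently one that maximises $\sum_{i,j} p_{ij}\, i j$; this is a further linear objective over the compact optimal face, so such a $\mathbf{P}^*$ exists. Suppose $\mathbf{P}^*$ is not monotone. Then there are $p_{ij}>0$ and $p_{i'j'}>0$ with $i<i'$ and $j>j'$; by the sorting \cref{support_ordering} this is exactly a violation of \cref{monotone_increasing_def}.

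Let $ε=\min(p_{ij},p_{i'j'})$ and perform the swap
\begin{equation*}
  p_{ij}, p_{i'j'} \mapsto p_{ij}-ε,\ p_{i'j'}-ε, \qquad p_{ij'}, p_{i'j} \mapsto p_{ij'}+ε,\ p_{i'j}+ε .
\end{equation*}
This preserves all row and column sums, so the new plan stays in $\mathcal{U}(μ,ν)$. The pairs $(x^i,y^{j'})$ and $(x^{i'},y^{j})$ satisfy $\langle x^{i'}-x^i, y^{j}-y^{j'}\rangle \geq 0$. The hypothesis on the \gls{gc}, applied with $x=x^i$, $x'=x^{i'}$, $y=y^{j'}$, $y'=y^{j}$, then says the cost changes by
\begin{equation*}
  ε\bigl[c(x^i,y^{j'})+c(x^{i'},y^{j}) - c(x^i,y^{j}) - c(x^{i'},y^{j'})\bigr] \leq 0 .
\end{equation*}
So the new plan is still optimal. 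Meanwhile $\sum_{i,j} p_{ij}\, i j$ strictly increases, by $ε\,(i'-i)(j-j')>0$. This contradicts the choice of $\mathbf{P}^*$, so $\mathbf{P}^*$ is monotone.

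This is the step I expect to need most care. The swap may create new crossings elsewhere, so a naive "keep uncrossing" argument needs a termination guarantee. The extremal choice of $\mathbf{P}^*$ sidesteps this and avoids any iteration. If the extremal argument felt shaky, the fallback is to note that the support of a vertex solution has at most $N+M-1$ entries and argue termination by finiteness.

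\textbf{Step 3 (uniqueness via \gls{nwca}).} By \cref{edge_cond}, any monotone plan must satisfy the NWCR at its north-west corner. The residual plan $\mathbf{P}(μ^1,ν^1)$ is again a monotone plan between sorted measures, so by \cref{NWCA_monotonicity} and induction on the number of unsatisfied constraints (at most $N+M$), every entry of a monotone plan is forced to equal the value the \gls{nwca} assigns. Hence the monotone feasible plan is unique and equals the \gls{nwca} output.

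Combining Steps 2 and 3, the \gls{nwca} output coincides with the optimal monotone $\mathbf{P}^*$, and is therefore optimal. For the Euclidean-type costs $c=|x-y|^p$ with $p\geq 1$, the required submodularity inequality is standard and can be cited rather than reproved.
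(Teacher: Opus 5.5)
Your proof is correct, and it takes a different, self-contained route from the paper's. The paper's proof is two lines. It cites Proposition 2.5 of \cite{monoOTrieger} for uniqueness of the monotone increasing plan with given 1D marginals. It then simply asserts that optimality requires monotonicity, so the monotone plan (which \cref{NWCA_monotonicity} identifies with the \gls{nwca} output) is ``the unique optimal solution''.

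You instead prove both ingredients yourself:
\begin{enumerate}
  \item You show that a monotone optimal plan exists, by an exchange argument that actually uses the stated cost inequality. Taking the maximiser of $\sum_{i,j} p_{ij}\, ij$ on the optimal face turns this into a one-shot contradiction, so no termination argument is needed and your vertex-support fallback is unnecessary.
  \item You derive uniqueness of the monotone plan from \cref{edge_cond} by induction, rather than from the external citation.
\end{enumerate}

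Your route buys rigour at the one point where the paper is loose. Under the non-strict hypothesis, not every optimal plan is monotone. For example, take $c(x,y)=|x-y|$, $μ=\tfrac12δ_0+\tfrac12δ_1$ and $ν=\tfrac12δ_2+\tfrac12δ_3$: the crossing plan costs exactly as much as the monotone one. So ``we require a monotone plan for optimality'', and the claimed uniqueness of the optimum, fail in general. Your conclusion, that the \gls{nwca} plan is \emph{an} optimal plan, is exactly what the hypothesis supports.

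The paper's route is shorter, and with a strict inequality it would correctly give uniqueness of the optimum. However, it leaves the role of the cost hypothesis implicit.
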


\begin{proof}
  As by Proposition 2.5 of \cite{monoOTrieger}, there is only one monotone increasing transport plan with marginals $μ,ν$ in 1D and as we require a monotonic increasing plan for optimality, we have the unique optimal solution.
\end{proof}

\subsection{Optimal distance approximation quantisers} \label{proxy_patterns}

Though the computation of a complex distance calculation might scale badly in the size of the original patterns compared, we may be able to economically approximate this via a much smaller but nearby quantised counterparts.
In other words, there may exist much smaller but similar patterns that are much faster to compare.
For instance the `signatures' used in \cite{Rubner00}, generated via k-means clustering from some original distribution (which in practice will also be discrete but larger).

It is well known that k-means clustering in a distribution is equivalent to looking for its $W_2$ optimal k point quantiser \cite{Lloyd82, Pollard82, canas2012learning} (see \cref{W2quant} for a quick justification).
For the \gls{ot} case this can be thought of as approximating the cost of delivering some goods between two sets of consumers, by the cost of the transport between major warehouses/hubs, where so long as the hubs are appropriately placed, this will represent the bulk of the overall transport (scale separation \cite{Shirdhonkar08,ggggg}).
\begin{figure}[htbp]
\center
\includegraphics[width=\textwidth]{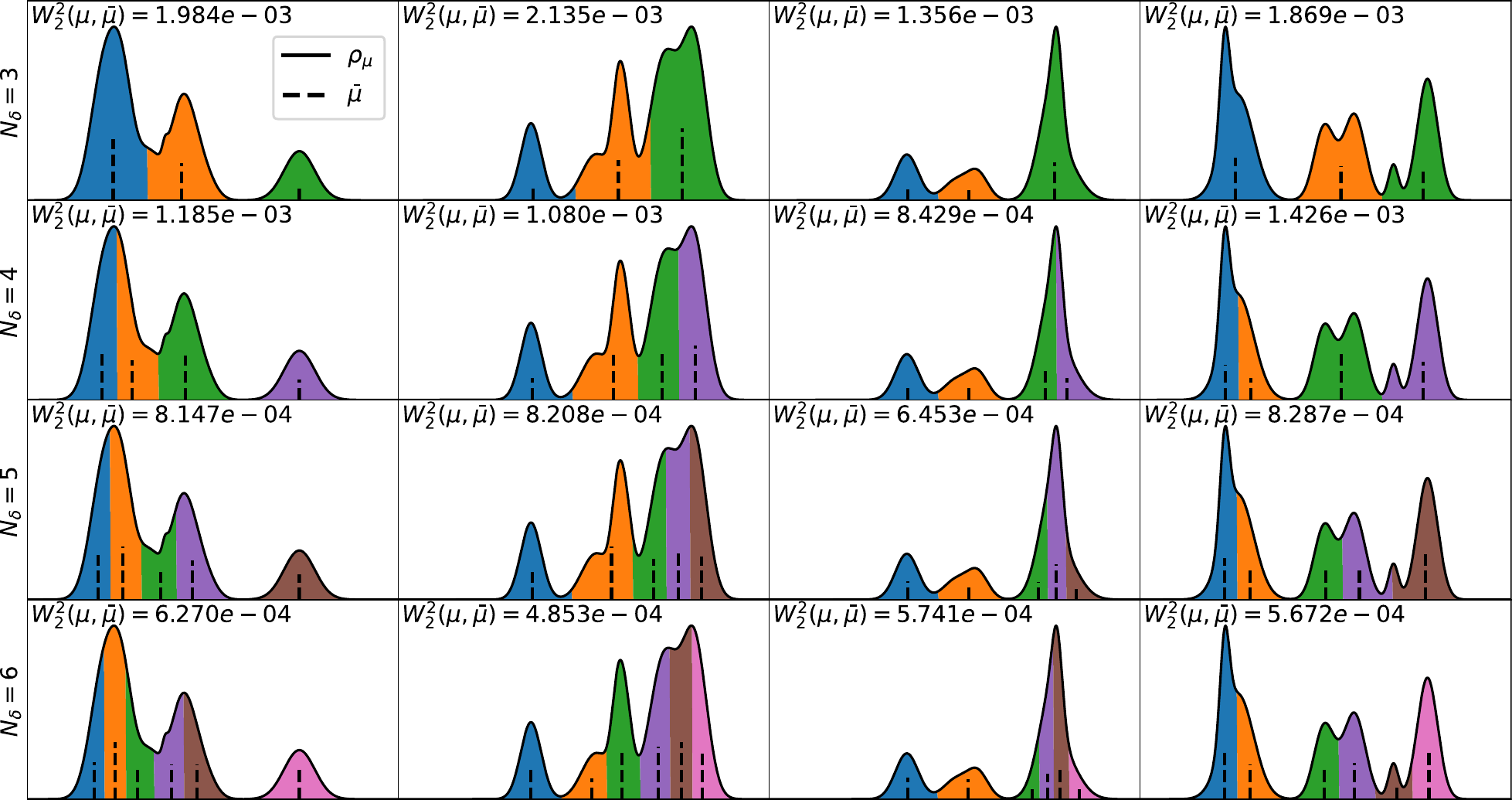}
\caption{
  Quantisation of various continuous measures $μ$ with density $ρ_μ$ (one per column), into a discrete measure $\bar{μ}$ of various numbers of Diracs $N_δ$ (one per row).
  The colored region surrounding a dashed stick denotes the continuous area that is condensed into that stick, with taller sticks (larger weight) having absorbed larger areas.
  With lower quantisation cost (higher $N_δ$), we of course have better proxies for estimating the Wasserstein (p=2) distance between different $μ$s.
}
\end{figure}

\begin{figure}[htbp]
\centering
\includegraphics[width=0.4\textwidth]{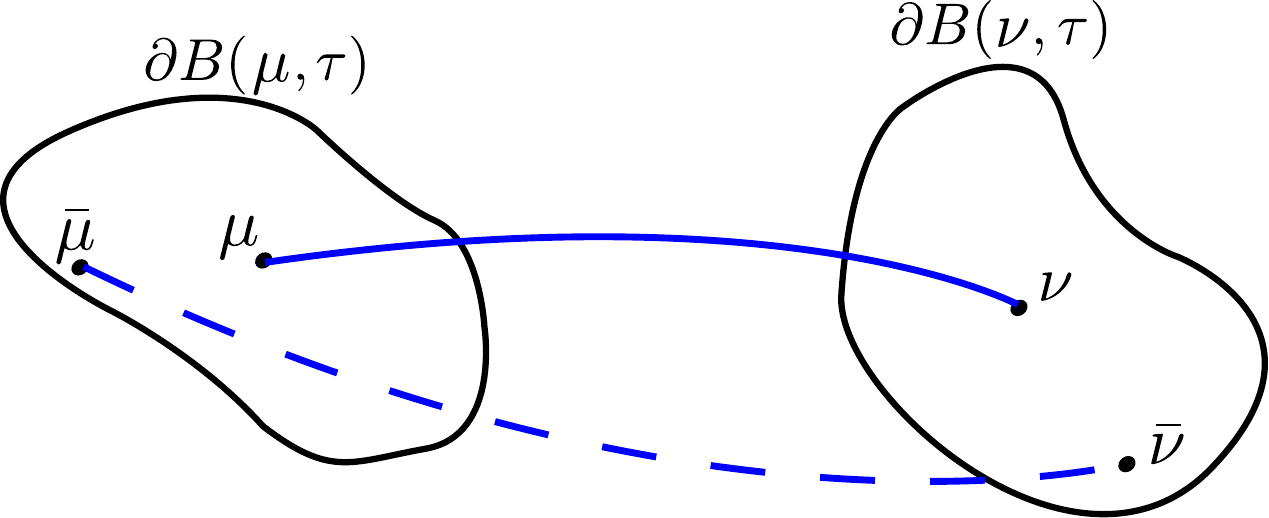}
\caption{
  Approximation of $d(μ, ν)$ through $d(\bar{μ}, \bar{ν})$ where
  $\bar{μ},\bar{ν}$ are within balls of radius $τ$ (solid black boundary), where
  $d(\bar{μ}, \bar{ν}) \rightarrow d(μ, ν)$ as $τ \rightarrow 0$ by the triangle inequality.
  For a geodesic distance, the distance $d(μ, ν)$ is the length of the geodesic connecting $μ$ and $ν$ (solid blue line, while dashed connects $\bar{μ}$ and $\bar{ν}$).
}
\end{figure}

Let the closest quantised pattern of at most size $N$ be
\begin{cgather}
\bar{μ}_N = \argmin_{m \in \mathcal{P}} d_\mathcal{P} (μ, m) \\
 \text{size} (m) \leq N,
\end{cgather}where 
for a space of measures we can take $\text{size} (m) = \text{supp} (m)$.
So long as $ \lim_{N \to \infty} d_\mathcal{P} (\bar{μ}_N, μ) = 0 $, then $ \exists N'$ for which
\begin{equation} \label{proxy_triangle_ineq}
\vert d_\mathcal{P} (μ, ν) - d_\mathcal{P} (\bar{μ}_{N'}, \bar{ν}_{N'}) \vert \leq d_\mathcal{P} (\bar{μ}_{N'}, μ) + d_\mathcal{P} (\bar{ν}_{N'}, ν) < 2 τ,
\end{equation}
from the triangle inequality.
Additionally, with monotonically improving proxies $ d_\mathcal{P} (\bar{μ}_{N+1}, μ) < d_\mathcal{P} (\bar{μ}_N, μ) $, the distance approximations also improve monotonically.
Appendix \cref{W2quantmono} shows that these conditions do indeed hold for $W_2$.

\begin{figure}[htbp]
  \centering
  \includegraphics[width=0.7\textwidth]{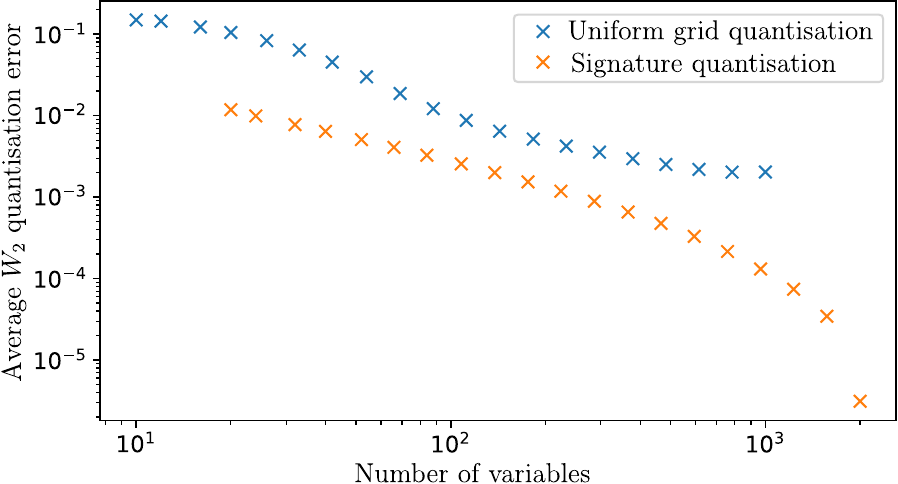}
  \caption{
    Average error (evaluated over 1000 random \gls{gmm} pattern pairs) vs variables required for two quantisation choices.  
    \glspl{gmm} are generated as in \cref{GMM_dataset_sec}, their density is initially sampled on a uniform grid of 1024 points.
    Signatures require twice the amount of variables for each grid size as we are specifying positions as well as weights (plot is adjusted accordingly).
    The signature quantisation is achieved through hierarchical agglomerative centroid-linkage clustering.
  } \label{fig:sig_quant}
\end{figure}

When using this approximation along with our unmixing framework, a parallel may be noted with multilevel clustering \cite{ho2017multilevel}, as we are quantising/clustering both between and within our patterns.
Though of course in our case the varied basis distribution must also be learned and our discrete measures have weighted atoms.

In practice, hierarchical agglomerative centroid-linkage provides good proxies, while allowing a quantiser to be built iteratively.
Intuitively, similarity to k-means can be understood from noting that repeatedly merging Diracs that cause the smallest change in $W_2$ is likely to merge measures from the same Voronoi partition/cluster, when all points from a cluster are merged, we obtain a hierarchical approximation of the $W_2$ average of that cluster (see \cref{W2quant}).
The merging continues until a criterion is met, such as target size or a target distance error (resulting in signatures of various sizes per input measure).
\cref{fig:sig_quant} suggests that one can represent a distribution more accurately even with half as many ``sticks'' (Diracs) when compared to a uniform grid downsampling,
such that there is better value in controlling positions as well as weights (compared to just a greater number of weights).
In the context of the anomaly detection linear program, marginals of half the size results in transport plans of a quarter the size, which may result in solve time speed up factors of $16 - 64$ based on $\mathcal{O}(n^2) - \mathcal{O}(n^3)$ scaling for solving a naive linear program (while using more accurate marginals).

\FloatBarrier
\section{Experimental evaluation} \label{results_sec}

To demonstrate the flexibility of our method, we performed three experiments,
each representing a different mixing context.
In \cref{GMM_dataset_sec}, we explore blind unmixing performance across $1200$ randomly generated large datasets, featuring various levels of basis variation.
The independent variation of each peak represents a more extreme version of what might otherwise occur in \gls{pxrd} patterns.
In \cref{PXRD_data_sec}, we explore non-blind unmixing within a dataset corresponding to a more realistic, physically informed, simulation of \gls{pxrd} patterns. 
This features a large basis, with some basis patterns being hard to distinguish after variation.
In \cref{hyperspectral_data}, we explore blind unmixing across three smaller laboratory hyperspectral imaging datasets corresponding to different physical mixing scenarios, each with different resulting mixing models.

We compare against two other general\footnote{General in that they do not make any assumptions corresponding to a particular discipline or kind of spectra.} unmixing methods.
These are
\gls{nmf} and \gls{onmf} (with Kullbach-Leibler and Frobenius divergence based costs respectively).
During blind unmixing, each method is tested for a constant $N_\text{iter}=60$.
In our visualisations, our derived signatures will be represented via kernel density estimation\footnote{If our signatures are taken to approximate continuous objects, this process approximates them by spreading out mass that was previously concentrated during discretisation.}, using a Gaussian kernel of specified width.

As it is standard to provide \gls{rmse} between ground truth and inferred quantities to measure performance, 
we will do so as well, wherever possible, however, as we are working with signatures, in order to measure these we must first approximate densities and evaluate them on a common grid.
In such a case $\mathbf{B}$ refers to this further approximation, while $\up{B}$ is a tuple of measures as usual.

\begin{equation}
  \text{RMSE} (\mathbf{B}, \mathbf{B}') = \sqrt{ \frac{1}{N M} \sum_{i=1}^{N} \sum_{j=1}^{M} (b_{ij} - b_{ij}')^2 }
  \label{eq:rmse}
\end{equation}

To facilitate a more direct comparison, we will define a dissimilarity based a shift invariant modification of $W_2$, which we will refer to as ``centered Wasserstein''
\begin{cgather}
  \overline{W}_2 (μ, ν) = W_2 (T_\#^{-μ_b} μ , T_\#^{-ν_b} ν ) \\
  T^{Δ x}(x) = x + Δx
  \quad \text{with} \quad
  μ_b = \int x d μ (x) .
\end{cgather}

When ground truth is available we will use the following metrics to gauge performance, ground truth quantities will be decorated with a ``t'' subscript. 
\begin{enumerate}

  \item Inter Basis Centered Wasserstein (IBCW)
  \begin{equation}
  \text{IBCW} ( \up{B}, \up{B}' ) = \frac{1}{K} \sum_{i=1}^K \overline{W}_2 (b^i, b'^i)
  \end{equation}
  We can similarly define inter component and basis-component variants, though these seemed to follow similar trends in our experiments.

  \item Average absolute difference in percentage abundances (AADPA).
  \begin{equation}
  \frac{100}{K M} \sum_{i=1}^K \sum_{j=1}^M \vert a_i^j - {(a_t)}_i^j \vert
  \end{equation}
  Though the \gls{rmse} between abundances is perfectly appropriate, this is offered as a more interpretable alternative.

\end{enumerate}

As a standard \gls{onmf} implementation can only shift basis patterns to the right, the learned basis patterns will be offset, we center these in our plots and for \gls{rmse} calculations.
Also note that with discrete convolutions we can only broaden and not thin patterns, so the basis patterns learned correspond to thinnest, rather than average variations.

\subsection{Random GMM datasets} \label{GMM_dataset_sec}

Suppose our patterns are 1D \gls{gmm}'s. 
We express our random measures with respect to the Gaussian density $ g (x ; μ, σ) $ for mean $μ$ and standard deviation $σ$
\begin{equation}
  ρ_{b^i} (x)  = \sum_{k=1}^{N_G} I^i_k g ( x ; {μ^i_k, σ^i_k} ) ,
\end{equation}
where a continuous measure $ν$ can be related to its density via $ν (S) = \int_S ρ_{ν} (x) dx$ for $S \subset \mathbb{R}$.
This distribution is itself randomly drawn by randomly selecting 
\begin{equation}
I^i_k \sim \vert \mathcal{N} (1, 0.25) \vert ,
\quad
μ^i_k \sim U (0.2, 0.8)
\quad \text{and} \quad
σ^i_k \sim 0.02 \cdot 2^{\mathcal{N} (0, 0.5)} .
\end{equation}
Peak intensity is denoted by $I$ (where we must normalise $ \sum_{k=1}^{N_G} I_k^i = 1 $), location by $μ$ and breadth by $σ$.
Variation is then induced by subsequently perturbing the parameters of each peak $k$ away from those of the basis patterns, independently, per each warped basis $j$, per each mixture $i$
\begin{equation}
  ρ_{\widetilde{b}^{ij}} (x) = \sum_{k=0}^{N_G} I^i_k g ( x ; {μ^{ij}_k, σ^{ij}_k} ) 
\end{equation}
where
\begin{equation}
μ^{ij}_k = μ^{i}_k + Δ μ^{ij}_k ,
\quad
σ^{ij}_k = σ^i_k \cdot γ^{ij}_k .
\end{equation}
with
\begin{equation}
Δ μ^{ij}_k \sim \mathcal{N} (0, σ_{Δ μ})%
\quad \text{and} \quad
γ^{ij}_k \sim 2^{\mathcal{N} (0, σ_p)} . %
\end{equation}

The exponential is used as we would like a broadening such as $σ=σ\times 2$ to be roughly as likely as thinning by the same factor $σ=σ/2$.
Mixed examples can then be created by combining $ c^j = \sum_{i=1}^{K} h^j_i \widetilde{b}^{ij} $ where $ \mathbf{h}^j $ is drawn from a uniform distribution on $ S^{K-1} $. 
We perform our assessment across three different basis variation strengths summarised in~\cref{basis_variation_GMM}.

\begin{table}[htbp]
    \begin{tabular}{ | c | c | c | c | c |}

    \hline
    Preset & $ σ_{Δ μ} $ & $ σ_p $ & 95\% of $Δ μ^{ij}_k$ are in & 95\% of $σ^{ij}_k$ are in  \\
    \hline
    Low & 0.015 & 0.15 & $[-0.03, +0.03]$   & $~ [0.81 σ^i_k, 1.23 σ^i_k]$ \\
    Medium & 0.025 & 0.3 & $[-0.05, +0.05]$ & $~ [0.66 σ^i_k, 1.52 σ^i_k]$ \\
    High & 0.035 & 0.5 & $[-0.07, +0.07]$   & $~ [0.50 σ^i_k, 2.00 σ^i_k]$ \\
\hline
\end{tabular} 
\caption{Basis variation strength presets used, where two variations of the same peak can be found as far as 14\% of the domain width apart.}\label{basis_variation_GMM}
\end{table}

\begin{figure}[htbp]
  \begin{subfigure}{0.99\textwidth}
    \centering
    \includegraphics[width=.99\linewidth]{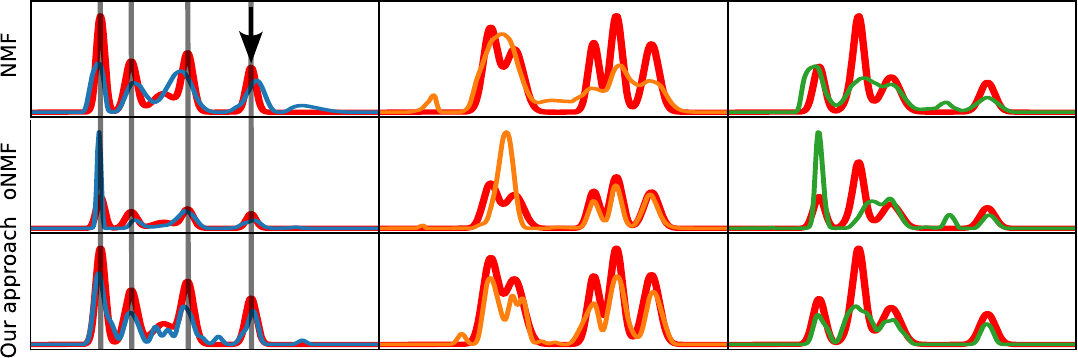}
    \caption{
      The basis attained with each decomposition method tested, one method per row.
      Each column corresponds to one basis pattern, which also has an associated color (e.g. $b^1$ in blue),
      while the ground truth basis patterns are overlayed in red.
    }
  \end{subfigure}

  \begin{subfigure}{0.99\textwidth}
    \centering
    \includegraphics[width=.99\linewidth]{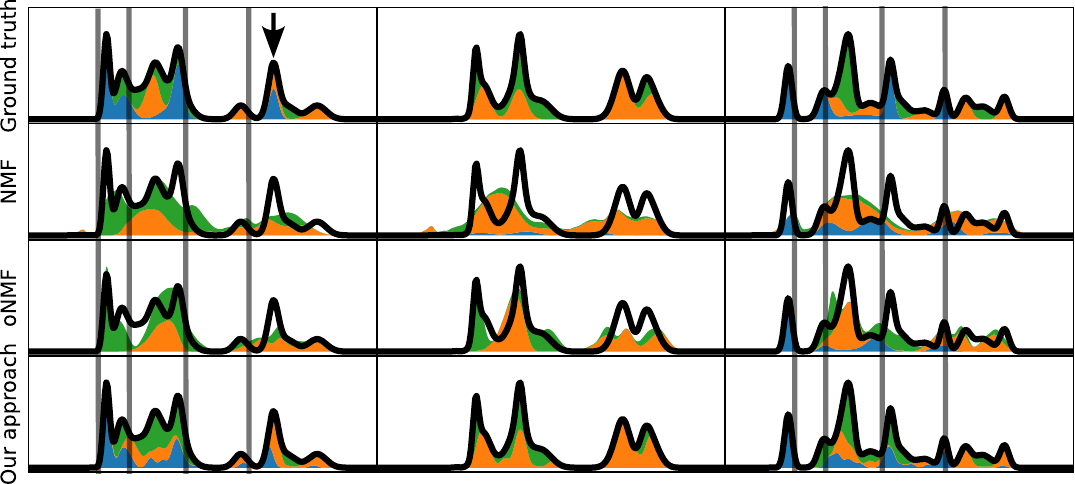}
    \caption{
      Mixed datapoints $c^j$ (black line), one per column, as a sum of transformed bases $\widetilde{b}^{ij}$ (area with coloring corresponding to $b^i$), according to ground truth, and the tested methods, one per row.
    }
  \end{subfigure}
\caption{
  Visual comparison of decompositions of one randomly generated GMM dataset (high variation preset), according to each tested method.
  Signatures are visualised by smoothing with a Gaussian kernel with $\sigma_\text{KDE} = 0.004$.
  Note that the mixture in the first column of (b), has a component of the first basis (blue) in which some peaks are found further to the left and others are found further to the right, when compared to the left column of (a).
  Also, note that the mixture in the third column of (b), has a component with a different set of non-uniform shifts.
  Therefore, for a decomposition method to be able to explain both of these mixtures with respect to the same basis, 
  it must have the ability to perturb each peak independently.
} \label{GMM_example}
\end{figure}

\begin{table}[hbt]
    \centering
    \small
    \begin{tabular}{ | c | c | c | c | c | c | }

    \hline
    \multicolumn{2}{| c |}{} & \multicolumn{4}{c |}{Metric} \\
    \hline
    Variability & Approach & AADPA$ ( \mathbf{A}, \mathbf{A}_t ) $ & RMSE$ ( \mathbf{A}, \mathbf{A}_t ) $ & RMSE$ ( \mathbf{B}, \mathbf{B}_{t} ) $ & IBCW$( \up{B},  \up{B}_t) $ \\
    & & & $\times 100$ & $\times 1000$ & $\times 100$ \\
    \hline 

    & \gls{nmf} & $5.783 \pm 2.469$ & $7.909 \pm 3.421$ & $1.742 \pm 0.569$ & $\mathbf{1.294} \pm 0.602$ \\
Low & \gls{onmf} & $8.300 \pm 2.921$ & $10.882 \pm 3.722$ & $4.611 \pm 1.454$ & $3.559 \pm 1.507$ \\
    & Our approach & $\mathbf{4.274} \pm 1.973$ & $\mathbf{5.544} \pm 2.757$ & $\mathbf{1.668} \pm 0.485$ & $1.756 \pm 0.894$ \\
\hline
       & \gls{nmf} & $10.247 \pm 3.800$ & $14.142 \pm 5.120$ & $3.130 \pm 0.926$ & $2.262 \pm 1.056$ \\
Medium & \gls{onmf} & $11.955 \pm 3.134$ & $15.539 \pm 3.846$ & $7.138 \pm 1.710$ & $5.220 \pm 1.771$ \\
       & Our approach & $\mathbf{6.019} \pm 2.064$ & $\mathbf{8.154} \pm 2.932$ & $\mathbf{2.574} \pm 0.587$ & $\mathbf{1.837} \pm 0.853$ \\
\hline
     & \gls{nmf} & $14.167 \pm 4.023$ & $19.276 \pm 5.180$ & $4.271 \pm 1.010$ & $3.299 \pm 1.243$ \\
High & \gls{onmf} & $14.413 \pm 2.928$ & $18.743 \pm 3.550$ & $8.723 \pm 1.636$ & $5.769 \pm 1.589$ \\
     & Our approach & $\mathbf{8.371} \pm 2.493$ & $\mathbf{11.509} \pm 3.451$ & $\mathbf{3.625} \pm 0.732$ & $\mathbf{2.154} \pm 0.800$ \\

\hline
\end{tabular}
\caption{Mean decomposition metrics over 400 random \gls{gmm} datasets per variability preset.
         These were generated with $ K=3 $ and $M=800$, where the density of each $c^j$ is initially sampled uniformly with $N=256$ then discretised with $N_δ=64$.
         The best of eight decompositions with different initialisation (as measured by final cost), is taken for each dataset/method.
         The ground truth basis ($\up{B}_t$) is also similarly sampled (producing $\mathbf{B}_{t}$), as is the kde density of the inferred basis signature for basis \glspl{rmse}.
         The \gls{onmf} is performed with a maximum shift of 25 gridpoints and the inferred basis patterns are centered by shifting for 13 gridpoints before measuring RMSE$ ( \mathbf{B}, \mathbf{B}_{t} ) $.
        } \label{GMM_trials_stats}
\end{table}

We can see from \cref{GMM_trials_stats} that sometimes the \gls{onmf} model can perform worse than \gls{nmf} in extracting the information we are after, this is despite the fact that our \gls{onmf} solutions are initialised with \gls{nmf} solutions.
Even if the overall fit can improve, 
the accuracy of inferred constituents can simultaneously suffer (see \cref{GMM_example}).
In fact, the more flexible a composition model, the more trivially a profile can be fit, regardless of basis. 
For instance, if one were to use \gls{ot} as a basis variation model $ϕ(T;b) = T_\# b$, 
one could arbitrarily warp any given basis to perfectly fit any mixture,
not giving us any indication as to weather one basis is more appropriate over another.
This supports our original reasoning for defining a model driven by a non-reconstruction cost.

\FloatBarrier
\subsection{PXRD unmixing} \label{PXRD_data_sec}

As an indicator of suitability to \gls{pxrd} patterns, 
we can generate a benchmarking dataset with the Synthetic instance generator \cite{synthetic_instance_gen}, which provides a more physically informed model of such data (compared to \glspl{gmm}).
Similarly to the previous, this too features non-uniform shifting and broadening.
The default preset of this generator (the AlLiFe system) features a relatively large basis (6 patterns, corresponding to pure phases).
As we found that none of the methods we compare here could \textit{consistently} blindly discover all 6 of the basis patterns, 
we assess separation/abundance estimation in the non-blind (see \cref{variants_table}) regime here, the performance in this task is summarised in \cref{pxrd_table} and illustrated in \cref{PXRD_preview}.

\begin{figure}[htbp]
  \begin{subfigure}{0.99\textwidth}
    \centering
    \includegraphics[width=.99\linewidth]{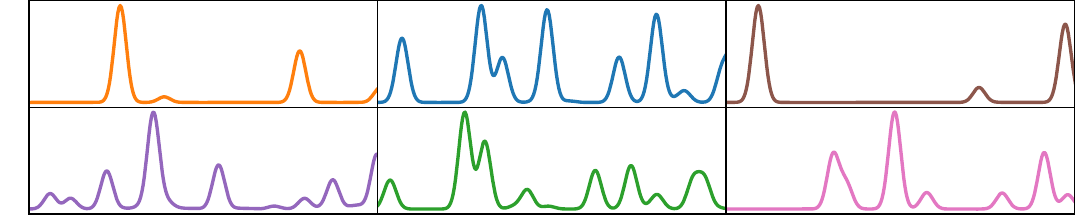}
    \caption{
      The 6 ground truth basis patterns.
    } \label{PXRD_preview_a}
  \end{subfigure}

  \begin{subfigure}{0.99\textwidth}
    \centering
    \includegraphics[width=.99\linewidth]{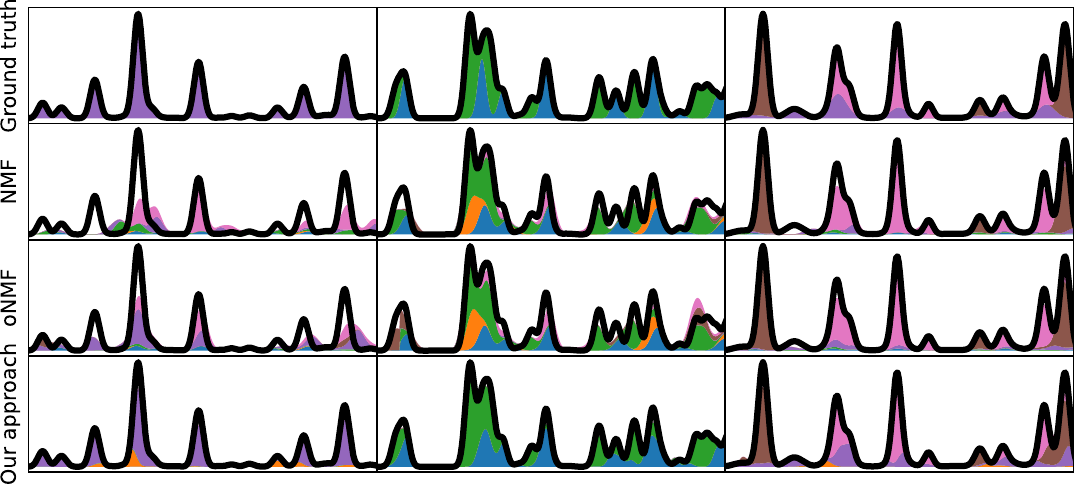}
    \caption{
      Mixed datapoints $c^j$ (black line), one per column, as a sum of transformed bases $\widetilde{b}^{ij}$ (area with coloring corresponding to $b^i$), according to ground truth, and the tested methods, one per row.
    } \label{PXRD_preview_b}
  \end{subfigure}
    \caption{
      Visual comparison of basis separation (non-blind unmixing) according to the tested methods (one per row), 
      of a Synthetic instance generator dataset with $K=6$, $m=M=1275$, $N_δ = 64$. 
      To visualise signatures, $\sigma_\text{KDE} = 0.188$ has been used, $x$ domain ($2θ$) has been truncated to $[16, 45]$ in these plots.
   } \label{PXRD_preview}
\end{figure}

\begin{table}[hbt]
  \begin{tabular}{ | c | c | c | }

  \hline
  \multicolumn{1}{| c |}{} & \multicolumn{2}{c |}{Metric} \\
  \hline
Approach & AADPA$ ( \mathbf{A}, \mathbf{A}_t ) $ & RMSE$ ( \mathbf{A}, \mathbf{A}_t ) $ \\

  \hline 

  \gls{nmf}    & 10.467 & 0.203  \\
  \gls{onmf}   &  6.825 & 0.118 \\
  Our approach &  \textbf{3.115} & \textbf{0.055} \\

\hline
\end{tabular}
\caption{Abundance derivation metrics for the compared approaches (in the separation-only/fixed-basis regime), over the synthetic instance generator dataset.} \label{pxrd_table}
\end{table}

\cref{pxrd_table} shows significantly lower error in abundance prediction for our approach.
\cref{PXRD_preview} again demonstrates the merit of a non-reconstruction based cost,
in that, though the sums of the components come close to matching the profile of the mixture for \gls{nmf} and \gls{onmf} in the middle column of \cref{PXRD_preview_b}, 
they are not able to correctly relate features of the mixture with features from the basis patterns, 
and therefore wrongly partition the area under a mixture, 
and derive less accurate varied bases and abundances.
In the right column of \cref{PXRD_preview_b}, 
where the purple pattern cannot be adequately varied by \gls{nmf} and \gls{onmf},
they over assign mass to the similar, pink pattern instead.

\FloatBarrier
\subsection{Hyperspectral imaging} \label{hyperspectral_data}

This dataset \cite{zhao2019laboratorycreated} contains 3 scenes containing hyperspectral images of colored powders prepared in a laboratory in known amounts to showcase some different mixing scenarios in \gls{hi}, these are as follows.

\textbf{Scene 1 (Linear)}, 3 distinct powders are arranged in a grid of squares smaller than that of the hyperspectral instruments pixels, the subpixel pattern is repeated periodically.
A light source points directly at the powders, light generally travels from source, to a single powder (without significant scattering), to a hyperspectral pixel, though rays passing by each of the subpixel squares can make it to the same pixel, adding to the measurement independently.

\textbf{Scene 2 (Intimate)}, 4 distinct powders are mixed uniformly, the same ray of light can scatter off of multiple differently colored grains before reaching the sensor, as such this contains second-order reflections (and possibly higher) involving every combination of basis patterns.

\textbf{Scene 3 (Limited bilinear)}, 3 distinct powders are mixed as in the first case, though now the light source points at an adjacent plastic board, such that most rays will go from source, to board, to a subpixel square, to a pixel. This contains second-order reflections of the board with each of the first three basis patterns.

\begin{figure}[htbp]
  \begin{subfigure}{0.99\textwidth}
    \centering
  \includegraphics[width=.99\linewidth]{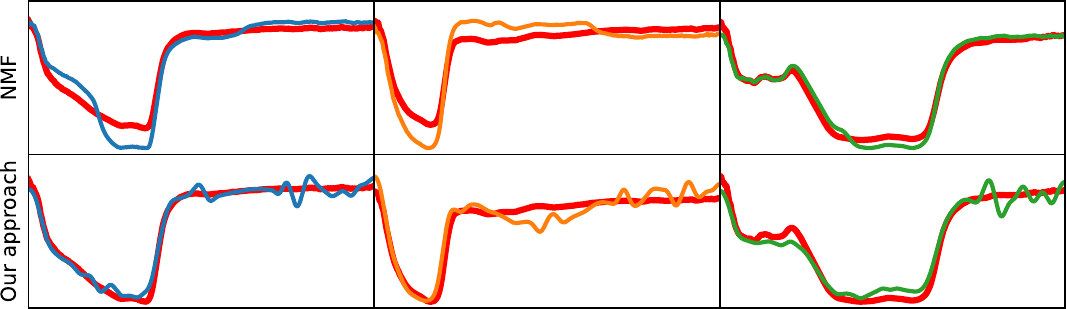}
    \caption{
      The basis attained with \gls{nmf} vs our method.
      Each column corresponds to one basis pattern, which also has an associated color (e.g. $b^1$ in blue),
      while the ground truth basis patterns are overlayed in red.
    }
  \end{subfigure}

  \begin{subfigure}{0.99\textwidth}
    \centering
  \includegraphics[width=.99\linewidth]{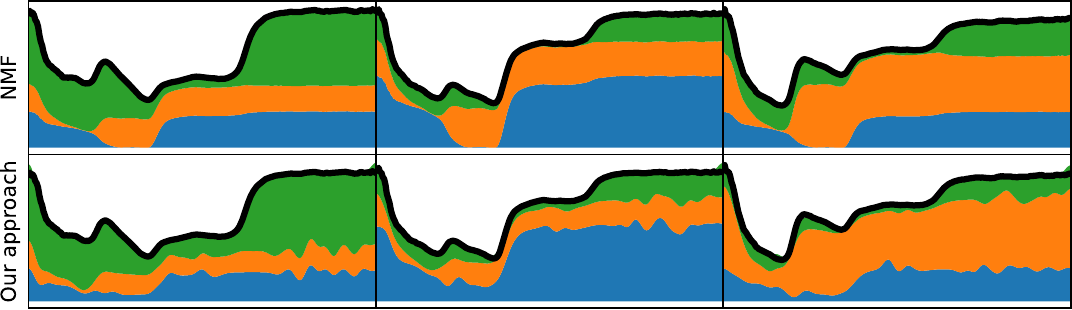}
    \caption{
      Mixed datapoints $c^j$ (black line), one per column, as a sum of transformed bases $\widetilde{b}^{ij}$ (area with coloring corresponding to $b^i$), according to \gls{nmf} in the top row and our approach in the bottom row.
    }
  \end{subfigure}
\caption{
  Visual comparison of decompositions of scene 1.
  To visualise signatures, $\sigma_\text{KDE} = 7.53$ has been used.
} \label{hi_scene2_vis}
\end{figure}

\begin{table}[hbt]
    \centering
    \small
    \begin{tabular}{ | c | c | c | c | c | c | }

    \hline
    \multicolumn{2}{| c |}{} & \multicolumn{4}{c |}{Metric} \\
    \hline
    Scene & Approach & AADPA$ ( \mathbf{A}, \mathbf{A}_t ) $ & RMSE$ ( \mathbf{A}, \mathbf{A}_t ) $ & RMSE$ ( \mathbf{B}, \mathbf{B}_{t} ) $ & IBCW$( \up{B},  \up{B}_t) $ \\
    & & & $\times 10$ & $\times 1000$ &  \\
    \hline 

    1 & \gls{nmf} & $4.561$ & $\mathbf{0.549}$ & $2.072$ & $45.269$ \\
      & Our approach & $\mathbf{4.437}$ & $0.595$ & $\mathbf{1.947}$ & $\mathbf{41.683}$ \\
\hline
    2 & \gls{nmf} & $11.697$ & $1.486$ & $1.830$ & $\mathbf{35.212}$ \\
      & Our approach & $\mathbf{9.787}$ & $\mathbf{1.384}$ & $\mathbf{1.648}$ & $40.198$ \\
\hline
    3 & \gls{nmf} & $12.788$ & $1.602$ & $2.458$ & $49.157$ \\
      & Our approach & $\mathbf{3.065}$ & $\mathbf{0.459}$ & $\mathbf{1.985}$ & $\mathbf{40.403}$ \\

\hline
\end{tabular}
\caption{
Performance summary of \gls{nmf} vs our method (in the fully blind regime), on a \gls{hi} dataset.
The best of 16 decompositions with different initial starts (as measured by final cost), is taken for each scene/method.
\gls{onmf} has been omitted in this instance due to instability.
} \label{hi_perf_table}
\end{table}

Each scene contains a number of images (10, 14 and 8 respectively), 
where each image corresponds to different mixing ratios.
While each pixel in each image could contain unique basis variation, due to the previously mentioned redundancy, we treat the Euclidean average\footnote{The Euclidean average is used simply to avoid resampling for NMF, which requires the inputs to be on the same grid.} pixel of each mixed image as our dataset.
Each pixel is a vector with 256 entries representing intensity at each point in the linearly spaced spectral band from $400$ to $1000$nm, we generate signatures with $N_\delta = 128$ for our method.
For each scene we perform fully blind unmixing.

Though a reconstruction based method might have otherwise used any additional degrees of freedom in order to try to provide a closer fit, injecting unnecessary variation (at the expense of ground truth recovery), scene 1 shows (see \cref{hi_perf_table}) that our method does not suffer from this issue (as we minimise basis variation).
Thus we provide similar decompositions in purely linear cases, as the appropriately constrained model, while having flexibility should more complex cases show up.

In cases featuring bilinear contributions $ κ_{\mathbf{A}^j}^\text{bin} \up{B} = \sum_{p,q} a_{pq}^j {b}^p \odot {b}^q $, 
we can still make progress with a linear combination function,
noting that our method has the flexibility to implicitly construct basis variations $\widetilde{b}^{ij}$ that are themselves some combination of the first-order and the second-order reflections of $b^i$ (with each of the other basis patterns).
The more second-order contributions in $c^j$, the more $\widetilde{b}^{ij}$ deviates from $b^i$.
Our approach somewhat improves on a linear decomposition on scene 2, but improves greatly on a linear decomposition of scene 3 where the presence of second-order reflections is more limited, but still seemingly significant.

\FloatBarrier
\section{Conclusion} \label{conclusion}

Many scientific disciplines work with ``mixed data,'' 
where each measurement results from a combination of underlying basis patterns that can themselves vary between samples (see \cref{basis_var_fig}).
Prevailing unmixing methods largely derive from a specific choice of variation function,
meaning that datasets that feature different forms of variation likely need to be processed with different specific methods. 
If no such method is available, significant adaptation is then required.
These methods typically optimize for reconstruction accuracy, that is, how closely the model’s output matches the measured signal. 
Reconstruction accuracy alone does not provide sufficient guidance, 
as multiple combinations of inputs might achieve a perfect fit for some mixtures.
Not all inputs are equally plausible, yet reconstruction does not distinguish between them.
Lastly, pointwise distances are often used in constructing a cost, which can vary erratically with respect to non-pointwise basis variation (see \cref{shift_convexity_fig}).

Our work introduces a different paradigm for unmixing based on geometric approaches (see \cref{sec:geom_methods}). 
One can imagine a set of seed inputs which transform into a latent point cloud after passing through the variation function $ϕ$, 
and into the final measurements after passing through the combination function $κ$.
Geometric approaches typically draw conclusions from geometric features of the point cloud of measurements, 
which can have a particular structure as a result of the mixing model.
We take this further by considering all latent point clouds that exactly produce the given measurements,
and weigh these by considering the kinds of point cloud that the variation function $ϕ$ can feasibly produce.
We show that for any $ϕ$ that is Lipchitz with respect to some distance $d_\mathcal{P}$, one can expect clustering in the metric space $(d_\mathcal{P}, \mathcal{P})$, 
where a smaller bound on maximum variation parameters (e.g. shifting/broadening) produces tighter clusters (see \cref{separability_sec}).
As such, we propose a (in-class) Fréchet variance based cost to search for a latent point cloud that correspond to the least basis variation possible.

To handle common non-pointwise basis variations like shifting and broadening, a corresponding non-pointwise distance metric is required.
The Wasserstein distance is a natural choice when signals are treated as distributions.
Our practical implementation leverages several key features of \gls{ot} to ensure efficiency:
quantization of distributions into sparse, representative ``signatures'';
a fast algorithm (\gls{nwca}) to compute exact 1D Wasserstein distances and transport maps in linear time;
and displacement interpolation for the efficient calculation of Wasserstein centroids (averages).
These innovations make our method computationally feasible for exploratory data analysis. 
As demonstrated through extensive experiments on random \gls{gmm}, simulated \gls{pxrd}, and laboratory \gls{hi} data,
our approach consistently outperforms other general-purpose methods in a range of unmixing scenarios.

This work provides a strong foundation for a wide range of applications and opens several avenues for future research. 
These include clustering with a full Fisher discriminant by including between-class variance terms,
exploring application to higher-dimensional data (this requires fast nD transport map calculation),
bidirectional partial matching during anomaly detection,
and, finally, exploring other ground distances, including learned distances, or even non-\gls{ot} distances all together.

\section*{Declarations}
These results were obtained as part of a PhD funded by Johnson Matthey.  The authors acknowledge support from the Leverhulme Trust via the Leverhulme Research Centre for Functional Materials Design (RC-2015-036),  UK Engineering and Physical Sciences Research Council (EPSRC) through grant number EP/V026887 as well as through AI for Chemistry: AIchemy Hub (EPSRC grant EP/Y028775/1 and EP/Y028759/1).  V. V. Gusev is supported by the Leverhulme Research Fellowship.

\begin{appendices}

  \section{Clustering and separability of metric space perturbations} \label{separability_sec}

  \cref{separability} gives us that, so long as
  $ϕ(λ; b)$ is Lipchitz with respect to $d_\mathcal{P}$, 
  it is possible to guarantee well defined basis variant clusters for any basis $\up{B} \in \mathcal{P}^K$
  (according to the criteria of \cref{voronoi_sep}), up to some (basis dependent) bound on the parameter $λ$.

\begin{lemma}[Voronoi separable] \label{voronoi_sep}
  With a maximum radius among clusters $ \max_{ij} d_\mathcal{P}(b^i, \widetilde{b}^{ij}) < R $ and adequate basis pattern spacing $ \min_{i \neq j} d_\mathcal{P}(b^i, b^j) > 2 R $ (or maximum similarity), the corresponding basis of some varied pattern can be easily identified, from the Voronoi diagram of the bases
  \begin{displaymath}
    \widetilde{b}^{jk} \in X^i
    \quad \Rightarrow \quad j=i
  \end{displaymath}
  for Voronoi cell $X^i$ (see \cref{voronoi_region}), and in fact
  \begin{displaymath}
    d_\mathcal{P}(b^i, \widetilde{b}^{jk}) < R
    \quad \Rightarrow \quad j=i ,
  \end{displaymath}
  where each cluster is enclosed in a ball which is enclosed in its own Voronoi cell, $ b^{i*} \subset B_R (b^i) \subset X^i $.
  
\end{lemma}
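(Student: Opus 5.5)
The whole argument is a triangle-inequality computation in $(\mathcal{P}, d_\mathcal{P})$. I take the Voronoi cell to be $X^i = \{ p \in \mathcal{P} : d_\mathcal{P}(p, b^i) \leq d_\mathcal{P}(p, b^{i'}) \ \forall i' \}$ (or its strict-inequality version, which the argument below also handles). I write $\widetilde{b}^{jk}$ for the variant of basis $b^j$ appearing in mixture $k$, so the hypothesis reads $d_\mathcal{P}(b^j, \widetilde{b}^{jk}) < R$ for all $j,k$. The plan has three steps: first the ball inclusion $B_R(b^i) \subset X^i$, then the second implication, then the first implication.

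\textbf{Step 1: $B_R(b^i) \subset X^i$.} Take any $p$ with $d_\mathcal{P}(p, b^i) < R$ and any $i' \neq i$. The triangle inequality and the spacing assumption give
\begin{equation*}
d_\mathcal{P}(p, b^{i'}) \geq d_\mathcal{P}(b^i, b^{i'}) - d_\mathcal{P}(p, b^i) > 2R - R = R > d_\mathcal{P}(p, b^i).
\end{equation*}
So $p$ is strictly closer to $b^i$ than to every other basis pattern. This places $p$ in $X^i$, even in the strict-cell version. Combined with the radius hypothesis, every variant $\widetilde{b}^{ik}$ of $b^i$ lies in $B_R(b^i) \subset X^i$, which gives the enclosing chain $b^{i*} \subset B_R(b^i) \subset X^i$.

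\textbf{Step 2: the ball implication.} Suppose $d_\mathcal{P}(b^i, \widetilde{b}^{jk}) < R$ with $j \neq i$. By the radius hypothesis we also have $d_\mathcal{P}(b^j, \widetilde{b}^{jk}) < R$. Then
\begin{equation*}
d_\mathcal{P}(b^i, b^j) \leq d_\mathcal{P}(b^i, \widetilde{b}^{jk}) + d_\mathcal{P}(\widetilde{b}^{jk}, b^j) < 2R,
\end{equation*}
which contradicts the spacing assumption. Hence $j = i$.

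\textbf{Step 3: the Voronoi implication.} Suppose $\widetilde{b}^{jk} \in X^i$. By Step 1 applied to index $j$, $\widetilde{b}^{jk}$ lies in $B_R(b^j)$ and is strictly closer to $b^j$ than to any other basis pattern. If $j \neq i$, this strict inequality gives $d_\mathcal{P}(\widetilde{b}^{jk}, b^i) > d_\mathcal{P}(\widetilde{b}^{jk}, b^j)$, so $\widetilde{b}^{jk} \notin X^i$. This contradiction forces $j = i$.

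\textbf{Main obstacle.} The only delicate point is Step 3, specifically how Voronoi cells meet on their boundaries. With non-strict cells, a point could belong to two cells only if it were equidistant from two bases. Step 1 rules this out for every variant, because it gives a strict inequality whenever $d_\mathcal{P}(p, b^i) < R$. So the argument is the same whichever convention for $X^i$ is used. I would state the convention explicitly and note that the strict inequalities in both hypotheses are exactly what make this work.
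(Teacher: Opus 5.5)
Your proof is correct, and it argues differently from the paper. The paper chains the triangle inequality through four points, $d_\mathcal{P}(b^i,b^k) \le d_\mathcal{P}(b^i,\widetilde{b}^{ij}) + d_\mathcal{P}(\widetilde{b}^{ij},\widetilde{b}^{kl}) + d_\mathcal{P}(\widetilde{b}^{kl},b^k)$. From this it gets the inter-cluster gap $\min_{i\neq k,\,j,l} d_\mathcal{P}(\widetilde{b}^{ij},\widetilde{b}^{kl}) \ge \min_{i\neq j} d_\mathcal{P}(b^i,b^j) - 2R > 0$. It then identifies ``no two clusters touch'' with ``no variant spills into a neighbouring Voronoi cell''. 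You instead use a three-point inequality, comparing one point against two centres, to prove $B_R(b^i)\subset X^i$ directly, and read both implications off that containment. Your route is the one that actually delivers the statement as written, because disjointness of clusters by itself does not place each cluster inside its own Voronoi cell. The paper leaves that step implicit, and your Step 1 supplies it. Your remark on strict versus non-strict cells also settles a boundary issue the paper never mentions. What the paper's version buys is a quantitative, centre-free margin $\min_{i\neq j} d_\mathcal{P}(b^i,b^j)-2R$ between variants of different bases. That margin is the sense of ``clean, separable clusters'' used in the main text, and it follows in one line from your setup as well.
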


\begin{proof}
  We require that no variations spill into a neighbouring Voronoi cell, i.e. no two clusters touch/share a common point
  \begin{equation}
    \min_{\substack{ \forall i \neq k \\ \forall j,l }}
    d_\mathcal{P}(\widetilde{b}^{ij}, \widetilde{b}^{kl}) > 0 .
  \end{equation}
  From the triangle inequality 
  \begin{cgather}
    d_\mathcal{P}(b^i, b^k) \leq
    d_\mathcal{P}(b^{i}, \widetilde{b}^{ij}) + d_\mathcal{P}(\widetilde{b}^{ij}, \widetilde{b}^{kl}) + d_\mathcal{P}(\widetilde{b}^{kl}, b^k) \\
    \Rightarrow \quad d_\mathcal{P}(\widetilde{b}^{ij}, \widetilde{b}^{kl}) \geq d_\mathcal{P}(b^i, b^k) - 2 R , \\
  \end{cgather}holds 
  for all $i,j,k,l$, hence holds also for the choice
  \begin{equation*}
    \min_{\substack{ \forall i \neq k \\ \forall j,l }} d_\mathcal{P}(\widetilde{b}^{ij}, \widetilde{b}^{kl}) \geq \min_{i \neq j} d_\mathcal{P}(b^i, b^j) - 2 R > 0 . \qedhere
  \end{equation*} %
\end{proof}

\begin{corollary}
  Clearly if  $ d_\mathcal{P} ( ϕ (λ; b^i), b^i) \leq R \; \; \forall b^i \in \up{B}$ then $ \max_{ij} d_\mathcal{P}(b^i, \widetilde{b}^{ij}) < R $.
\end{corollary}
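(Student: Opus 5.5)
The plan is to unfold the definitions of the mixing model and then read the bound off pointwise. By construction in \cref{prob_def_sec}, every varied basis pattern has the form $\widetilde{b}^{ij} = ϕ(\bm{λ}^{ij}; b^i)$ for some admissible parameter $\bm{λ}^{ij} \in Λ$. The hypothesis $d_\mathcal{P}(ϕ(λ; b^i), b^i) \leq R$ is understood to hold for every admissible $λ$ and every $b^i \in \up{B}$. Applying it with $λ = \bm{λ}^{ij}$ gives $d_\mathcal{P}(b^i, \widetilde{b}^{ij}) \leq R$ for each pair $(i,j) \in [K] \times [M]$, using symmetry of $d_\mathcal{P}$.

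Since the index set $[K]\times[M]$ is finite, the maximum over $i,j$ is attained. It therefore inherits the same bound: $\max_{ij} d_\mathcal{P}(b^i, \widetilde{b}^{ij}) \leq R$.

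The only real obstacle is the strict inequality demanded in \cref{voronoi_sep}, because a non-strict pointwise bound cannot by itself yield a strict one. I would handle it in one of two ways.

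\emph{Strict hypothesis.} Read the hypothesis as strict, $d_\mathcal{P}(ϕ(λ;b^i),b^i) < R$. The argument above then gives $<R$ directly, since a maximum of finitely many quantities each below $R$ is below $R$.

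\emph{Enlarged radius.} Keep the non-strict hypothesis and enlarge the radius slightly. The spacing condition of \cref{voronoi_sep} is itself strict, $\min_{i\neq j} d_\mathcal{P}(b^i,b^j) > 2R$, so there is some $ε>0$ with $\min_{i\neq j} d_\mathcal{P}(b^i,b^j) > 2(R+ε)$. Setting $R' = R+ε$, the finite maximum satisfies $\max_{ij} d_\mathcal{P}(b^i,\widetilde{b}^{ij}) \leq R < R'$. Both hypotheses of \cref{voronoi_sep} then hold with $R'$ in place of $R$, and its conclusion follows unchanged.

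Finally, I would remark how this connects to the Lipschitz setting announced at the start of \cref{separability_sec}. Suppose $ϕ(\cdot\,; b)$ is $L$-Lipschitz into $(\mathcal{P}, d_\mathcal{P})$ with $ϕ(λ_0; b) = b$ at an identity parameter $λ_0$. Then a parameter bound $\Vert λ - λ_0\Vert \leq R/L$ supplies exactly the hypothesis $d_\mathcal{P}(ϕ(λ;b^i),b^i)\leq R$ of the corollary.
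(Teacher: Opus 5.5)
Your argument is the one the paper compresses into ``Clearly''. Each $\widetilde{b}^{ij}$ equals $\phi(\lambda^{ij}; b^i)$ for an admissible parameter, so the pointwise bound passes to the maximum over the finite index set $[K]\times[M]$, giving $\max_{ij} d_\mathcal{P}(b^i,\widetilde{b}^{ij}) \leq R$.

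You are right that the strict inequality does not follow. As printed, the corollary is false in the equality case: take $\mathcal{P}=\mathbb{R}$, $\phi(\lambda;b)=b+\lambda$ with $\vert\lambda\vert\leq R$, and a variant with $\lambda^{ij}=R$. So neither of your patches proves the literal statement. Each proves a correct version, and flagging this rather than papering over it was the right call.

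The enlarged-radius patch is the one that matches how the corollary is used in \cref{separability}. There $R=\max_i L_{b^i} d_{\Lambda\max}$ gives only a non-strict bound, while the strict condition on $d_{\Lambda\max}$ supplies your slack $\varepsilon$. You could even drop $\varepsilon$. The triangle-inequality argument in the proof of \cref{voronoi_sep} needs only the non-strict radius bound together with the strict spacing condition $\min_{i\neq k} d_\mathcal{P}(b^i,b^k) > 2R$.
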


\begin{theorem} \label{separability}
  Let $ϕ$ be
  Lipschitz continuous s.t. $\exists L_b$ with $ d_\mathcal{P} ( ϕ (λ; b), ϕ (λ'; b)) \leq L_b d_Λ (λ, λ') $, and let $ ϕ (λ_0; b) = b $ at some $λ_0$.
  Voronoi separability can be guaranteed through a parameter limit $ d_Λ (λ, λ_0) \leq d_{Λ \max} $, so long as $ d_{Λ \max} < \min_{i \neq j} d_\mathcal{P} (b^i, b^j) / 2 \max_i L_{b^i} $.
\end{theorem}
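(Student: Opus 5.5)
The plan is to reduce the statement directly to \cref{voronoi_sep} via its corollary. Voronoi separability only needs a uniform radius $R$ with $\max_{ij} d_\mathcal{P}(b^i, \widetilde{b}^{ij}) < R$ and $\min_{i\neq j} d_\mathcal{P}(b^i,b^j) > 2R$. So the whole task is to produce such an $R$ from the Lipschitz hypothesis and the parameter limit.

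\textbf{Step 1: bound each variant's distance to its base pattern.} Each varied pattern has the form $\widetilde{b}^{ij} = ϕ(λ^{ij}; b^i)$ with $d_Λ(λ^{ij}, λ_0) \leq d_{Λ\max}$. Using $ϕ(λ_0; b^i) = b^i$ and the Lipschitz condition with $λ' = λ_0$, I get
\begin{equation*}
d_\mathcal{P}(b^i, \widetilde{b}^{ij}) = d_\mathcal{P}\bigl(ϕ(λ_0; b^i), ϕ(λ^{ij}; b^i)\bigr) \leq L_{b^i}\, d_Λ(λ^{ij}, λ_0) \leq L_{b^i}\, d_{Λ\max} \leq L\, d_{Λ\max},
\end{equation*}
where $L = \max_i L_{b^i}$. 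This is finite because the basis is a finite tuple. The bound is uniform over all $i$ and $j$.

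\textbf{Step 2: choose $R$ and invoke the lemma.} Set $D = \min_{i\neq j} d_\mathcal{P}(b^i,b^j)$. The hypothesis $d_{Λ\max} < D/(2L)$ gives $L\,d_{Λ\max} < D/2$. I pick any $R$ with $L\, d_{Λ\max} < R < D/2$, for instance their midpoint. Then Step 1 gives $\max_{ij} d_\mathcal{P}(b^i,\widetilde{b}^{ij}) \leq L d_{Λ\max} < R$, and the choice of $R$ gives $D > 2R$. Both hypotheses of \cref{voronoi_sep} now hold (equivalently, the corollary applies with this $R$), so the clusters are Voronoi separable.

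\textbf{Main obstacle.} The only delicate point is strictness. The lemma demands strict inequalities, whereas a Lipschitz bound only yields $\leq$. Taking $R$ strictly between the two quantities resolves this, which is why the statement requires strict $d_{Λ\max} < D/(2L)$. Two degenerate cases need a remark. If $L = 0$, each $ϕ(\cdot; b^i)$ is constant, every variant equals $b^i$, and any $R \in (0, D/2)$ works. I also assume the $b^i$ are distinct, so that $D > 0$; otherwise the bound is vacuous.
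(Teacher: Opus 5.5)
Your proposal is correct and follows the paper's proof. The paper likewise applies the Lipschitz bound with $\lambda' = \lambda_0$, passes to $\max_i L_{b^i}$, and feeds the resulting radius into \cref{voronoi_sep}; the only difference, which is in your favour, is that the paper sets $R = \max_i L_{b^i}\, d_{\Lambda\max}$ exactly and relies on its corollary's claim that ``$\leq R$'' gives ``$< R$'', whereas your choice of $R$ strictly between $L\, d_{\Lambda\max}$ and $D/2$ makes both strict inequalities of the lemma hold.
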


\begin{proof}
  We have $ d_\mathcal{P} ( ϕ (λ; b^i), b^i) \leq L_{b^i} d_Λ (λ, λ_0) \leq \max_i L_{b^i} d_Λ (λ, λ_0) \leq \max_i L_{b^i} d_{Λ \max} $,
  giving $R = \max_i L_{b^i} d_{Λ \max}$.
\end{proof}

\begin{remark}
  With a compact pattern space, $ \min_{i \neq j} d_\mathcal{P}(b^i, b^j) $ cannot be greater than some $ d_{\mathcal{P} \max} $, so it is necessary that $ d_\mathcal{P} ( ϕ (λ; b), b) \leq R < d_{\mathcal{P} \max}/2 $.
\begin{figure}[htbp]
  \centering
  \includegraphics[width=.6\linewidth]{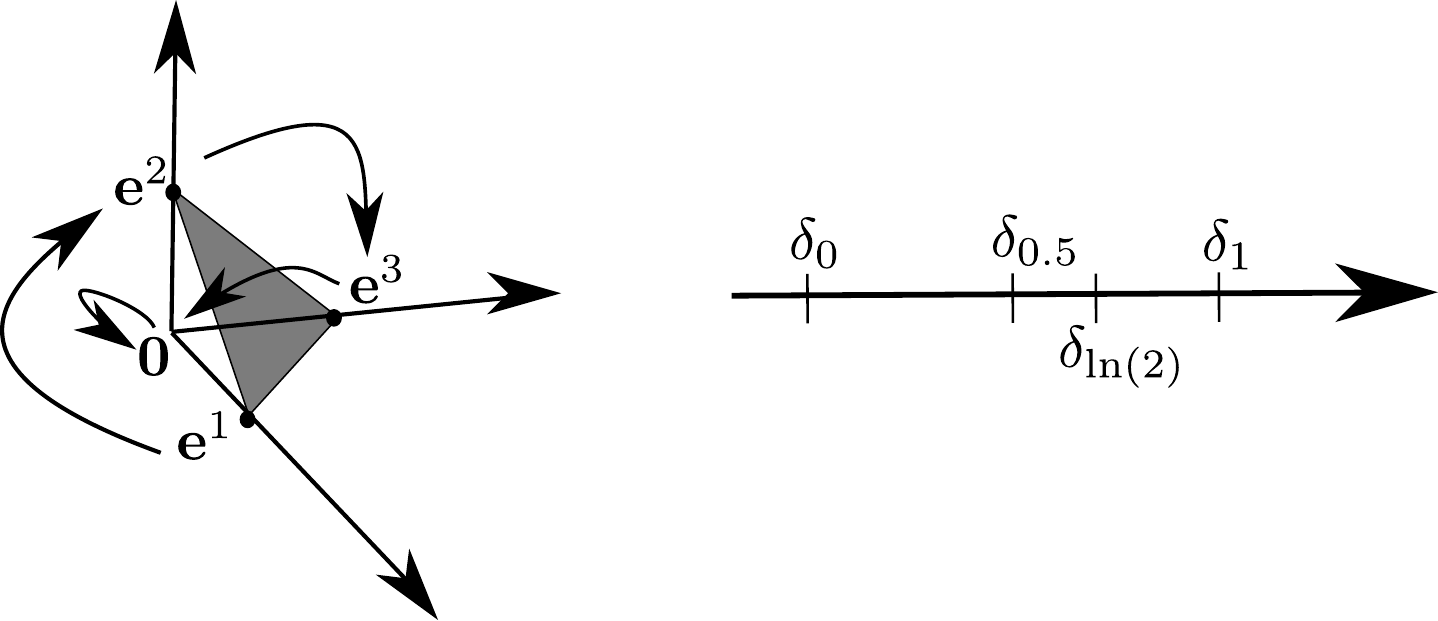}
  \caption{Left: Simplex $\up{S}^2$, where weights on a static grid such as $\mathbf{x}=(0, 0.5, 1)$ can be represented. The repeated application of the shift-down operation (initially on $\mathbf{e}^1$) is illustrated. Right: The Dirac measures $δ_x \in P(\mathbb{R})$ representing distributions with weights concentrated at a single point $x$, which we can continuously interpolate between.} \label{pointwise_nonsmoothness_fig}
\end{figure}
  For instance, with Variations generated by discrete shifting where $ϕ(s; \mathbf{p}) = \mathbf{p}^{\downarrow s} \in \text{Hull} ( \{ \mathbf{e}^i: i \in [K]  \} \cup \{ \mathbf{0} \} )$ (accounting for mass loss from shifting past the end of a vector) with $s \in \{0, 1, 2, \ldots\}$ and $ \mathbf{p} \in S^{N-1} $, we can have $ d_\mathcal{P} ( \mathbf{p}^{\downarrow 1}, \mathbf{p}) = d_{\mathcal{P} \max} \nless d_{\mathcal{P} \max} /2$ (see \cref{pointwise_nonsmoothness_fig}) when $d_\mathcal{P}$ is the Euclidean distance, so basis variation as a result of shifting, is not Voronoi separable over $\mathcal{P} = \text{Hull} ( \{ \mathbf{e}^i: i \in [K]  \} \cup \{ \mathbf{0} \} )$ with the Euclidean distance.
\end{remark}

\section{Wasserstein quantisation and weighted k-means} \label{W2quant}

Minimising
\begin{cgather} \label{eq:quant_cost}
\bar{μ}_N = \argmin_{m \in P(\mathcal{X})} W_2^{d_\mathcal{X}} (μ, m)^2 \\
 \vert \text{supp} (m) \vert \leq N,
\end{cgather}is generally
a semidiscrete \gls{ot} problem\footnote{Where $μ$ could be discrete as well, but only $m$ is constrained to be.}
for the discrete measure $ m = \sum_{i=1}^N \sigw{m}{i} δ_{\sigp[!]{m}{i}} $,
where transport maps between $μ$ and $m$ are necessarily of the form
$T(x) = \sigp[!]{m}{i}$ for $x \in X^i$
where $X^i$ are disjoint so that they partition $\mathcal{X}$.
The weights are determined from mass conservation $ m(\{ \sigp[!]{m}{i} \}) = \sigw{m}{i} = μ( T^{-1} ( \{ \sigp[!]{m}{i} \} ) ) = μ (X^i) $, hence \cref{eq:quant_cost} can be stated as
\begin{equation}
\min_{\substack{\sigp[!]{m}{i} \in \mathcal{X}\\ X^i \subset \mathcal{X} }}
\sum_{i=1}^N \int_{X^i} d_\mathcal{X} \left( x, \sigp[!]{m}{i} \right)^2 d μ (x) .
\end{equation}
This can be solved in an alternating fashion,
where given regions $ \{ X^i \: \forall i \} $,
the corresponding optimal $\sigp[!]{m}{i}$ are centroids\footnote{Where $\sigp[!]{m}{i} = {\int_{X^i} x d μ (x)}/{\int_{X^i} d μ (x)}$ when $d_\mathcal{X}$ is the Euclidean distance} of each region
\begin{equation}
\sigp[!]{m}{i} = \argmin_{x' \in \mathcal{X}} \int_{X^i} d_\mathcal{X} \left( x, x' \right)^2 d μ (x)
\end{equation}
and given $ \text{supp} (m) = \{ \sigp[!]{m}{i} \: \forall i \} $, the optimal regions are then given by the Voronoi tessellation
\begin{equation} \label{voronoi_region}
X^i= \{ x \in \mathcal{X} : d_\mathcal{X} (x, \sigp[!]{m}{i}) < d_\mathcal{X} (x, \sigp[!]{m}{j}) \: \forall j \neq i \} .
\end{equation}
To see this, suppose $X^i$ violated this condition, then for some $j$, $\hat{X} = \{ x \in X^i : d_\mathcal{X} (x, \sigp[!]{m}{i}) > d_\mathcal{X} (x, \sigp[!]{m}{j}) \}$ is non-empty, then one can reassign each $x \in \hat{X}$ according to $\bar{X}^i = X^i \setminus \hat{X}$ and $\bar{X}^j = X^j \cup \hat{X}$ and by doing so improve the cost
\begin{calign}
& \int_{\bar{X}^i} d_\mathcal{X} \left( x, \sigp[!]{m}{i} \right)^2 d μ (x) +
\int_{\hat{X}} d_\mathcal{X} \left( x, \sigp[!]{m}{i} \right)^2 d μ (x) +
\int_{X^j} d_\mathcal{X} \left( x, \sigp[!]{m}{j} \right)^2 d μ (x) \\
> &
\int_{\bar{X}^i} d_\mathcal{X} \left( x, \sigp[!]{m}{i} \right)^2 d μ (x) +
\int_{\hat{X}} d_\mathcal{X} \left( x, \sigp[!]{m}{j} \right)^2 d μ (x) +
\int_{X^j} d_\mathcal{X} \left( x, \sigp[!]{m}{j} \right)^2 d μ (x) ,
\end{calign}hence violating \cref{voronoi_region} is suboptimal.
Regardless of the strict optimality of this step, the sequence of updates produced by alternating these steps are non-increasing and the cost is bound below by $0$, hence this approach converges to a local minimiser of \cref{eq:quant_cost}.

This alternating approach is a generalised, weighted version of the k-means algorithm which can be seen as a generalisation of Lloyds algorithm \cite{Lloyd82}, for which $\mathcal{X}=\mathbb{R}$, $X^i = \{ x : x_\text{mid}^{i-1} < x < x_\text{mid}^{i} \}$ and $ x_\text{mid}^{i} = ( \sigp[!]{m}{i} + \sigp[!]{m}{i+1})/2 $ with $x_\text{mid}^0 = -\infty$ and $x_\text{mid}^N = \infty$.

\subsection{Wasserstein quantiser monotonicity and convergence} \label{W2quantmono}

\begin{theorem}[Monotonicity] \label{ot_quantiser_mono}
Let $μ \in P (\mathcal{X})$, its optimal $W_2$ quantisers of size $N$ ($\bar{μ}_N$) have monotonically decreasing error
\begin{equation}
  W_2^2 (μ, \bar{μ}_{N}) > W_2^2 (μ, \bar{μ}_{N+1}) .
\end{equation}
This follows from being able to construct a suboptimal quantiser of size $N+1$ that is better than the optimal one of size $N$.
\end{theorem}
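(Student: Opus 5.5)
We take the optimal size-$N$ quantiser $\bar{μ}_N$ and build from it a size-$N+1$ quantiser with strictly smaller cost. Optimality of $\bar{μ}_{N+1}$ then gives $W_2^2(μ,\bar{μ}_{N+1}) \le W_2^2(μ,\text{constructed}) < W_2^2(μ,\bar{μ}_N)$. We use the semidiscrete reformulation of \cref{W2quant}: the cost is $\sum_i \int_{X^i} d_\mathcal{X}(x,\sigp[!]{m}{i})^2 dμ(x)$ over Voronoi cells $X^i$ with centroid positions.

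\textbf{Step 1 (a cell with spread mass).} Assume $μ$ is not itself supported on at most $N$ points; otherwise $W_2^2(μ,\bar{μ}_N)=0$ and strict inequality fails. This non-degeneracy hypothesis must be added, or the statement weakened to $\geq$. Under it, $W_2^2(μ,\bar{μ}_N)>0$, so some cell $X^i$ has $\int_{X^i} d_\mathcal{X}(x,\sigp[!]{m}{i})^2dμ>0$. Hence there is a point $x^\ast \in \text{supp}(μ)\cap X^i$ with $d_\mathcal{X}(x^\ast,\sigp[!]{m}{i}) = r > 0$.

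\textbf{Step 2 (split the cell).} Add a new Dirac at $x^\ast$ and keep the other $N$ positions. Reassign to the new atom the set $\hat X = \{x\in X^i : d_\mathcal{X}(x,x^\ast) < d_\mathcal{X}(x,\sigp[!]{m}{i})\}$, and set the weights by mass conservation as in \cref{W2quant}. By the same exchange inequality used there to justify the Voronoi condition, each $x\in\hat X$ contributes strictly less cost. So the new cost is at most the old one, with strict decrease exactly when $μ(\hat X)>0$.

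\textbf{Step 3 (positivity of $μ(\hat X)$), the main obstacle.} The set $\hat X$ contains the open ball $B_{r/2}(x^\ast)$, by the triangle inequality. Since $x^\ast\in\text{supp}(μ)$, this ball has positive $μ$-measure. The subtle point is that the ball must lie inside $X^i$, or at least have positive mass in $X^i$. If $x^\ast$ is interior to its Voronoi cell this is immediate. Otherwise, note that $\int_{X^i} d^2 dμ>0$ forces $μ$-positive mass at positive distance from $\sigp[!]{m}{i}$ inside $X^i$. We then pick $x^\ast$ as a Lebesgue-type density point of $μ|_{X^i}$ with $d(x^\ast,\sigp[!]{m}{i})>0$; equivalently, we replace $μ$ by $μ|_{X^i}$ in the support argument. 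In the discrete case this is trivial, since $x^\ast$ is an atom of $μ$ in $X^i$.

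With the strict decrease established, optimality of $\bar{μ}_{N+1}$ concludes the proof. The same construction also gives the convergence used in \cref{proxy_triangle_ineq} for $μ$ with finite second moment, by approximating with finitely supported measures.
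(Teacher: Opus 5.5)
Your proof is correct and follows the same route as the paper's: add one atom inside a Voronoi cell of $\bar{\mu}_N$, reassign to it the points of that cell that are now closer to it, and conclude by optimality of $\bar{\mu}_{N+1}$. Your extra care is needed and not cosmetic. The strict inequality fails when $\mu$ is supported on at most $N$ points, and it also tacitly requires finite second moment. Moreover, the paper's arbitrary $x' \in X^N \setminus \{x^N\}$ in the fixed cell $X^N$ does not guarantee $\mu(B)>0$, whereas your choice of $x^\ast \in \text{supp}(\mu|_{X^i})$ in a cell of positive cost does.
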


\begin{proof}
Let $μ \in P (\mathcal{X})$ and its optimal quantiser of size $N$ be $\bar{μ}_N$.
Pick $ x' \in X^N \setminus \{ x^N \}$ and let $A,B$ be the corresponding Voronoi sets of $x^n, x'$ partitioning $X^N$, we may construct
\begin{equation}
  \widetilde{μ}_{N+1} = \bar{μ}_N - μ(X^N) δ_{x^N} + μ(A) δ_{x^N} + μ(B) δ_{x'} ,
\end{equation}
where we have let $ \sigp[!]{\bar{μ}_N}{i}={x}^i$ for brevity.
We have that
\begin{calign}
  \int_{X^N} d_\mathcal{X} \left( x, x^N \right)^2 d μ (x) &= \int_{A} d_\mathcal{X} \left( x, x^N \right)^2 d μ (x) + \int_{B} d_\mathcal{X} \left( x, x^N \right)^2 d μ (x) \\
                                                 &> \int_{A} d_\mathcal{X} \left( x, x^N \right)^2 d μ (x) + \int_{B} d_\mathcal{X} \left( x, x' \right)^2 d μ (x) ,
\end{calign}since 
by definition $B = \{ x \in X^N : d(x, x') < d(x, x^N) \}$.
Hence $ W_2^2 (μ, \bar{μ}_{N}) - W_2^2 (μ, \widetilde{μ}_{N+1} ) >0 $ and so we have monotonicity
\begin{equation*}
  W_2^2 (μ, \bar{μ}_{N}) > W_2^2 (μ, \widetilde{μ}_{N+1} ) \geq  W_2^2 (μ, \bar{μ}_{N+1}) .  \qedhere
\end{equation*}
\end{proof} %

\begin{theorem}[Convergence]
Let $μ \in P (\mathcal{X})$, its optimal $W_2$ quantisers of size $N$ ($\bar{μ}_N$) weakly converge to it.
\begin{equation}
  W_2^2 (μ, \bar{μ}_{N}) \rightarrow 0
  \quad \text{as} \quad
  N \rightarrow \infty
\end{equation}
\end{theorem}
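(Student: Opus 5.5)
Since $\bar{μ}_N$ is optimal among measures with at most $N$ atoms, it suffices to exhibit, for every $ε>0$, some $N_ε$ and a discrete measure $m_ε$ with $\vert \text{supp}(m_ε)\vert \le N_ε$ and $W_2^2(μ, m_ε) < ε$. Then $W_2^2(μ,\bar{μ}_{N}) \le W_2^2(μ,\bar{μ}_{N_ε}) \le W_2^2(μ,m_ε) < ε$ for all $N \ge N_ε$. The first inequality is the monotonicity of \cref{ot_quantiser_mono}; if only non-strict monotonicity is wanted, it also follows directly, since a size-$N_ε$ quantiser is admissible at size $N$. Throughout I will assume, as is implicit in the statement, that $μ$ has finite second moment, i.e. $μ \in P_2(\mathcal{X})$, and that $\mathcal{X}$ is separable. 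Otherwise $W_2(μ,\cdot)$ may be infinite against every discrete measure.

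The construction has two steps. \textbf{Truncation:} fix a base point $x_0$. By finite second moment, choose $r$ with $\int_{\mathcal{X}\setminus B_r(x_0)} d_\mathcal{X}(x,x_0)^2\, dμ(x) < ε/2$. \textbf{Partition:} when $\mathcal{X}=\mathbb{R}^d$ (the case of interest, $\mathcal{X}=\mathbb{R}$), the closed ball $\bar B_r(x_0)$ is compact. Cover it by finitely many Borel sets $X^1,\dots,X^{n}$ of diameter at most $δ$, with $δ^2 < ε/2$, and pick points $x^i \in X^i$. Let $X^0$ be the complement of the ball. Define the map $T(x)=x^i$ on $X^i$ for $i \ge 1$ and $T(x)=x_0$ on $X^0$, and set $m_ε = T_\# μ$. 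This measure has at most $n+1$ atoms. The plan $(\text{id},T)_\# μ$ lies in $\mathcal{U}(μ,m_ε)$, so
\begin{equation*}
W_2^2(μ,m_ε) \le \sum_{i\ge1}\int_{X^i} d_\mathcal{X}(x,x^i)^2 dμ + \int_{X^0} d_\mathcal{X}(x,x_0)^2 dμ \le δ^2 + ε/2 < ε .
\end{equation*}
The same argument mirrors the Voronoi-cell map of \cref{W2quant}, with a crude partition in place of the optimal one.

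The main obstacle is a general $\mathcal{X}$ where bounded sets need not be totally bounded, so the compactness step fails. In that case I would replace it with tightness: a single Borel probability measure on a Polish space is tight (Ulam's theorem). So choose a compact $K$ with $\int_{\mathcal{X}\setminus K} d_\mathcal{X}(x,x_0)^2 dμ < ε/2$. This is possible by combining tightness with uniform integrability of $d(x,x_0)^2$: first truncate to a ball, then intersect with a compact set of nearly full mass, and send the leftover mass to $x_0$. The leftover contributes at most $r^2 μ(\text{ball}\setminus K)$, which can be made small. Finally, cover $K$ by finitely many $δ$-balls and proceed as above. Weak convergence is then immediate, because convergence in $W_2$ implies weak convergence.
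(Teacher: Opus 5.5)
Your argument is correct, but it is organised differently from the paper's proof.

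\textbf{The paper's route.} It builds one nested sequence of suboptimal quantisers $\tilde{\mu}_N$ by repeatedly splitting the largest-diameter Voronoi cell, reusing the splitting step of the monotonicity theorem. It then bounds $W_2^2(\mu,\tilde{\mu}_N)\le\max_i \mathrm{diam}(X^i)^2$ and asserts that this maximum ``clearly'' tends to $0$.

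\textbf{Your route.} You keep the same core estimate: a cell costs at most its mass times its squared diameter. But you decouple approximation from optimality. For each $\varepsilon$ you write down a single crude quantiser, built from an arbitrary fine partition plus one atom absorbing the tail, and you use only that the admissible class grows with $N$. This buys two things.
\begin{enumerate}
\item It covers unbounded support. The paper's bound needs every cell to have small diameter, which forces $\mu$ to have bounded support. Your truncation uses exactly the finite second moment. You are right to flag this hypothesis as necessary: otherwise $W_2(\mu,m)=\infty$ for every finitely supported $m$. The paper leaves it implicit.
\item It avoids the unproved shrinking-diameter claim, which is not automatic for the paper's splitting rule. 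If a cell is a disc centred at $x^N$, the part closer to $x^N$ contains a full diameter whatever $x'$ is, so that split does not reduce the diameter.
\end{enumerate}
Relying only on non-strict monotonicity is also the safer choice, since the strict version fails, e.g.\ for $\mu=\delta_x$. What the paper's route offers in return is a concrete nested sequence tied to the Lloyd/$k$-means picture.

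\textbf{A small simplification of your general case.} You do not need completeness or Ulam's theorem, only the separability you assume at the outset. Countably many balls of radius $\delta/2$ cover $\mathcal{X}$, and finitely many of them carry all but an arbitrarily small mass. The leftover set, sent to $x_0$, contributes little because $d_\mathcal{X}(\cdot,x_0)^2\in L^1(\mu)$, so its integral is absolutely continuous with respect to $\mu$.
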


\begin{proof}
  Consider a sequence of suboptimal quantisers $\tilde{μ}_{N}$ that converge to $μ$, then optimal quantisers must also converge.
  Construct such a sequence by noting that at each $N$, one can pick the largest diameter Voronoi region, where $\text{diam}(X) = \max_{x,y \in X} d_\mathcal{X} \left( x, y \right)$, over which one has a 1 Dirac distribution, and by monotonicity we can generate a 2 Dirac quantisation over this region with a better cost.
  Clearly the diameter of all regions shrinks to $0$ in the limit.
\begin{calign}
  W_2^2 (μ, \bar{μ}_{N}) < W_2^2 (μ, \tilde{μ}_{N}) &= \sum_{i=1}^N \int_{X^i_{\tilde{μ}_N}} d_\mathcal{X} \left( x, \sigp[!]{\tilde{μ}_N}{i} \right)^2 d μ (x) \\
                     &\leq \sum_{i=1}^N μ(X^i_{\tilde{μ}_N}) \text{diam}(X^i_{\tilde{μ}_N})^2 \\
                     &\leq \left[  \sum_{i=1}^N μ(X^i_{\tilde{μ}_N}) \right] \max_i \text{diam}(X^i_{\tilde{μ}_N})^2  \rightarrow 0
\end{calign}where
$ \sum_{i=1}^N μ(X^i_{\tilde{μ}_N}) = 1 $.
\end{proof}

\FloatBarrier
\section{Weighted centroids parameterise the geodesic} \label{average_along_geodesics}

\begin{lemma} \label{lem:average_along_geodesics}
    Provided that $ \forall p^1, p^2 \in \mathcal{P}$ and $ \forall d_{1b}, d_{2b} \geq 0 $
    with $ d_{1b} + d_{2b} \geq d(p^1, p^2) = d_{12} $ $ \exists p^b \in \mathcal{P} $
    s.t. $ d(p^1,p^b)=d_{1b} $ and $ d(p^2,p^b)=d_{2b} $,
    then the centroids $ p^b (λ) = B_{d} (p^1, p^2, λ)$ lie on the geodesic between $p^1$ and $p^2$. Further, the curve given by $p^b (λ)$ is constant speed and is therefore the geodesic between $p^1$ and $p^2$.
\end{lemma}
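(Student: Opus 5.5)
The plan is to reduce the centroid problem to a one-dimensional minimisation over the pair of distances $(d_{1b}, d_{2b})$, using the hypothesis to realise any admissible pair by an actual point. Fix $p^1, p^2$ with $d_{12} = d(p^1,p^2)$ and $λ \in [0,1]$. For any candidate $β \in \mathcal{P}$, write $s = d(p^1,β)$ and $t = d(p^2,β)$. The triangle inequality forces $s + t \geq d_{12}$. Conversely, by hypothesis every pair $(s,t)$ with $s,t \geq 0$ and $s+t \geq d_{12}$ is attained by some point. Hence
\begin{equation*}
\min_{β \in \mathcal{P}} (1-λ) d(β,p^1)^2 + λ d(β,p^2)^2 = \min_{s,t \geq 0,\; s+t \geq d_{12}} (1-λ)s^2 + λ t^2 ,
\end{equation*}
and a point $β$ is a minimiser if and only if its distance pair minimises the right-hand side.

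Next I solve the two-variable problem. The objective is increasing in each of $s$ and $t$, so the constraint must be active: $s + t = d_{12}$. Substituting $t = d_{12} - s$ and minimising the convex quadratic in $s$ gives $s = λ d_{12}$ and $t = (1-λ) d_{12}$. The edge cases $λ \in \{0,1\}$ are immediate, since then the minimiser is $p^1$ or $p^2$ itself. Consequently every centroid $p^b(λ)$ satisfies
\begin{equation*}
d(p^1,p^b(λ)) + d(p^b(λ),p^2) = d(p^1,p^2),
\end{equation*}
that is, it lies "between" $p^1$ and $p^2$ in the metric sense. Its position is fixed by the proportions $λ d_{12}$ and $(1-λ) d_{12}$, and existence of such a point is guaranteed by the hypothesis.

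Finally I show that the curve $λ \mapsto p^b(λ)$ is a constant-speed geodesic, i.e.
\begin{equation*}
d\bigl(p^b(λ), p^b(λ')\bigr) = |λ - λ'|\, d_{12}.
\end{equation*}
The lower bound is easy for $λ < λ'$: from the triangle inequality through $p^1$, $d(p^b(λ),p^b(λ')) \geq λ' d_{12} - λ d_{12}$.

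The upper bound is the main obstacle. Betweenness alone does not pin down the mutual distance of two midpoints; in a non-uniquely-geodesic space, distinct centroids could lie on different geodesics. My first approach is to use the hypothesis again to \emph{choose} the centroids consistently. For $λ < λ'$, apply the hypothesis to the pair $(p^b(λ), p^2)$ to obtain a point $q$ at distances $(λ'-λ) d_{12}$ and $(1-λ') d_{12}$ from them. Then $d(p^1,q) \leq λ' d_{12}$ by the triangle inequality, and also $d(p^1,q) \geq d_{12} - (1-λ')d_{12} = λ' d_{12}$. So $q$ has the optimal distance pair and is itself a centroid for $λ'$. Selecting $p^b(λ') := q$ gives the upper bound, but only for two parameters at a time. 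To make the selection work for all parameters simultaneously, I would build the centroids on dyadic $λ$ by repeated midpoint selection, check the isometry identity there, and extend to all $λ$ by completeness/continuity. Alternatively, if the intended reading allows, I would note that the statement concerns "a" geodesic and is exact whenever centroids are unique, as in $W_2$ over $\mathbb{R}$.

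With the isometry identity in hand, the curve has length $d_{12}$ and unit-proportional speed, so it is the geodesic between $p^1$ and $p^2$, which completes the plan.
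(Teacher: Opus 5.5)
Your first half is the paper's argument. The paper reduces to minimising $(1-\lambda)d_{1b}^2+\lambda d_{2b}^2$ subject to $d_{1b}+d_{2b}\ge d_{12}$, uses KKT stationarity to show the constraint is tight, and obtains $d_{1b}=\lambda d_{12}$. Your monotonicity-plus-quadratic argument reaches the same conclusion more elementarily. You only ever invoke the hypothesis for pairs whose sum equals the relevant distance (the optimal pair, and the pair used to build $q$). That is fortunate: read literally (all $s,t\ge 0$ with $s+t\ge d_{12}$), the hypothesis is unsatisfiable, e.g.\ take $p^1=p^2$ and $d_{1b}\neq d_{2b}$. Your argument therefore goes through under the sensible version of the hypothesis.

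In the second half you have correctly located a real weakness. The paper disposes of constant speed with the remark that $p^b(\lambda)$ ``at worst reparameterises the geodesic'', which silently assumes all centroids lie on one geodesic. Your repair, however, does not close under the stated hypotheses, for two reasons.

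\emph{Completeness.} The extension from dyadic $\lambda$ uses completeness, which is not assumed. Without it even existence can fail: a transfinite construction inside $L^1[0,1]$ yields an incomplete space realising every split but containing no nonconstant geodesic.

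\emph{Selection.} Even granting completeness, you only show that \emph{some} selection of centroids is a geodesic, not that the curve $p^b(\lambda)$ is. In $(\mathbb{R}^2,\Vert\cdot\Vert_1)$ with $p^1=(0,0)$ and $p^2=(1,1)$, the centroid set is $\{z\in[0,1]^2 : z_1+z_2=2\lambda\}$. A selection running along the bottom and right sides of the square for rational $\lambda$, and along the left and top sides for irrational $\lambda$, is not even continuous.

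So what is missing is a hypothesis, not a trick. The cleanest fix is uniqueness of centroids, which holds in the paper's application ($W_2$ over $\mathbb{R}$, uniquely geodesic via quantile functions). Then your point $q$ is forced to equal $p^b(\lambda')$, so $d(p^b(\lambda),p^b(\lambda'))=(\lambda'-\lambda)d_{12}$ for all $\lambda<\lambda'$ at once, with no dyadic or limiting step. If you instead assume completeness, your dyadic construction correctly proves the weaker statement that a constant-speed geodesic consisting of centroids exists.
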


\begin{proof}
    $B(p^1, p^2, λ)$ is attained when the barycentric cost takes the value
    \begin{cgather}
          \min_{ d_{1b}, d_{2b} \in \mathbb{R}_+ } (1-λ) d_{1b}^2 + λ d_{2b}^2 \\
          \text{s.t.} \quad
          d_{1b} + d_{2b} \geq d_{12}
    \end{cgather}with Lagrangian
    \begin{equation}
        \mathcal{L} (d_{1b}, d_{2b}, γ) = 
        λ d_{1b}^2 + (1-λ) d_{2b}^2 + γ (d_{12} - d_{1b} - d_{2b} ) .
    \end{equation}
    It is necessary for optimality that we have stationarity (Karush-Kuhn-Tucker conditions), $ \partial_{d_{1b}} \mathcal{L} = \partial_{d_{2b}} \mathcal{L} = 0$, hence $ γ = 2 (1-λ) d_{1b} = 2 λ d_{2b} $.
    Outside the trivial cases $p^1=p^2$ or $λ\in \{ 0,1 \}$, it follows that $γ > 0$. 
    Since $ γ \neq 0 $,
    the bound is tight, i.e. $ d_{1b} + d_{2b} = d_{12} $ indicating that $\forall λ \in (0,1)$, $ p^b (λ) $ sits on the geodesic connecting $p^1, p^2$. 
    Further, $d_{2b} = d_{1b} (1-λ)/λ $, so that $ d_{12} = \left[ 1 + (1-λ)/λ \right] d_{1b} = d_{1b}/λ $.
    Since we know $p^b (λ)$ at worst reparameterises the geodesic, 
    $d(p^1, p^b(λ)) = λ d(p^1, p^2)$ is enough to know it is constant speed.
\end{proof}

\FloatBarrier
\section{General McCann interpolation} \label{general_mccan_interp}

The following is related to the Monge \gls{ot} minimisation
\begin{equation} \label{eq:dynamic_monge}
  \min_{T^t } \left\{ \int C[ (T^t (x))_{0 \leq t \leq 1} ] d μ (x) :
  \quad T^0 = \text{Id} \quad \text{and} \quad T^1_\# μ = ν \right\} ,
\end{equation}
that is to say, rather than evaluating the final displacement of $x$ to $y$,
evaluate the whole trajectory taken between these. 
This is one dynamic formulation of the \gls{ot} problem.
At time $t$, the mass at $x$ is instead at $T^t (x)$ and only at $t=1$ does it arrive at $y$.
These two problems are equivalent (with $T^{*1} = T^{*}$) provided that the static cost agrees with the cost of the best trajectory, i.e. that
\begin{equation}
  c(x,y) = \min_{(T^t (x) )_{0 \leq t \leq 1}} \left\{ C[(T^t (x))_{0 \leq t \leq 1}]
  : T^0 (x) = x \quad \text{and} \quad T^1 (x) = y \right\}
\end{equation}
is met \cite{Villani2003TopicsIO}.

\begin{definition}[Length of a trajectory] \label{def:length}
   
  For the trajectory $τ: [0,1] \rightarrow \mathcal{X}$, the length is defined as
  \begin{equation}
    L_{d_\mathcal{X}} [τ] = \sup \sum_{i=1}^N d_{\mathcal{X}} (τ(t_{i-1}), τ(t_i)) ,
  \end{equation}
  with the supremum taken over all $N$ and sequences $0 = t_0 \leq \ldots \leq t_N = 1$.
  
\end{definition}

\begin{definition}[Length space]
  A metric space $(\mathcal{X}, d_\mathcal{X})$ with $d_\mathcal{X} (x,y) = \inf_{τ} L[τ]$ over all $τ: [0,1] \rightarrow \mathcal{X}$ with $τ(0)=x$ and $τ(1)=y$.
\end{definition}

\begin{definition}[Geodesic]
  A geodesic from $ x \in \mathcal{X}$ to $ y \in \mathcal{X}$, is a trajectory $τ: [0,1] \rightarrow \mathcal{X}$ with $τ(0)=x$, $τ(1)=y$ which has (1.) minimal length and (2.) constant speed.
  (1.) can be ensured by
  $L[(τ)_{t^1 \leq t \leq t^2}] = d_\mathcal{X} (τ(t^1), τ(t^2))$
  for all $0 \leq t^1 < t^2 \leq 1$,
  and (2.) by $ L[(τ)_{t^1 \leq t \leq t^2}] = (t^2 - t^1) λ $ for some $λ$,
  again for all $0 \leq t^1 < t^2 \leq 1$.
  By \cref{def:length}, 
  (1.) is given by (2.) when $λ = d_\mathcal{X} (x, y)$.
\end{definition}

\begin{definition}[Geodesic space]
  A metric space $(\mathcal{X}, d_\mathcal{X})$ in which $\forall x,y \in \mathcal{X}$, there exists a geodesic connecting $x$ and $y$.
  Geodesic spaces are also length spaces.
\end{definition}

Suppose $c = d_\mathcal{X}^p$ where the ground space $(\mathcal{X}, d_\mathcal{X})$ is a geodesic space,
if we let $C [τ] = L_{d_\mathcal{X}} [τ]^p$
and without loss of correspondence to the Monge problem, 
let \cref{eq:dynamic_monge} also require that all considered paths are constant speed, 
then clearly the optimal trajectories $ (T^{*t} (x))_{0 \leq t \leq 1} $ are ground space geodesics between $x$ and $T^{*} (x)$.
By \cref{lem:average_along_geodesics}, this optimal trajectory can then be expressed in terms of ground space barycenters $ T^{*t} (x) = B_{d_\mathcal{X}} (x, T^* (x), t) $, where $T^*$ is the optimal map in the static problem.

\begin{lemma} \label{lem:dyn_map_geodesics}
  Let the \gls{ot} problem between $μ$ and $ν$ admit a Monge solution and
  let $T^{*t}$ be the (constant speed) dynamic \gls{ot} solution.
  Let $T^{12}$ be the (static) optimal transport map between $T^{*t^1}_\# μ$ and $T^{*t^2}_\# μ$.
  Provided that $T^{12} (T^{*t^1} (x)) = T^{*t^2} (x)$ for all $0 \leq t^1 < t^2 \leq 1$, 
  then $( T^{*t}_\# μ )_{0 \leq t \leq 1}$ is a Wasserstein space geodesic between $μ$ and $ν$.
\end{lemma}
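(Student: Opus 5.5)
The plan is to show that the curve $β(t) = T^{*t}_\# μ$ has the two defining properties of a geodesic in $(P(\mathcal{X}), W_2^{d_\mathcal{X}})$, namely minimal length and constant speed. By the remark after the definition of a geodesic, it suffices to prove the single identity
\begin{equation*}
  W_2^{d_\mathcal{X}}\bigl(β(t^1), β(t^2)\bigr) = (t^2 - t^1)\, W_2^{d_\mathcal{X}}(μ, ν) \quad \text{for all } 0 \leq t^1 < t^2 \leq 1 .
\end{equation*}
Summing over any partition then gives $L[(β)_{t^1 \leq t \leq t^2}] = (t^2-t^1) W_2(μ,ν) = W_2(β(t^1),β(t^2))$, so both (1.) and (2.) hold, and the endpoints are $β(0)=μ$ (since $T^{*0}=\mathrm{Id}$) and $β(1)=ν$.

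\textbf{Upper bound.} Use the candidate map $S = T^{*t^2} \circ (T^{*t^1})^{-1}$, understood as the pushforward coupling $(T^{*t^1}, T^{*t^2})_\# μ \in \mathcal{U}(β(t^1), β(t^2))$. This is legitimate without invertibility because its marginals are $β(t^1)$ and $β(t^2)$ by construction. Its cost is
\begin{equation*}
  \int d_\mathcal{X}\bigl(T^{*t^1}(x), T^{*t^2}(x)\bigr)^2 dμ(x).
\end{equation*}
Since $t \mapsto T^{*t}(x) = B_{d_\mathcal{X}}(x, T^*(x), t)$ is a constant-speed ground-space geodesic (by \cref{lem:average_along_geodesics} and the discussion preceding the lemma), the integrand equals $(t^2-t^1)^2 d_\mathcal{X}(x,T^*(x))^2$. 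Integrating and using that $T^*$ is the optimal Monge map gives $W_2^2(β(t^1),β(t^2)) \leq (t^2-t^1)^2 W_2^2(μ,ν)$.

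\textbf{Lower bound.} First I will try to use the hypothesis $T^{12}\circ T^{*t^1} = T^{*t^2}$ directly. It says the optimal map between the two intermediate measures is exactly the one induced along trajectories, so the cost computed above is \emph{equal} to $W_2^2(β(t^1),β(t^2))$, not merely an upper bound. This gives the identity immediately, and it is essentially how the hypothesis is meant to be used. As a robustness check, a hypothesis-free route also works: apply the triangle inequality for $W_2$,
\begin{equation*}
  W_2(μ,ν) \leq W_2(μ,β(t^1)) + W_2(β(t^1),β(t^2)) + W_2(β(t^2),ν),
\end{equation*}
bound the outer two terms by $t^1 W_2$ and $(1-t^2)W_2$ using the upper bound already shown, and conclude $W_2(β(t^1),β(t^2)) \geq (t^2-t^1)W_2(μ,ν)$.

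The main obstacle is the measure-theoretic justification that the composed coupling is admissible and that its cost equals the integral along $μ$. This amounts to the change of variables $\int f\, d(T_\#μ) = \int f\circ T\, dμ$, used exactly as in the Monge--Kantorovich equivalence computation earlier in the paper. A second, minor point is that the geodesic-length bookkeeping needs $d_\mathcal{X}(T^{*t^1}(x),T^{*t^2}(x)) = (t^2-t^1)d_\mathcal{X}(x,T^*(x))$, which follows from property (1.) and (2.) of ground-space geodesics.
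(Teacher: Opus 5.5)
Your proposal is correct, and your first lower-bound argument is exactly the paper's proof. The paper uses the hypothesis $T^{12}\circ T^{*t^1}=T^{*t^2}$ and a change of variables to write $W_p(T^{*t^1}_\#\mu,T^{*t^2}_\#\mu)^p=\int d_\mathcal{X}(T^{*t^1}(x),T^{*t^2}(x))^p\,d\mu(x)$. It then uses the constant-speed property of the ground geodesics to turn the integrand into $(t^2-t^1)^p\, d_\mathcal{X}(x,T^*(x))^p$.

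Your second route is genuinely different and more general. It bounds $W$ above with the coupling $(T^{*t^1},T^{*t^2})_\#\mu$ and below with the triangle inequality against $\beta(0)=\mu$ and $\beta(1)=\nu$. This never uses the hypothesis, so it shows that the displacement interpolation is a constant-speed Wasserstein geodesic whenever an optimal Monge map exists in a geodesic ground space. The argument carries over verbatim to any $W_p$ with $p\geq 1$, which matters because the paper's proof is written for $W_p$. As a by-product, your route proves that the induced coupling $(T^{*t^1},T^{*t^2})_\#\mu$ is optimal. Hence the hypothesis becomes a consequence, holding $\mu$-a.e., whenever the optimal plan between the intermediate measures is unique, as in the 1D setting where the paper applies the lemma.

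The paper's route is a one-line computation, but it rests on an assumption that is not easy to verify directly; yours costs one extra inequality and removes that assumption. You also make explicit the telescoping step that turns the distance identity into properties (1.) and (2.) of the geodesic definition, which the paper leaves implicit.
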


\begin{proof}
  Note that the shortest path between $T^{*t^1} (x)$ to $ T^{*t^2} (x)$ is given by the geodesic segments $(T^{*t} (x))_{t^1 \leq t \leq t^2}$, then
\begin{align*}
  W_p^{d_\mathcal{X}} (T^{*t^1}_\# μ, T^{*t^2}_\# μ )^p 
  &= \int d_\mathcal{X} (x, T^{12} (x))^p d (T^{*t^1}_\# μ) (x) \\
  &= \int d_\mathcal{X} (T^{*t^1} (x), T^{*t^2} (x) )^p d μ (x) \\
  &= \int L_{d_\mathcal{X}} [ (T^{*t} (x))_{t^1 \leq t \leq t^2} ]^p d μ (x) \\                                                      
  &= \int [(t^2-t^1) d_\mathcal{X} (x, T^{*} (x))]^p d μ (x) = (t^2-t^1)^p W_p^{d_\mathcal{X}} (μ, ν)^p . \qedhere
\end{align*}
\end{proof}

Where \cref{lem:dyn_map_geodesics} applies, such as in 1D, where feasible monotonic plans are unique,
\cref{lem:average_along_geodesics} can be employed again, allowing one to calculate Wasserstein barycenters via $B (μ, ν, λ) = T^{*λ}_\# μ$, this is the displacement interpolation.
This is usually stated for an Euclidean ground space where $T^{*λ} = (1-λ) \text{Id} + λ T^*$.

\end{appendices}

\FloatBarrier
\bibliographystyle{plainnat}
\bibliography{references_CS,references_chem,references_HU,references_other,references_OT}

\end{document}